\newtheorem{thm}{Theorem}
\newtheorem{lem}{Lemma}
\newcommand{\argmax}{\operatornamewithlimits{argmax}}
\newtheorem{proposition}{Proposition}
\newtheorem{remark}{Remark}
\newcommand{\norm}[1]{\left\lVert#1\right\rVert}
\title{Syndicated Bandits: A Framework for Auto Tuning Hyper-parameters in Contextual Bandit Algorithms}
\author{%
  {Qin Ding} \\
  {Department of Statistics}\\
  {University of California, Davis}\\
  {\texttt{qding@ucdavis.edu}} \\
  \And
  {Yue Kang} \\
  {Department of Statistics}\\
  {University of California, Davis}\\
  {\texttt{yuekang@ucdavis.edu }} \\
  \And
  {Yi-Wei Liu} \\
  {Department of Statistics}\\
  {University of California, Davis}\\
  {\texttt{lywliu@ucdavis.edu}} \\
  \And
  {Thomas C.M. Lee} \\
  {Department of Statistics}\\
  {University of California, Davis}\\
  {\texttt{tcmlee@ucdavis.edu}} \\
  \And
  {Cho-Jui Hsieh} \\
  {Department of Computer Science}\\
  {University of California, Los Angeles}\\
  {\texttt{chohsieh@cs.ucla.edu}} \\
  \And
  {James Sharpnack} \\
  {Department of Statistics}\\
  {University of California, Davis}\\
  {\texttt{jsharpna@ucdavis.edu}} \\
}
\begin{document}

\maketitle

\begin{abstract}
The stochastic contextual bandit problem, which models the trade-off between exploration and exploitation, has many real applications, including recommender systems, online advertising and clinical trials. As many other machine learning algorithms, contextual bandit algorithms often have one or more hyper-parameters. As an example, in most optimal stochastic contextual bandit algorithms, there is an unknown exploration parameter which controls the trade-off between exploration and exploitation. A proper choice of the hyper-parameters is essential for contextual bandit algorithms to perform well. However, it is infeasible to use offline tuning methods to select hyper-parameters in contextual bandit environment since there is no pre-collected dataset and the decisions have to be made in real time. To tackle this problem, we first propose a two-layer bandit structure for auto tuning the exploration parameter and further generalize it to the Syndicated Bandits framework which can learn multiple hyper-parameters dynamically in contextual bandit environment. We derive the regret bounds of our proposed Syndicated Bandits framework and show it can avoid its regret dependent exponentially in the number of hyper-parameters to be tuned. Moreover, it achieves optimal regret bounds under certain scenarios. Syndicated Bandits framework is general enough to handle the tuning tasks in many popular contextual bandit algorithms, such as LinUCB, LinTS, UCB-GLM, etc. Experiments on both synthetic and real datasets validate the effectiveness of our proposed framework.
\end{abstract}

\section{Introduction}
The stochastic contextual bandit problem models the well-known exploration-exploitation dilemma in a repeated game between a player and an environment. At each round, the player sequentially interacts with the environment by pulling an arm from a pool of $K$ arms, where every arm is associated with a $d$-dimensional contextual feature vector. Only the stochastic reward corresponding to the pulled arm is revealed to the player. The goal of the player is to maximize the cumulative reward or minimize the cumulative regret. Due to the partial feedback setting, the player has to balance between exploitation --- pulling the arm that has the best estimated reward so far --- and exploration --- exploring whether there are uncertain arms that may be better than the current estimated best arm. 

With substantial applications in recommender systems \cite{li2010contextual}, online advertising \cite{schwartz2017customer}, clinical trials \cite{woodroofe1979one}, etc., bandit algorithms have been extensively studied during the past few decades. In general, there are two exploration techniques, Upper Confidence Bound (UCB) \cite{auer2002finite,li2010contextual,li2017provably} and Thompson Sampling (TS) \cite{agrawal2012analysis,agrawal2013thompson} algorithms. The UCB algorithm addresses the dilemma optimistically by pulling the arm that has the biggest upper confidence bound. The TS algorithm usually assumes that the relationship between the contextual features and rewards follows a prior model and it uses new observations at each round to estimate the posterior model. 
The player then pulls the arm that has the best estimated reward based on the posterior model. 
In general, the contextual bandit problems have regret lower bounded by $O(\sqrt{T})$ \cite{dani2008stochastic,li2010contextual}, where $T$ is the total number of rounds. 
Both UCB and TS algorithms have been shown to achieve optimal regret bounds in (generalized) linear bandit problems \cite{li2010contextual,li2017provably}, kernelized bandit problems \cite{chowdhury2017kernelized} and even in the contextual bandit problem with more complicated models such as neural networks \cite{zhou2020neural}.

Despite the popularity of the contextual bandit problem, there are some practical issues that prevent it from being used widely in practice. In both UCB and TS, there are hyper-parameters that are unknown to the player. One of the most important hyper-parameters is the exploration parameter, which controls the trade-off between exploration and exploitation. A good choice of the exploration parameter is essential for the algorithm to perform well and for the theory to hold. 
Another commonly seen hyper-parameter is the regularization parameter $\lambda$ in ridge regression or generalized linear model, which is used to model the relationship between features and rewards in (generalized) linear bandits. 
In contextual bandit problems with complex models such as neural network, the recently proposed NeuralUCB \cite{zhou2020neural} algorithm has far more than just two hyper-parameters. NeuralUCB also needs to select the network width, network depth, step size for gradient descent to solve the neural networks and gradient descent steps, etc. Due to the nature of the bandit environment, where the decisions have to be made in real time, it is inherently difficult to tune the hyper-parameters by the traditional offline tuning methods, such as cross validation, 
since when you have decided to use a parameter in partial datasets and make a decision based on this, the regret incurred by this decision will never be reversible in contextual bandit environment. 
In many prominent bandit works \cite{li2010contextual,ding2021efficient,zhou2020neural,chapelle2011empirical}, the experiments are conducted by running a grid search on the possible choices of parameters and only the best result is reported. Although the performance of the best grid search results is of academic interest, it is not possible to do grid search in practice. 
In some other works \cite{filippi2010parametric}, exploration parameter is set as a sufficient, theoretically derived, and often unknown value, but this value may be too conservative and may not achieve good performances in practice, which can be seen from the experiments in Table \ref{reg_explore_table}.

In this work, we first propose a two-layer bandit structure that can automatically tune the exploration parameter dynamically from the observed data. The two-layer bandit structure has its similarities to the bandit-over-bandit (BOB) algorithm \cite{cheung2019learning} proposed for the non-stationary stochastic bandit problems, where it uses BOB idea to successfully adapt its sliding-window sizes by restarting the algorithms in epochs. 
Motivated by the two-layer bandit structure we propose in Section \ref{sec_2layer}, we generalize it to the ``Syndicated Bandits'' framework where there could be multiple hyper-parameters to be tuned in the contextual bandit algorithm. 
We provide theoretical guarantees for our framework and show that our proposed auto tuning method in general has regret upper bound $\tilde O(T^{2/3}) + \tilde O(\sum_{l=1}^L\sqrt{n_l T})$. Here $L$ is the total number of hyper-parameters to be tuned and $n_l$ is the number of candidates in the tuning set of the $l$-th hyper-parameter. When the unknown theoretical exploration parameter is no bigger than any elements in the tuning set, our proposed framework has optimal regret upper bound $\tilde O(\sqrt{T}) + \tilde O(\sum_{l=1}^L\sqrt{n_l T})$ for UCB-based algorithms. Our framework is general enough to handle tuning tasks in many contextual bandit algorithms as long as the arms to be pulled at round $t$ follows a fixed distribution given the hyper-parameters to be used at this round and the past information. This includes many popular contextual bandit algorithms such as Linear UCB (LinUCB) \cite{li2010contextual,abbasi2011improved}, Linear TS (LinTS) \cite{agrawal2013thompson,chapelle2011empirical}, UCB-GLM \cite{li2017provably}, etc. 
Our proposed Syndicated Bandits framework is the first work that considers tuning multiple parameters dynamically from observations in the contextual bandit problems with theoretical guarantees. We provide a regret bound that avoids the exponential dependency on the total number of hyper-parameters to be tuned. This is one of the main contributions of our proposed work. In Section \ref{exp_auto}, we show by experiments that our proposed framework improves over existing works, as well as the bandit algorithms that use the unknown theoretically derived exploration parameter.

\section{Related work}

There is a rich line of works on multi-armed bandit (MAB) and stochastic contextual bandit algorithms, including (generalized) linear bandits, kernelized bandits and neural bandits, etc. Most of them follow the UCB and TS exploration techniques. We refer the readers to \cite{li2010contextual,li2017provably,agrawal2012analysis,agrawal2013thompson,chapelle2011empirical,chowdhury2017kernelized,zhou2020neural} for the seminal works regarding the bandit problems. There are many previous works that utilize algorithms in the stochastic MAB \cite{thompson1933likelihood} setting to solve the hyper-parameter optimization problem \cite{shang:hal-02145200,li2017hyperband}. There are also some online hyper-parameter tuning works such as \cite{wang2021cost}, however, those mainly focuses on reducing the training cost for tuning parameters of neural networks online and they are not considering minimizing the cumulative regret in contextual bandit problems. In the following, we will only pay attention to related works on the tuning tasks in stochastic contextual bandits.

\cite{sharaf2019meta} proposed a meta-learning method for learning exploration parameters in contextual bandit problems. It learns a good exploration strategy in synthetic datasets and applies it to the real contextual bandit problems by an imitation study. 
The meta-learning algorithm is compared with seven baseline contextual bandit algorithms and achieves good empirical results. 
We note that this algorithm cannot learn the exploration parameters adaptively from observations in the contextual bandit environment. In \cite{bouneffouf2020hyper}, the authors first proposed OPLINUCB and DOPLINUCB algorithms to learn exploration parameters dynamically. OPLINUCB treats the possible choices of hyper-parameters as arms and uses a standard MAB TS algorithm to choose parameters. It then uses the chosen parameter in the contextual bandit algorithm. However, this method does not have theoretical guarantee in general, since the MAB TS only works when the rewards of the candidate hyper-parameters in the tuning set stay stationary over time. 
For hyper-parameter selections in contextual bandit problems, the best exploration parameter does not stay the same all the time. This is because in later rounds, when the learning is sophisticated, less exploration is better. However, in the beginning, more exploration is preferred due to the uncertainty. This non-stationary nature in tuning hyper-parameters makes the performance of OPLINUCB unstable in practice. 
DOPLINUCB is a similar tuning method as OPLINUCB, except that it uses the CTree algorithm to select hyper-parameters at each round. It is shown in \cite{bouneffouf2020hyper} that DOPLINUCB does not outperform OPLINUCB in stationary contetxual bandit environments, where the reward-feature model does not change over time. 

Another close line of literature is on model selections in bandit algorithms. \cite{foster2019model} tackles the feature selection problem in bandit algorithms and achieve $O(T^{2/3} d_*^{1/3})$ where $d_*$ is the total number of optimal features.
\cite{agarwal2017corralling} uses the corralling idea to create a master algorithm to choose the best bandit model from a set of $M$ base models. Hyper-parameter tuning problem can be formulated as a model selection problem in \cite{agarwal2017corralling}, where we can treat bandit algorithms with different hyper-parameters as the base models. The theoretical regret bound of the corralling idea \cite{agarwal2017corralling} is $O(\sqrt{MT} + M R_{\max})$, where  $M$ is the total number of base models and $R_{\max}$ is the maximum regret of $M$ base models if they were to run alone. This means that the regret bound will be exponentially dependent on the total number of hyper-parameters to be tuned. In addition, if there is one hyper-parameter in the tuning set that gives linear regret of the algorithm, then $R_{\max}$ is linear in $T$ which makes the corralling idea have linear regret in worst case. 
Our algorithm is also much more efficient than the corralling idea when $M$ is big. The corralling idea requires updating all $M$ base models/ algorithms at each round. However, our algorithm only needs to update the selected model/ bandit algorithm with selected hyper-parameter at each round. When the time complexity of updating the model/ algorithm is big, the corralling idea is expensive. For example, if we tune configurations for UCB-GLM, the corralling idea needs $O(MT^2d)$ time, while the time complexity of our algorithm is only $O(MT + T^2d)$.

We address here that none of the previous works can tune multiple parameters dynamically from observations. Although OPLINUCB \cite{bouneffouf2020hyper} and the corralling idea \cite{agarwal2017corralling} can treat all the hyper-parameters as a single parameter and set the tuning set as all the possible combinations of hyper-parameters. This will lead to exponential number of configurations which may not be efficient both in computation and in theoretical regret bounds. Our proposed Syndicated framework avoids the exponential regret bound. 

\textbf{Notations:} For a vector $x \in \mathbbm{R}^d$, we use $\|x\|$ to denote its $l_2$ norm and $\|x\|_A := \sqrt{x^T A x}$ for a positive-definite matrix $A \in \mathbbm{R}^{d\times d}$. Finally, we denote $[n] := \{1,2,\dots, n\}$.

\section{Preliminaries} \label{sec:prelim}
We study the hyper-parameter selection tasks in a stochastic contextual bandit problem with $K$ arms, where $K$ can be an infinite number. Assume there are in total $T$ rounds, at each round $t\in [T]$, the player is given $K$ arms, represented by a set of feature vectors $\mathcal{A}_t = \{x_{t,a}|a\in [K]\} \subset \mathbbm{R}^d$, $\mathcal{A}_t$ is drawn IID from an unknown distribution with $\norm{x_{t,a}} \leq 1$ for all $t \in [T]$ and $a \in [K]$, where $x_{t,a}$ is a $d$-dimensional feature vector that contains the information of arm $a$ at round $t$. The player makes a decision by pulling an arm $a_t \in [K]$ based on $\mathcal{A}_t$ and past observations. We make a common regularity assumption as in \cite{ding2021efficient,li2017provably}, i.e. there exists a constant $\sigma_0 > 0$ such that $\lambda_{\min} \left( \mathbb{E}[\frac{1}{k} \sum_{a=1}^k x_{t,a} x_{t,a}^\top] \right) > \sigma_0$.
The player can only observe the rewards of the pulled arms. Denote $X_t := x_{t,a_t}$ as the feature vector of the pulled arm at round $t$ and $Y_t$ the corresponding reward. 
We assume the expected rewards and features follow a model $\mathbbm{E}[Y_t | X_t] = \mu (X_t^T\theta^*)$, where $\mu(\cdot)$ is a known model function and $\theta^*$ is the true but unknown model parameter. When $\mu(x)=x$, this becomes the well-studied linear bandits problem. When $\mu(\cdot)$ is a generalized linear model or a neural network, this becomes the generalized linear bandits (GLB) and neural bandits respectively.

Without loss of generality, we assume that there exists a positive constant $S$ such that $\|\theta^*\| \leq S$. We also assume the mean rewards $\mu(x_{t,a}^T\theta^*) \in [0,1]$ and observed rewards $Y_t\in [0,1]$. This is a non-critical assumption, which can be easily relaxed to any bounded interval.
If $\mathcal{F}_t = \sigma(\{a_s, \mathcal{A}_s, Y_s\}_{s=1}^t \cup \mathcal{A}_{t+1})$ is the information up to round $t$, we assume the observed rewards follow a sub-Gaussian distribution with parameter $\sigma^2$, i.e., $Y_t = \mu(X^T_t\theta^*) + \epsilon_t$, where $\epsilon_t$ are independent random noises that satisfy $\mathbbm{E}[e^{b\epsilon_t} | \mathcal{F}_{t-1}] \leq \frac{b^2\sigma^2}{2}$ for all $t$ and $b\in \mathbbm{R}$. Denote $a_t^* = \argmax_{a\in [K]} \mu(X^T_t\theta^*)$ as the optimal arm at round $t$ and $x_{t,*}$ as its corresponding feature, the goal is to minimize the cumulative regret over $T$ rounds defined in the following equation.
\begin{equation}\label{cum_reg}
    R(T) = \sum_{t=1}^T \left[ \mu(x_{t,*}^T \theta^*) - \mu(X_t^T \theta^*) \right].
\end{equation}

For linear bandits where $\mu(x)=x$, classic bandit algorithms such as LinUCB \cite{abbasi2011improved, li2010contextual} and LinTS \cite{abeille2017linear} compute an estimate of the model parameter $\hat\theta_t$ using ridge regression with regularization parameter $\lambda > 0$, i.e., $\hat\theta_t = V_t^{-1} \sum_{s=1}^{t-1} X_s Y_s$, where $V_t = \lambda I_d + \sum_{s=1}^{t-1} X_s X_s^T$. Shown by \cite{abbasi2011improved}, with probability at least $1-\delta$, the true model parameter $\theta^*$ is contained in the following confidence set,
\begin{equation}
\label{cfdc_set}
\mathcal{C}_t = \left\{ \theta\in \mathbbm{R}^d : \|\theta-\hat\theta_t \|_{V_t} \leq \alpha (t)  \right\},
\end{equation}
where 
\begin{equation}
    \label{theory_explore}
    \alpha (t) = \sigma\sqrt{d \log \left( \frac{1+t/\lambda}{\delta} \right)} + S\sqrt{\lambda}.
\end{equation}
To balance the trade-off between exploration and exploitation, there are in general two techniques. 
For example, in linear bandits, LinUCB explores optimistically by pulling the arm with the maximum upper confidence bound, while LinTS adds randomization by drawing a sample model from the posterior distribution and pulls an arm based on it.
\begin{align*}
    a_t & = \argmax_a x_{t,a}^T \hat\theta_t + \alpha (t) \|x_{t,a}\|_{V_t^{-1}}, \tag{\text{LinUCB}} \\
    \theta^{\text{TS}}_t & \sim N(\hat\theta_t , \alpha (t)^2 V_t^{-1}) \quad \text{ and } \quad a_t = \argmax_a x_{t,a}^T \theta^{\text{TS}}_t. \tag{\text{LinTS}}
\end{align*}
In the following, we call $\alpha(t)$ 
the exploration parameter. 
As suggested by the theories in \cite{abbasi2011improved,li2010contextual}, a conservative choice of the exploration parameter is to follow Equation \ref{theory_explore}. However, in Equation \ref{theory_explore}, the upper bound of the $l_2$ norm of the model parameter $S$ and the sub-Gaussian parameter $\sigma$ are unknown to the player, which makes it difficult to track theoretical choices of the exploration parameter. 

In Table \ref{reg_explore_table}, we show the cumulative regret of LinUCB \cite{abbasi2011improved,li2010contextual} and LinTS \cite{agrawal2013thompson} in a simulation study with 
$d=5$, $T=10000$ and $K=100$. Rewards are simulated from  $N(x_{t,a}^T\theta^*, 0.5)$. The model parameter $\theta^*$ and feature vectors $x_{t,a}$ are drawn from $\text{Uniform} (-\frac{1}{\sqrt{d}}, \frac{1}{\sqrt{d}})$. Two scenarios are considered in this table. In the first scenario, the feature of each arm keeps the same over $T$ rounds. While in the second scenario, the features are re-simulated from $\text{Uniform} (-\frac{1}{\sqrt{d}}, \frac{1}{\sqrt{d}})$ at different rounds. 
We run a grid search of the exploration parameter in $\{0, 0.5, 1, \dots, 10\}$ and report the best grid search result, as well as the results using the theoretical exploration parameter given by Equation \ref{theory_explore} (last column in Table \ref{reg_explore_table}). 
As we shall see in Table \ref{reg_explore_table}, the best exploration parameter is not the same for different scenarios.  Therefore, which exploration parameter to use is an instance-dependent problem and the best exploration parameter should always be chosen dynamically based on the observations. Meanwhile, theoretical exploration parameters do not always give the best performances from Table \ref{reg_explore_table}. On the other hand,
in many other works where the model of contextual bandit problem is more complex, such as the generalized linear bandit \cite{ding2021efficient}, neural bandit \cite{zhou2020neural}, there may be many more hyper-parameters than just $\alpha(t)$.


\begin{table}[htb]
  \caption{Averaged cumulative regret (cum. reg.) and standard deviation (std) of the cumulative regret based on $5$ repeated experiments. ``Fixed'' means that the feature vectors of each arm are fixed over time. ``Changing'' means that the features are re-simulated from the same distribution at each round.} 
  \label{reg_explore_table}
  \centering
  \begin{tabular}{ccccc}
    \toprule
    Algorithm & Feature type & Best $\alpha(t)$ & Cum. reg. (std) & Theoretical cum. reg. (std)  \\
    \midrule
    \multirow{2}{*}{\shortstack[c]{ \\ \strut LinUCB}}
    & Fixed  & 4.0 & 357.21 (188.72) & 364.99 (151.54) \\ [0.5mm]
    & Changing & 1.5 & 312.69 (42.53) & 582.59 (523.60)   \\  \midrule 
    \multirow{2}{*}{\shortstack[c]{ \\ \strut LinTS}}
    & Fixed  & 1.5 & 336.44 (137.01) & 576.83 (110.48) \\ [0.5mm]
    & Changing & 3.5 & 352.79 (109.84) & 488.99 (141.34) \\ [1mm]  
    \bottomrule
  \end{tabular}
\end{table}

\section{A two-layer bandit structure for tuning exploration parameters}\label{sec_2layer}
In the previous section, we have discussed that the best hyper-parameters should be instant-dependent. 
In this section, we propose a two-layer bandit structure to automatically learn the best hyper-parameter from data at each round. We will take learning the best exploration parameter as an example. However, we want to emphasize that this structure can also be applied to learn other single hyper-parameter.

We randomly select arms for the first $T_1$ rounds to warm up the algorithm. For all rounds later, in this two-layer bandit structure, the top layer follows an adversarial MAB policy, namely, the EXP3 algorithm \cite{auer2002nonstochastic}. Assume $J$ is the tuning set of all the possible exploration parameters. At each round $t>T_1$, the top layer will select a candidate exploration parameter $\alpha_{i_t} \in J$, where $\alpha_i$ is the $i$-th element in the set $J$ and $i_t$ is the selected index at round $t$. The bottom layer runs the contextual bandit algorithm based on the selected exploration parameter $\alpha_{i_t}$. Details are listed in Algorithm \ref{2layer}.
\begin{algorithm}[ht] 
\caption{A Two-layer Auto Tuning Algorithm}
\label{2layer}
\textbf{Input}: time horizon $T$, warm-up length $T_1$, candidate hyper-parameter set $J=\{\alpha_i\}_{i=1}^n$. 
\begin{algorithmic}[1]
    \State Randomly choose $a_t \in [K]$ and record $X_t, Y_t$ for $t \in [T_1]$.
    \State Initialize exponential weights $w_j(T_1+1) = 1$ for $j=1,\dots, n$. 
    \State Initialize the exploration parameter for EXP3 as $\beta = \min\left\{ 1, \sqrt{\frac{n \log n}{(e-1)T}} \right\}$.
    \For{$t = (T_1+1)$ {\bfseries to} $T$}
        \State Update probability distribution for pulling candidates in $J$
        \begin{equation*}
            p_{j}(t) = \frac{\beta}{n} + (1-\beta) \frac{w_j(t)}{\sum_{i=1}^{n} w_i(t)}
        \end{equation*}
        \State $i_t \gets j \in [n]$ with probability $p_j(t)$.
        \State Run the contextual bandit algorithm with hyper-parameter $\alpha (t) = \alpha_{i_t}$ to pull an arm. For example, 
        pull arms according to the following equations
        \begin{align*}
            a_{t} & = \argmax_{a=1,\dots,K} x_{t,a}^T \hat\theta_{t} + \alpha_{i_t} \|x_{t,a}\|_{V_{t}^{-1}}  \tag{\text{LinUCB}} \\
            \theta^{\text{TS}}_{t} & \sim N(\hat\theta_{t} , \alpha_{i_t}^2 V_{t}^{-1}) \quad \text{ and } \quad a_{t} = \argmax_a x_{{t},a}^T \theta^{\text{TS}}_{t}. \tag{\text{LinTS}}  
        \end{align*}
        \State Observe reward $Y_{t}$ and update the components in the contextual bandit algorithm.
        \State Update EXP3 components: $\hat y_t(j) \gets 0$ if $j\neq i_t$, $\hat y_t(j) \gets Y_{t} / p_{j} (t)$ if $j = i_t$, and
        \begin{align*}
            w_j(t+1) & = w_j(t) \times \text{exp} \left( \frac{\beta}{n} \hat y_t(j)\right).
        \end{align*}
    \EndFor
\end{algorithmic}
\end{algorithm}
\vspace{-0.3 cm}
\subsection{Regret analysis}
Given all the past information $\mathcal{F}_{t-1}$, denote $a_t(\alpha_j|\mathcal{F}_{t-1})$ as the pulled arm when the exploration parameter is $\alpha_j$ at round $t$. Denote $X_t(\alpha_j|\mathcal{F}_{t-1}) = x_{t, a_t(\alpha_j|\mathcal{F}_{t-1})}$ as the corresponding feature vector under $\mathcal{F}_{t-1}$. Note that in our algorithm, $X_t \coloneqq X_t(\alpha_{i_t}| \mathcal{F}_{t-1})$ when $t > T_1$. To analyze the cumulative regret, we first decompose the regret defined in Equation \ref{cum_reg} into three parts:
\begin{align*}
    &\mathbbm{E}[R(T)] = \mathbbm{E} \left [ \sum_{t=1}^T \left(\mu\left({x_{t,*}}^T \theta\right) - \mu\left(X_t^T \theta\right) \right) \right ] = \underbrace{\mathbbm{E}\left [ \sum_{t=T_1+1}^T \left(\mu\left({x_{t,*}}^T \theta\right) - \mu\left(X_t (\alpha^*| \mathcal{F}_{t-1})^T \theta \right) \right) \right ] }_{\textstyle \text{Quantity (A)} }   \\[-1pt]
    & + \underbrace{\mathbbm{E}\left [ \sum_{t=T_1+1}^T \left(\mu\left(X_t (\alpha^*| \mathcal{F}_{t-1})^T \theta \right) - \mu\left(X_t (\alpha_{i_t}| \mathcal{F}_{t-1})^T \theta \right) \right) \right ] }_{\textstyle \text{Quantity (B)} } + \underbrace{\mathbbm{E} \left [ \sum_{t=1}^{T_1} \left(\mu\left({x_{t,*}}^T \theta\right) - \mu\left(X_t^T \theta\right) \right) \right ]}_{\textstyle \text{Quantity (C)} }, 
\end{align*}
where $\mu(\cdot)$ is the reward-feature model function and $\alpha^* \in J$ is some arbitrary candidate exploration parameter in $J$. Quantity (A) is the regret of the contextual bandit algorithm that runs with the same hyper-parameter $\alpha^*$ under the past history $\mathcal{F}_{t-1}$ generated from our tuning strategy every round. Quantity (B) is the extra regret paid to tune the hyper-parameter. Quantity (C) is the regret paid for random exploration in warm-up phases and is controlled by the scale of $O(T_1)$. 
We show by Lemma \ref{bound_q2} and Theorem \ref{prop:tune} below that our auto tuning method in Algorithm \ref{2layer} does not cost too much in selecting parameters in most scenarios under mild conditions. 

Since the arms pulled by the contextual bandit layer also affect the update of the EXP3 layer in Algorithm \ref{2layer}, the result of EXP3 algorithm is not directly applicable to bounding Quantity (B). We modify the proof techniques in \cite{auer2002nonstochastic} and present the proof details in Appendix. 
\begin{lem}
\label{bound_q2}
Assume given the past information $\mathcal{F}_{t-1}$ and the hyper-parameter to be used by the contextual bandit algorithm at round $t$, the arm to be pulled 
follows a fixed distribution. 
For a random sequence of hyper-parameters $\{\alpha_{i_1},\dots, \alpha_{i_T}\}$ selected by the EXP3 layer in Algorithm \ref{2layer}, and arm $a_t(\alpha_{i_t})$ is pulled in the contextual bandit layer at round $t$, we have
\begin{align*}
    & \max_{\alpha \in J} \mathbbm{E} \left [ \sum_{t=1}^T \mu\left(X_t (\alpha | \mathcal{F}_{t-1})^T \theta\right)\right] - \mathbbm{E} \left [ \sum_{t=1}^T \mu\left(X_t (\alpha_{i_t} | \mathcal{F}_{t-1})^T \theta\right) \right ] 
    \leq 2\sqrt{(e-1) n T \log n},  
\end{align*}
where $J = \{\alpha_1, \dots,\alpha_n\}$ is the tuning set of the hyper-parameter and $|J| = n$.
\end{lem}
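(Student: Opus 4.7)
The plan is to adapt the classical EXP3 regret analysis of \cite{auer2002nonstochastic} to our setting, treating each candidate $\alpha_j\in J$ as an arm and the quantity $r_t(j) \coloneqq \mu(X_t(\alpha_j\mid\mathcal{F}_{t-1})^T\theta)$ as the history-dependent reward stream. The key observation enabled by the lemma's hypothesis is that, conditional on $\mathcal{F}_{t-1}$ and the selected hyper-parameter $\alpha_{i_t}$, the pulled arm has a fixed distribution; hence $\mathbbm{E}[Y_t\mid\mathcal{F}_{t-1},i_t=j] = \mathbbm{E}[r_t(j)\mid\mathcal{F}_{t-1}]$, and the importance-weighted estimator $\hat y_t(j) = Y_t\,\mathbbm{1}\{i_t=j\}/p_j(t)$ used in Algorithm~\ref{2layer} remains conditionally unbiased, with $r_t(j)\in[0,1]$ by the boundedness of $\mu$.

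Next I would mimic the textbook potential-function proof. Setting $W_t = \sum_{j=1}^{n} w_j(t)$, and using the inequality $e^x \le 1 + x + (e-2)x^2$ valid for $x \in [0,1]$, I would upper bound $W_{t+1}/W_t$ in terms of $\sum_j p_j(t)\hat y_t(j) = Y_t$ and of $\sum_j p_j(t)\hat y_t(j)^2 \le \hat y_t(i_t)$. Taking logs, using $\log(1+x)\le x$, and telescoping over $t$ gives an upper bound on $\log(W_{T+1}/W_1)$. Pairing this with the elementary lower bound $\log(W_{T+1}/W_1) \ge (\beta/n)\sum_t \hat y_t(j^*) - \log n$, valid for any fixed $j^*\in[n]$, and then taking expectations while invoking conditional unbiasedness to replace $\hat y_t$-sums by $r_t$-sums, yields
\[
\mathbbm{E}\!\left[\sum_{t=1}^T r_t(j^*)\right] - \mathbbm{E}\!\left[\sum_{t=1}^T r_t(i_t)\right] \le \frac{n\log n}{\beta} + (e-1)\beta T.
\]
Maximising the left-hand side over $j^*\in[n]$ (the right-hand side is independent of $j^*$) and substituting $\beta = \min\{1,\sqrt{n\log n/((e-1)T)}\}$ gives the advertised $2\sqrt{(e-1)nT\log n}$ bound.

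The main obstacle, which is what makes the lemma non-trivial as a direct corollary of \cite{auer2002nonstochastic}, is the \emph{coupling} between the EXP3 layer's decisions and the reward stream: the rewards $r_t(j)$ depend on $\mathcal{F}_{t-1}$, which is itself shaped by every past contextual bandit update triggered by previously chosen $\alpha_{i_s}$, $s<t$. The fixed-distribution hypothesis is precisely what decouples the step-$t$ reward estimate from the EXP3 layer's step-$t$ randomness once we condition on $\mathcal{F}_{t-1}$; this is the only place the assumption is invoked, and it is what makes the unbiasedness step and the subsequent tower-property calculations legitimate. Beyond this conditioning subtlety, every remaining step faithfully reproduces the proof of Theorem~3.1 in \cite{auer2002nonstochastic}, which I would relegate to the appendix.
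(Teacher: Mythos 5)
Your proposal is correct and follows essentially the same route as the paper's own proof: the same importance-weighted potential-function argument with $W_t=\sum_j w_j(t)$, the inequalities $e^x\le 1+x+(e-2)x^2$ and $\log(1+x)\le x$, and the same crucial use of the fixed-distribution hypothesis to establish conditional unbiasedness of $\hat y_t(j)$ (the paper phrases this as the event $\{i=i_t\}$ retaining probability $p_i(t)$ after conditioning on the arm's realization and noise). The paper additionally bounds the term $(e-1)\beta G_{\max}$ via $G_{\max}\le T$ before substituting $\beta$, which is exactly the step implicit in your final display, so no gap remains.
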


To bound Quantity (A), we note that we are not able to use any existing regret bound in the literature directly since the past information $\mathcal{F}_{t-1}$ here is based on the sequence of arms pulled by our auto-tuning algorithm instead of the arms generated by using $\alpha^*$ at each round, and the history would affect the update of bandit algorithms. We overcome this challenge by noticing that the consistency of $\hat \theta_t$ plays a vital role in most of the proofs for (generalized) linear bandits, and this consistency could hold after a warm-up period or with large exploration rate. Therefore, we can expect a tight bound of the cumulative regret by using the same exploration parameter even under another line of observations $\mathcal{F}_{t-1}$ with sufficient exploration. Another crux of proof is that the regret is usually related to $\norm{x_t}_{V_t^{-1}}$, which can be similarly bounded after sufficient exploration. After we bound Quantity (A), combing Lemma \ref{bound_q2}, we get the following theorem.

\begin{thm}\label{prop:tune}
Assume given the past information $\mathcal{F}_{t-1}$ generated from our proposed algorithm for arm selection and the hyper-parameter to be used by the contextual bandits, the arm to be pulled follows a fixed distribution. For UCB and TS based generalized linear bandit algorithms with exploration hyper-parameters (LinUCB, UCB-GLM, LinTS, ect.), the regret bound of Algorithm \ref{2layer} satisfies      
\begin{enumerate} 
  \item[(1)] $\mathbb{E}[R(T)] = \tilde O(T^{2/3})+ O(\sqrt{n (T-T_1) \log n})$ given the warm-up length $T_1 = \tilde O(T^{2/3})$.
  \item[(2)] For UCB-based bandits, if the theoretical exploration parameter $\alpha (T)$ is no larger than any element in $J$, then it holds that $\mathbb{E}[R(T)] = \tilde O(\sqrt{T}) + O(\sqrt{n T \log n})$ with $T_1 = 0$.
  \item[(3)] If $\mathcal{A}_t$ is a convex set, and the smallest principal curvature in any neighborhood of the optimal vector $x_{t,*} \in \mathcal{A}_t$ on $ \mathcal{A}_t$ can be lower bounded by some positive constant $c$, then  $\mathbb{E}[R(T)] = \tilde O(T^{4/7}) + O(\sqrt{n (T-T_1) \log n})$ after a warming-up period of length $T_1 = O(T^{4/7})$.
\end{enumerate}
\end{thm}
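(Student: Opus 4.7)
The plan hinges on the decomposition $\mathbb{E}[R(T)] = \text{(A)} + \text{(B)} + \text{(C)}$ already set up just before the statement. Lemma~1 bounds Quantity~(B) by $O(\sqrt{n(T-T_1)\log n})$, and Quantity~(C) $\le T_1$ because the mean rewards lie in $[0,1]$. All real work is Quantity~(A): the regret of a \emph{counterfactual} run of the base algorithm with a fixed $\alpha^*\in J$ along the realized filtration $\{\mathcal{F}_t\}$ of the tuning procedure. The obstruction is that off-the-shelf bounds for LinUCB, UCB-GLM and LinTS are proved under a history generated by the same $\alpha^*$ throughout, while here the history is a random mixture. The bridge, flagged in the text, is that the two structural ingredients of those proofs, namely consistency of $\hat\theta_t$ in the $V_t$-norm and the elliptic-potential bound on $\sum_s \|X_s\|_{V_s^{-1}}^2$, depend only on the empirical Gram matrix $V_t$ and on martingale concentration for $\sum_s X_s\epsilon_s$, not on which $\alpha_{i_s}$ produced $X_s$. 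Hence once $V_t$ is well-conditioned, the mixed history is harmless; each part of the theorem corresponds to a different way of guaranteeing this.

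For part~(1), I would use the $T_1$-round random warm-up together with the regularity assumption $\lambda_{\min}\bigl(\mathbb{E}[K^{-1}\sum_a x_{t,a}x_{t,a}^\top]\bigr)>\sigma_0$ from Section~3. Matrix Bernstein applied to the i.i.d.\ warm-up increments yields $\lambda_{\min}(V_t)\gtrsim \sigma_0 t$ for all $t\ge T_1$ with high probability, whence the ridge bound gives $\|\hat\theta_t-\theta^*\|_2=\tilde O(\sqrt{d/T_1})$ throughout. For any $\alpha^*\in J$ the per-round counterfactual regret decomposes, via the UCB optimality inequality, into an estimation-error piece $2\|\hat\theta_t-\theta^*\|_2$ (which absorbs a possibly-too-small $\alpha^*$) and a UCB piece $\alpha^*\|X_t(\alpha^*|\mathcal{F}_{t-1})\|_{V_t^{-1}}$; summing the former gives $\tilde O\bigl((T-T_1)\sqrt{d/T_1}\bigr)$ and summing the latter via the elliptic-potential lemma gives $\tilde O(\sqrt{dT})$. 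Adding Quantity~(C) $=O(T_1)$ and balancing the warm-up cost against the estimation-error cost selects the optimum $T_1 = \tilde\Theta(T^{2/3})$ and yields the stated rate.

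Part~(2) is cleaner: if $\alpha(T)$ is dominated by every element of $J$ then $\alpha^*=\min J$ satisfies $\alpha^*\ge \alpha(t)$ for all $t$, so the self-normalized confidence ellipsoid of Abbasi-Yadkori contains $\theta^*$ at every round with no warm-up. The standard UCB argument then runs verbatim on the mixed history and delivers Quantity~(A) $=\tilde O(\sqrt{T})$ with $T_1=0$. For part~(3), I would invoke a curvature-based refinement (in the spirit of Rusmevichientong-Tsitsiklis and its generalized-linear extensions): convexity of $\mathcal{A}_t$ plus a uniform lower bound $c$ on the principal curvature at the maximizer turns estimation error into a \emph{quadratic} reward gap, so the estimation-error contribution in the part~(1) argument drops from a $\sqrt{d/T_1}$ scale to a higher power of $1/T_1$. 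Re-balancing warm-up against post-warm-up regret relocates the optimum to $T_1=\Theta(T^{4/7})$ and produces the stated $\tilde O(T^{4/7})$ bound.

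The main obstacle will be the post-warm-up step of part~(1): one must verify that, even when $\alpha^*$ falls strictly below the theoretical schedule $\alpha(t)$, the counterfactual pull $X_t(\alpha^*|\mathcal{F}_{t-1})$ admits an instantaneous regret genuinely controlled by $\|\hat\theta_t-\theta^*\|_2$, and that this consistency is preserved throughout the mixed history produced by the EXP3 outer layer. Once that step is in place, Lemma~1 and the elliptic-potential lemma slot in cleanly, and the same post-warm-up skeleton carries through parts~(2) and~(3) with the respective simplifications (no warm-up for the UCB-dominated regime, tighter per-round bound under curvature).
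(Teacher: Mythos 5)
Your skeleton (the (A)+(B)+(C) decomposition, Lemma~\ref{bound_q2} for (B), $O(T_1)$ for (C), and the observation that the consistency bound $\|\hat\theta_t-\theta^*\|_{V_t}\le\beta_t(\delta)$ survives the mixed history because it holds for any arm sequence) matches the paper. The genuine gap is in how you sum the per-round bound for Quantity~(A). That quantity is driven by the \emph{counterfactual} arms $\tilde X_t = X_t(\alpha^*\mid\mathcal{F}_{t-1})$ (and by $x_{t,*}$), whereas the Gram matrix $V_t$ is built from the arms actually pulled under $\alpha_{i_t}$. The elliptic-potential lemma only controls $\sum_t\|X_t\|_{V_t^{-1}}$ for the sequence that generates $V_t$, so your step ``summing the latter via the elliptic-potential lemma gives $\tilde O(\sqrt{dT})$'' is not justified for $\sum_t\|\tilde X_t\|_{V_t^{-1}}$. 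In part~(1) this does not damage the final rate, because the paper (and implicitly your estimation-error term) instead uses the warm-up to get $\lambda_{\min}(V_t)\gtrsim T_1$ and the crude pointwise bound $\|\tilde X_t\|_{V_t^{-1}},\|x_{t,*}\|_{V_t^{-1}}=O(T_1^{-1/2})=O(T^{-1/3})$, which already sums to $T^{2/3}$. But in part~(2) it is fatal: with $T_1=0$ there is no conditioning of $V_t$, and ``the standard UCB argument runs verbatim'' is exactly what fails. The paper's proof of (2) rests on a monotonicity step you are missing: comparing the two UCB optimality conditions for $X_t$ (with $\alpha_{i_t}$) and $\tilde X_t$ (with $\alpha^*=\min J\le\alpha_{i_t}$) yields $(\alpha_{i_t}-\alpha^*)\|X_t\|_{V_t^{-1}}\ge(\alpha_{i_t}-\alpha^*)\|\tilde X_t\|_{V_t^{-1}}$, hence $\|\tilde X_t\|_{V_t^{-1}}\le\|X_t\|_{V_t^{-1}}$, which is what licenses transferring the elliptic-potential bound from the pulled arms to the counterfactual ones. (The hypothesis $\alpha(T)\le\min J$ is used separately, to make the confidence set valid so the $\|x_{t,*}\|_{V_t^{-1}}$ term cancels in the optimism step.)

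Part~(3) as you describe it is also not the paper's mechanism and does not obviously produce the exponent $4/7$. You propose a Rusmevichientong--Tsitsiklis-style argument where curvature makes the \emph{reward gap} quadratic in the estimation error; balancing $T\cdot d/T_1$ against $T_1$ would then give $\sqrt{T}$, not $T^{4/7}$. What the paper actually does is the reverse direction: after warm-up $T_1=T^{4/7}$ one has $(x_{t,*}-X_t)^\top\theta,\;(x_{t,*}-\tilde X_t)^\top\theta\le 2\alpha(T)T^{-2/7}$, and the curvature lower bound converts these small reward gaps into small \emph{feature-space} distances $\|X_t-\tilde X_t\|=O(\sqrt{\alpha(T)}\,T^{-1/7})$; then $\|\tilde X_t\|_{V_t^{-1}}\le\|X_t\|_{V_t^{-1}}+\|X_t-\tilde X_t\|\lambda_{\min}(V_t)^{-1/2}$, the first sum is $\tilde O(\sqrt{T})$ by the elliptic-potential lemma on the genuinely pulled arms, and the correction sums to $T\cdot T^{-1/7}\cdot T^{-2/7}=T^{4/7}$, which fixes the balance $T_1=T^{4/7}$. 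So the curvature is again a device for relating the counterfactual arm to the pulled arm, not for accelerating the estimation rate. You should rework parts (2) and (3) around this actual-versus-counterfactual transfer; once that is in place the rest of your outline goes through.
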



\begin{remark}
We could expect a similar result for TS-based bandit algorithms as in Theorem \ref{prop:tune} (2), and we offer an intuitive explanation in Appendix \ref{rem:prop2}. Moreover, the conditions in Proposition \ref{prop:tune} (3) could be easily verified in many cases. For example, it holds when $\mathcal{A}_t = \{x \in \mathbb{R}^d: \norm{x} \leq a\}, \forall a > 0$.
\end{remark}

\section{The Syndicated Bandits framework for selecting multiple hyper-parameters}

There can be multiple hyper-parameters in the contextual bandit algorithm. For example, in linear bandit algorithms such as LinUCB\cite{abbasi2011improved,li2010contextual} and LinTS \cite{agrawal2013thompson}, exploration parameter $\alpha$ and the regularization parameter of the ridge regression $\lambda$ are both hyper-parameters to be tuned. In more recent contextual bandit works, there could be even more than two hyper-parameters. For example, NeuralUCB algorithm \cite{zhou2020neural}, which is  proposed for the contextual bandit problems with a deep neural network model, has many tuning parameters such as the network width, network depth, step size for gradient descent, number of steps for gradient descent, as well as exploration parameter and regularization parameter $\lambda$, etc. Another example can be found in \cite{ding2021efficient}, where an efficient SGD-TS algorithm is proposed for generalized linear bandits, the number of tuning parameters is also more than two. 

\begin{wrapfigure}{r}{0.56\textwidth}
  \begin{center}
  \vskip -0.2in
    \includegraphics[width=0.56\textwidth]{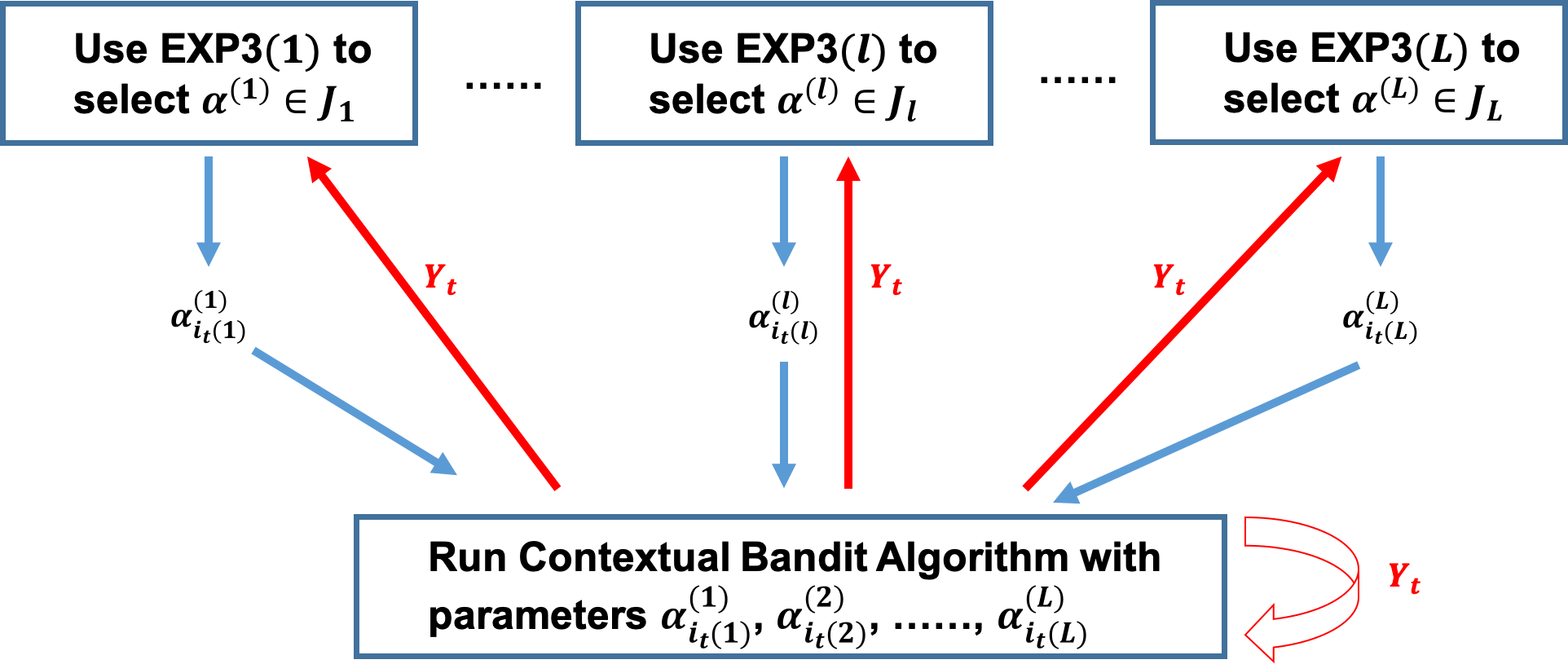}
\end{center}
  \vskip -0.15in
  \caption{Illustration of the Syndicated Bandits.}
    \label{deepbandit_notation}
\end{wrapfigure}

A naive strategy to auto-tune multiple hyper-parameters is to use Algorithm \ref{2layer} and let the tuning set $J$  contain all the possible combinations of the hyper-parameters. Assume there are in total $L$ hyper-parameters $\alpha^{(1)}, \alpha^{(2)}, \dots,\alpha^{(L)}$. For all $l \in [L]$, if the tuning set for $\alpha^{(l)}$ is defined as $J_{l} = \{\alpha^{(l)}_1, \dots, \alpha^{(l)}_{n_l}\}$, where $n_l$ is the size of the corresponding tuning set. Then there are in total $\Pi_{l=1}^L n_l$ possible combinations. Based on Lemma \ref{bound_q2}, the extra regret paid to tune the hyper-parameters (Quantity (B)) is upper bounded by $\tilde O(\sqrt{\Pi_{l=1}^L n_l T})$. Therefore, the aforementioned naive approach makes the regret increase exponentially with the number of tuning parameters. To mitigate this issue, we propose the Syndicated Bandits framework that can deal with multiple hyper-parameters while avoiding the exponential dependency on the number of tuning parameters in regret bounds. 

We create $L+1$ bandit instances in this framework. In the bottom layer, the contextual bandit algorithm is used to decide which arm to pull. On top of the contextual bandit layer, there are $L$ EXP3 bandits, denoted as EXP3$(l)$ for $l \in [L]$. Each EXP3 algorithm is responsible for tuning one hyper-parameter only. At round $t$, if $i_t(l)$ is the index of the hyper-parameters in $J_l$ selected by the EXP3$(l)$ bandit and the selected hyper-parameter is denoted as $\alpha^{(l)}_{i_t(l)}$ for $l\in [L]$, then the contextual bandit algorithm in the bottom layer will use these parameters to make a decision and receive a reward based on the pulled arm. The reward is fed to all the $L+1$ bandits to update the components. Illustration of the algorithm and more details are presented in Figure \ref{deepbandit_notation} and Algorithm \ref{deepbandit} in Appendix.
\subsection{Regret analysis}
At round $t$, given all the past information $\mathcal{F}_{t-1}$, 
denote $a_t(\alpha^{(1)}_{j_1}, \dots, \alpha^{(L)}_{j_L}| \mathcal{F}_{t-1})$ as the arm pulled by the contextual bandit algorithm if the parameters are chosen as $\alpha^{(l)} = \alpha^{(l)}_{j_l}$ for all $l \in [L]$ and let $X_t(\alpha^{(1)}_{j_1}, \dots, \alpha^{(L)}_{j_L} | \mathcal{F}_{t-1})$ be the corresponding feature vector. Recall that $\mu(\cdot)$ is the reward-feature model function, then for an arbitrary combination of hyper-parameters $(\alpha^{(1)}_*, \dots, \alpha^{(L)}_*)$, 
\begin{align*}
    & \mathbbm{E}[R(T)] =  \sum_{t=1}^{T_1} \mathbbm{E} \left [  \mu\left({x_{t,*}^T} \theta\right) - \mu\left(X_t^T \theta\right) \right ] + \sum_{t=T_1+1}^T \hspace{-0.22 cm}\mathbbm{E}\left[\mu\left(x_{t,*}^T \theta\right) - \mu\left(X_t(\alpha^{(1)}_*,\dots,\alpha^{(L)}_* | \mathcal{F}_{t-1})^T \theta\right) \right] \\
    & + \sum_{t=T_1}^T \mathbbm{E}\left [ \mu\left(X_t(\alpha^{(1)}_*,\dots,\alpha^{(L)}_*| \mathcal{F}_{t-1})^T \theta\right) - \mu\left(X_t(\alpha^{(1)}_{i_t(1)}, \alpha^{(2)}_*, \dots, \alpha^{(L)}_*| \mathcal{F}_{t-1})^T\theta\right) \right] \\
    & + \sum_{t=T_1}^T \mathbbm{E}\left [ \mu\left(X_t(\alpha^{(1)}_{i_t(1)}, \alpha^{(2)}_*, \dots, \alpha^{(L)}_*| \mathcal{F}_{t-1})^T\theta\right) - \mu\left(X_t(\alpha^{(1)}_{i_t(1)}, \alpha^{(2)}_{i_t(2)}, \alpha^{(3)}_*\dots | \mathcal{F}_{t-1})^T\theta\right) \right] \\
    & + \dots  + \sum_{t=T_1}^T \mathbbm{E}\left [\mu\left(X_t( \alpha^{(1)}_{i_t(1)},\dots, \alpha^{(L-1)}_{i_t(L-1)},  \alpha^{(L)}_*| \mathcal{F}_{t-1})^T\theta\right) - \mu\left(X_t(\alpha^{(1)}_{i_t(1)},\dots, \alpha^{(L)}_{i_t(L)}| \mathcal{F}_{t-1} )^T\theta\right) \right]. 
\end{align*}
The first quantity represents the regret from pure exploration. The second quantity in the above decomposition is the regret of the contextual bandit algorithm that runs with the same hyper-parameters $\alpha^{(1)}_*,\dots,\alpha^{(L)}_*$ under the past history $\mathcal{F}_{t-1}$ generated from our tuning strategy every round. 
The next $L$ quantities in the decomposition are the regret from tuning parameters in the EXP3 layer, which can be bounded using similar techniques in Lemma \ref{bound_q2}. However, the correlations between parameters are more complicated in the analysis now. Formally, we provide the following Theorem to guarantee the performance of the Syndicated Bandits framework. Proofs are deferred to the Appendix.
\begin{thm}\label{deep_bandit_thm}
Assume given the past information $\mathcal{F}_{t-1}$ and the hyper-parameters to be used by the contextual bandit algorithm at round $t$, the arm to be pulled by the contextual bandit algorithm follows a fixed distribution. 
Then the auto tuning method in Algorithm \ref{deepbandit} with warm-up length $T_1 = O(T^{2/3})$ has the following regret bound in general: 
\begin{equation*}
    \mathbbm{E}[R(T)] \leq \tilde O(T^{2/3}) + O\left( \sum_{l=1}^L \sqrt{ n_l (T-T_1) \log n_l}\right).
\end{equation*}
\end{thm}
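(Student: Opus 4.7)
The plan is to bound each of the four families of terms appearing in the regret decomposition stated just before the theorem: (i) the warm-up term $\sum_{t=1}^{T_1}\mathbb{E}[\mu(x_{t,*}^T\theta)-\mu(X_t^T\theta)]$, (ii) the ``oracle'' term that runs the contextual bandit with a single fixed tuple $(\alpha^{(1)}_*,\dots,\alpha^{(L)}_*)$ under the filtration $\mathcal{F}_{t-1}$ actually generated by the Syndicated algorithm, and (iii) the $L$ telescoping terms, each of which flips one hyper-parameter at a time from its oracle value to the EXP3-selected value. The task reduces to showing that (i) and (ii) together contribute $\tilde O(T^{2/3})$ and each term in (iii) contributes $O(\sqrt{n_l(T-T_1)\log n_l})$.

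For the warm-up term, boundedness of rewards in $[0,1]$ immediately yields at most $T_1=O(T^{2/3})$. For the oracle term I would adapt the proof of Theorem \ref{prop:tune}(1). The crucial ingredient, as noted in the discussion just below Lemma \ref{bound_q2}, is that after $T_1=\tilde O(T^{2/3})$ rounds of uniform exploration, the regularity assumption $\lambda_{\min}(\mathbb{E}[\tfrac{1}{K}\sum_a x_{t,a}x_{t,a}^T])>\sigma_0$ implies $\lambda_{\min}(V_{T_1})=\Omega(T_1)$ with high probability, so $\hat\theta_t$ is already consistent regardless of which hyper-parameters will be chosen thereafter. With consistent $\hat\theta_t$ and the usual potential-style bound on $\sum_t \|X_t\|_{V_t^{-1}}^2$, the generalized-linear UCB/TS argument carries through along the arm sequence actually pulled by the Syndicated algorithm and yields $\tilde O(\sqrt{T})$ post-warm-up regret for any fixed choice $(\alpha^{(1)}_*,\dots,\alpha^{(L)}_*)\in J_1\times\cdots\times J_L$; summing with the $T_1$ cost gives $\tilde O(T^{2/3})$.

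For each of the $L$ telescoping terms I would extend the argument used to prove Lemma \ref{bound_q2}. Fix $l$ and look at EXP3$(l)$ in isolation: conditional on the realized sequence of pulls, its reward stream is an arbitrary $[0,1]$-valued process, and the standard adversarial analysis of \cite{auer2002nonstochastic} (as modified in our proof of Lemma \ref{bound_q2} to accommodate rewards whose laws themselves depend on past EXP3 choices) shows that EXP3$(l)$'s regret against any fixed comparator $\alpha^{(l)}_*\in J_l$ is at most $O(\sqrt{n_l(T-T_1)\log n_l})$. Summing over $l\in[L]$ and combining with the warm-up and oracle contributions gives the claimed $\tilde O(T^{2/3})+O(\sum_{l=1}^L\sqrt{n_l(T-T_1)\log n_l})$ bound.

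The main obstacle is the last step. In Lemma \ref{bound_q2} only a single EXP3 is wrapped around the contextual bandit, so its adversarial guarantee applies cleanly. Here, the $l$-th telescoping term in the decomposition fixes the indices $>l$ at the \emph{oracle} values $\alpha^{(l+1)}_*,\dots,\alpha^{(L)}_*$, while EXP3$(l)$ in the actual algorithm observes rewards generated with those later indices at the EXP3-\emph{selected} values $\alpha^{(l+1)}_{i_t(l+1)},\dots,\alpha^{(L)}_{i_t(L)}$. Reconciling this mismatch is the delicate point: I would do it by a tower-of-expectations argument, conditioning on the choices of EXP3$(1),\dots,$ EXP3$(l-1)$ and integrating over the randomization of EXP3$(l+1),\dots,$ EXP3$(L)$, and then invoking the fact that EXP3$(l)$'s regret bound is valid against \emph{any} adversarially determined reward sequence (including one in which the remaining randomness is integrated out). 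This is precisely where Lemma \ref{bound_q2}'s modified proof has to be strengthened so that the EXP3 weights update correctly against the non-stationary, algorithm-coupled reward stream each EXP3$(l)$ actually faces.
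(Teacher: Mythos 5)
Your decomposition and overall plan coincide with the paper's: the warm-up cost is $O(T_1)$, the $L$ telescoping terms are each handled by an EXP3-style argument (this is the paper's Proposition \ref{prop:syn}), and the oracle term is handled by adapting the proof of Theorem \ref{prop:tune}(1). You also correctly isolate the genuinely delicate point in the telescoping step --- that the $l$-th comparison fixes layers $l+1,\dots,L$ at oracle values while the algorithm's EXP3$(l)$ is updated from rewards generated with those layers at their selected values --- and your proposed fix (condition on the realized choices of layers $1,\dots,l-1$ and on the counterfactual assignment of the later layers, then use that $i_t(l)$ is drawn from $p^{(l)}_{\cdot}(t)$ independently of the other layers given $\mathcal{F}_{t-1}$) is exactly the mechanism behind the paper's Equation \ref{3randomness}.

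The one step that would fail as written is your claim that the oracle term is $\tilde O(\sqrt{T})$ after warm-up via ``the usual potential-style bound on $\sum_t\|X_t\|_{V_t^{-1}}^2$.'' The oracle term reduces to sums of $\|\tilde X_t\|_{V_t^{-1}}$ and $\|x_{t,*}\|_{V_t^{-1}}$, where $\tilde X_t = X_t(\alpha^{(1)}_*,\dots,\alpha^{(L)}_*\,|\,\mathcal{F}_{t-1})$ is a counterfactual arm, not the arm actually pulled; the Gram matrix $V_t$ is built from the actually pulled $X_t$'s, so the elliptical potential lemma (Lemma \ref{lem:sum}) does not apply to either sum. The paper instead uses the warm-up (via Lemma \ref{lem:explore}) to obtain $\lambda_{\min}(V_t)=\Omega(T^{2/3})$ and bounds each summand crudely by $T^{-1/3}$, yielding $O(T^{2/3})$ --- which is precisely why the theorem's rate is $T^{2/3}$ rather than $\sqrt{T}$. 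Your final bound survives because the $T_1=O(T^{2/3})$ warm-up cost dominates anyway, but the potential-lemma step should be replaced by this eigenvalue argument. A second, smaller omission: when the regularization parameter $\lambda$ is itself tuned, $V_t$ depends on the selected $\lambda$, and the confidence radius must be made uniform over its candidate set (the paper's Lemma \ref{lem:consis2}); your sketch implicitly treats $V_t$ as fixed.
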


\begin{remark}
Note this result avoids the exponential dependency on the number of hyper-parameters to be tuned in regret. When the hyper-parameters to be tuned are the exploration parameter $\alpha$ and the regularization parameter $\lambda$ of the (generalized) linear model, we also have the same conclusions as in Theorem \ref{prop:tune} (3). Please refer to Appendix \ref{app:syn} for a formal statement and its proof.

\end{remark}


\begin{algorithm}[htb] 
\caption{The Syndicated Bandits Framework for Auto Tuning Multiple Hyper-parameters}
\label{deepbandit}
\textbf{Input}: time horizon $T$, warm up length $T_1$, candidate hyper-parameter set $\{J_l\}_{l=1}^L$.
\begin{algorithmic}[1]
    \State Randomly choose $a_t \in [K]$ and record $X_t, Y_t$ for $t \in [T_1]$.
    \State Initialize exponential weights $w_j^{(l)}(T_1+1) = 1$ for $t=1$, $j=1,\dots, n_l$ and $l=1,\dots,L$.
    \State Initialize the parameters for all EXP3 layers as $\beta_l = \min\left\{ 1, \sqrt{\frac{n_l \log n_l}{(e-1)T}} \right\}$.
    \For{$t = (T_1+1)$ {\bfseries to} $T$}
        \State Update probability distribution for pulling candidates in $J_l$
        \begin{equation*}
            p_{j}^{(l)}(t) = \frac{\beta_l}{n_l} + (1-\beta_l) \frac{w_j^{(l)}(t)}{\sum_{i=1}^{n_l} w_i^{(l)}(t)}
        \end{equation*}
        \State $i_t(l) \gets j \in [n_l]$ with probability $p_j^{(l)}(t)$ for all $l=1,\dots,L$.
        \State Run the contextual bandit algorithm with hyper-parameters $\alpha^{(l)}= \alpha^{(l)}_{i_t(l)}$ to pull an arm.
        \State Observe reward $Y_{t}$ and update the components in contextual bandit algorithms.
        \State Update all $L$ EXP3 bandits: $\hat y_{t}^{(l)}(j) \gets 0$ if $j\neq i_t(l)$. Otherwise, $\hat y_{t}^{(l)}(j) \gets Y_{t} / p_{j}^{(l)} (t)$.
        \State For all $l=1,\dots, L$, let $w_j^{(l)}(t+1) = w_j^{(l)}(t) \times \text{exp} \left( \frac{\beta_l}{n_l} \hat y^{(l)}_{t}(j)\right)$.
    \EndFor
\end{algorithmic}
\end{algorithm}

\section{Experimental results}\label{exp_auto}
We show by experiments that our proposed methods outperform various contextual bandit algorithm using the theoretical exploration parameter, as well as existing tuning methods. We compare different hyper-parameter selection methods in three popular contextual bandit algorithms, LinUCB \cite{abbasi2011improved,li2010contextual}, LinTS \cite{agrawal2013thompson} and UCB-GLM \cite{li2017provably} with a logistic model. In practice, we set the warm-up length as $T_1=0$ and tune both exploration parameters and regularization parameters. We compare the following hyper-parameter selection methods. 
\textbf{Theoretical-Explore \cite{abbasi2011improved}}: At round $t$, this method uses the true theoretical exploration parameter $\alpha(t)$ defined in Equation \ref{theory_explore}; \textbf{OP \cite{bouneffouf2020hyper}}: We make simple modifications of OPLINUCB to make it applicable to tune exploration parameters for LinUCB, LinTS and UCB-GLM; \textbf{Corral \cite{agarwal2017corralling}}: This method uses the corralling idea to tune the exploration parameter only. \textbf{Corral-Combined \cite{agarwal2017corralling}}: This method treats bandits with different combinations of the exploration parameter and regularization parameter $\lambda$ as base model and uses the corralling idea to tune the configurations; \textbf{TL (Our work, Algorithm \ref{2layer})}: This is our proposed Algorithm \ref{2layer}, where we use the two-layer bandit structure to tune the exploration parameter only; \textbf{TL-Combined (Our work, Algorithm \ref{2layer})}: This method tunes both the exploration parameter $\alpha$ and the regularization parameter $\lambda$ using Algorithm \ref{2layer}, but with the tuning set containing all the possible combinations of $\alpha$ and $\lambda$; \textbf{Syndicated (Our work, Algorithm \ref{deepbandit})}: This method keeps two separate tuning sets for $\alpha$ and $\lambda$ respectively. It uses the Syndicated Bandits in Algorithm \ref{deepbandit}.

\begin{figure}[h]
    \centering
    \includegraphics[width=0.33\textwidth]{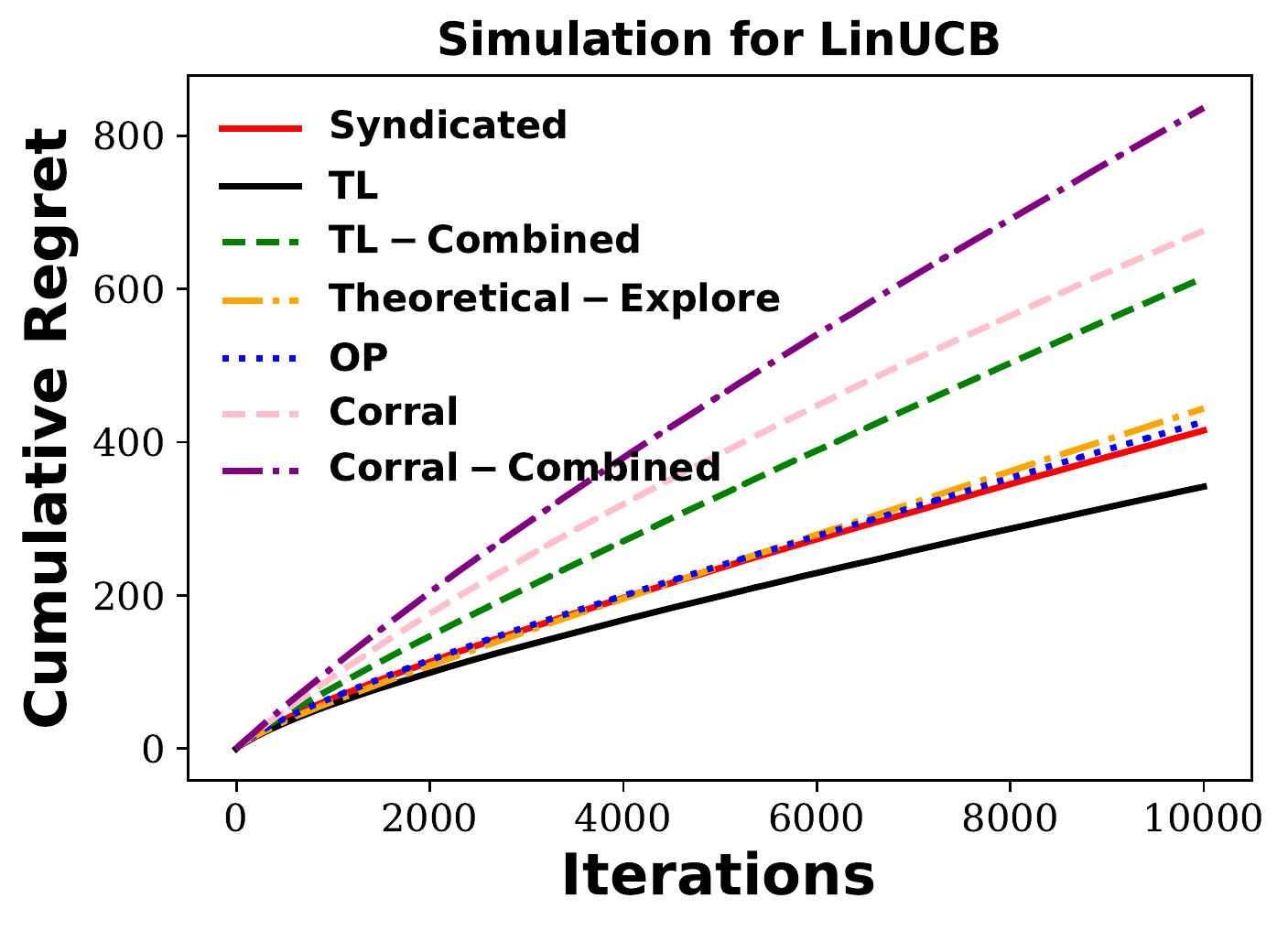}\includegraphics[width=0.33\textwidth]{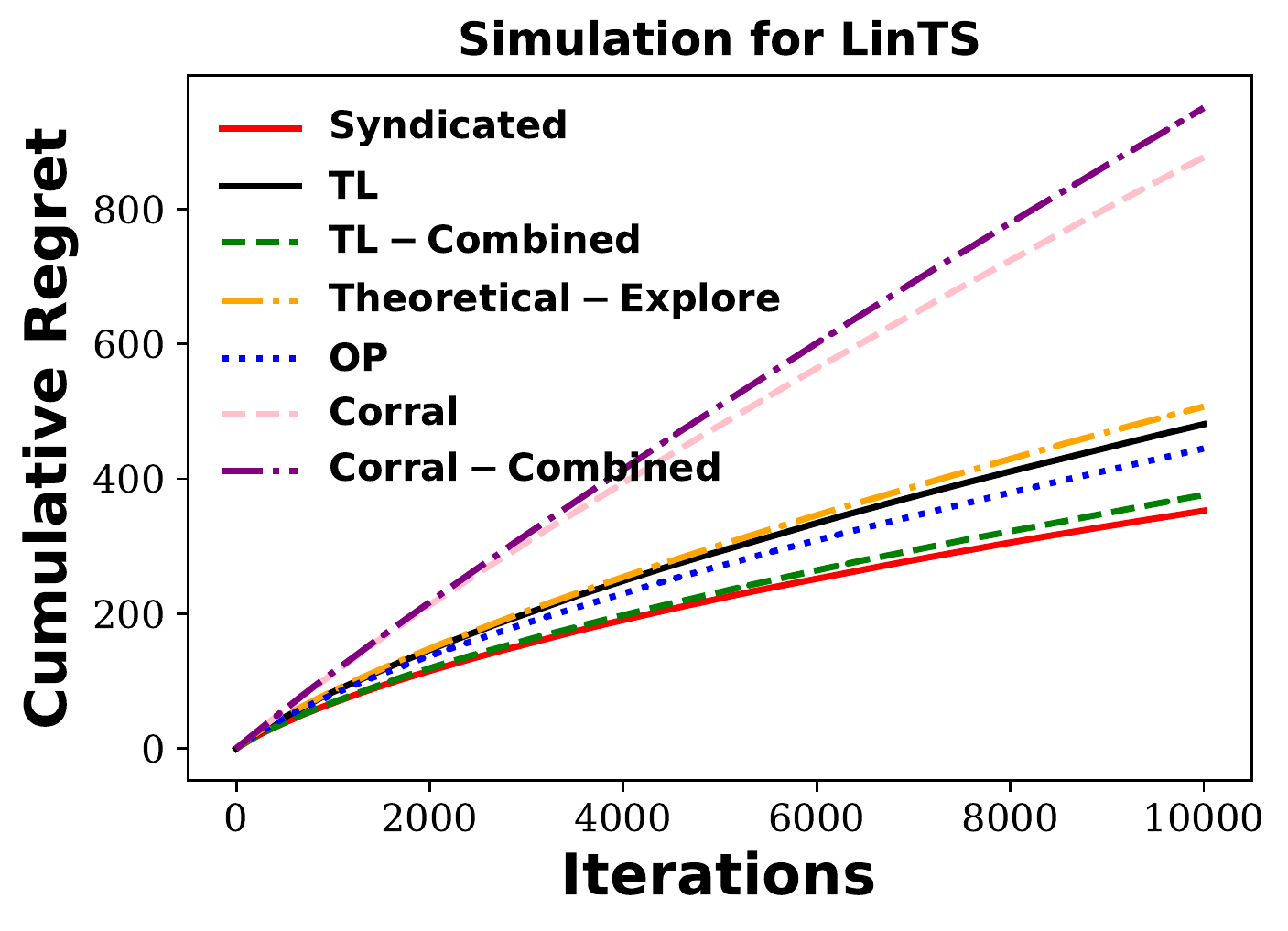}\includegraphics[width=0.33\textwidth]{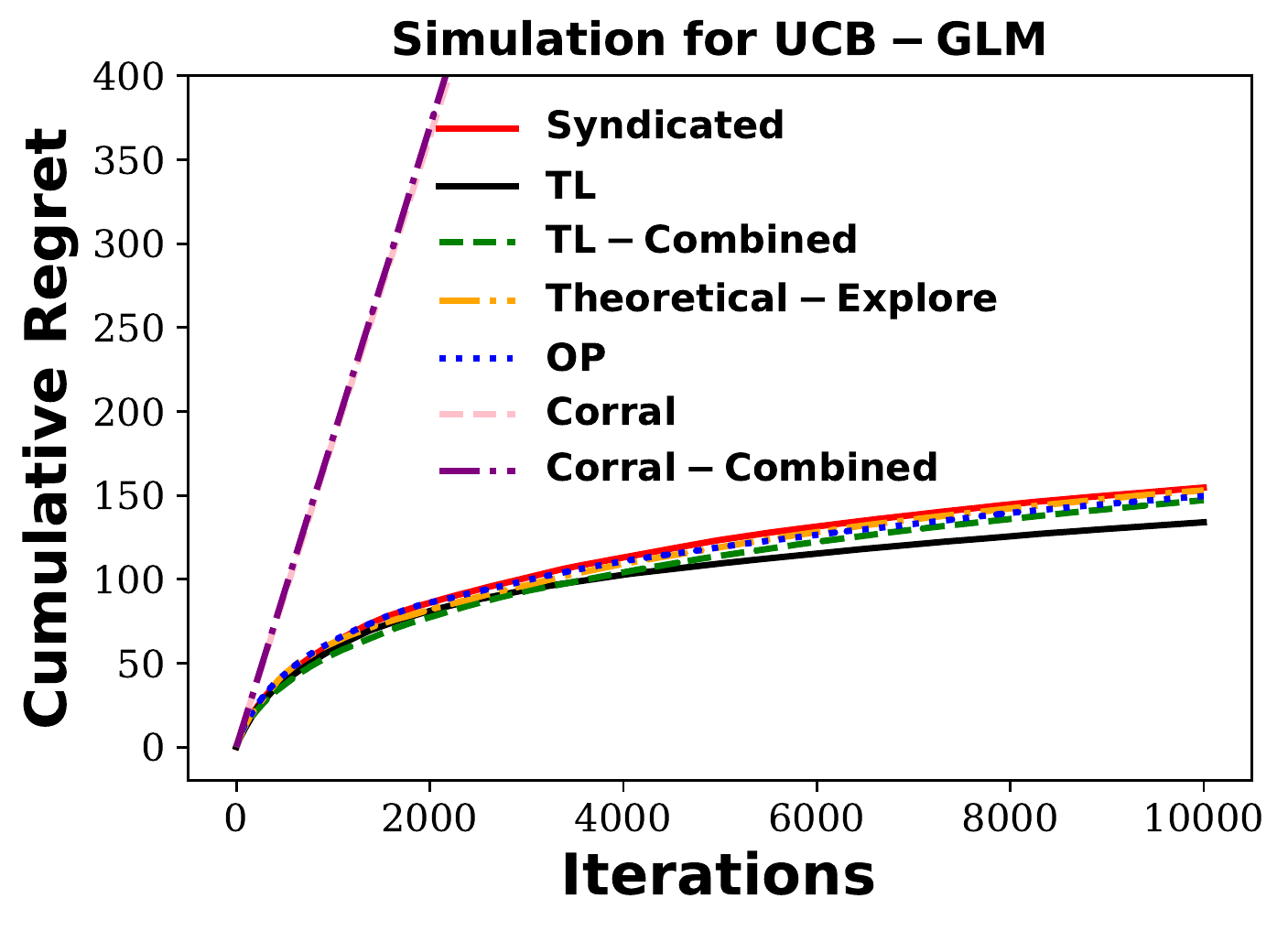}
    
    \includegraphics[width=0.33\textwidth]{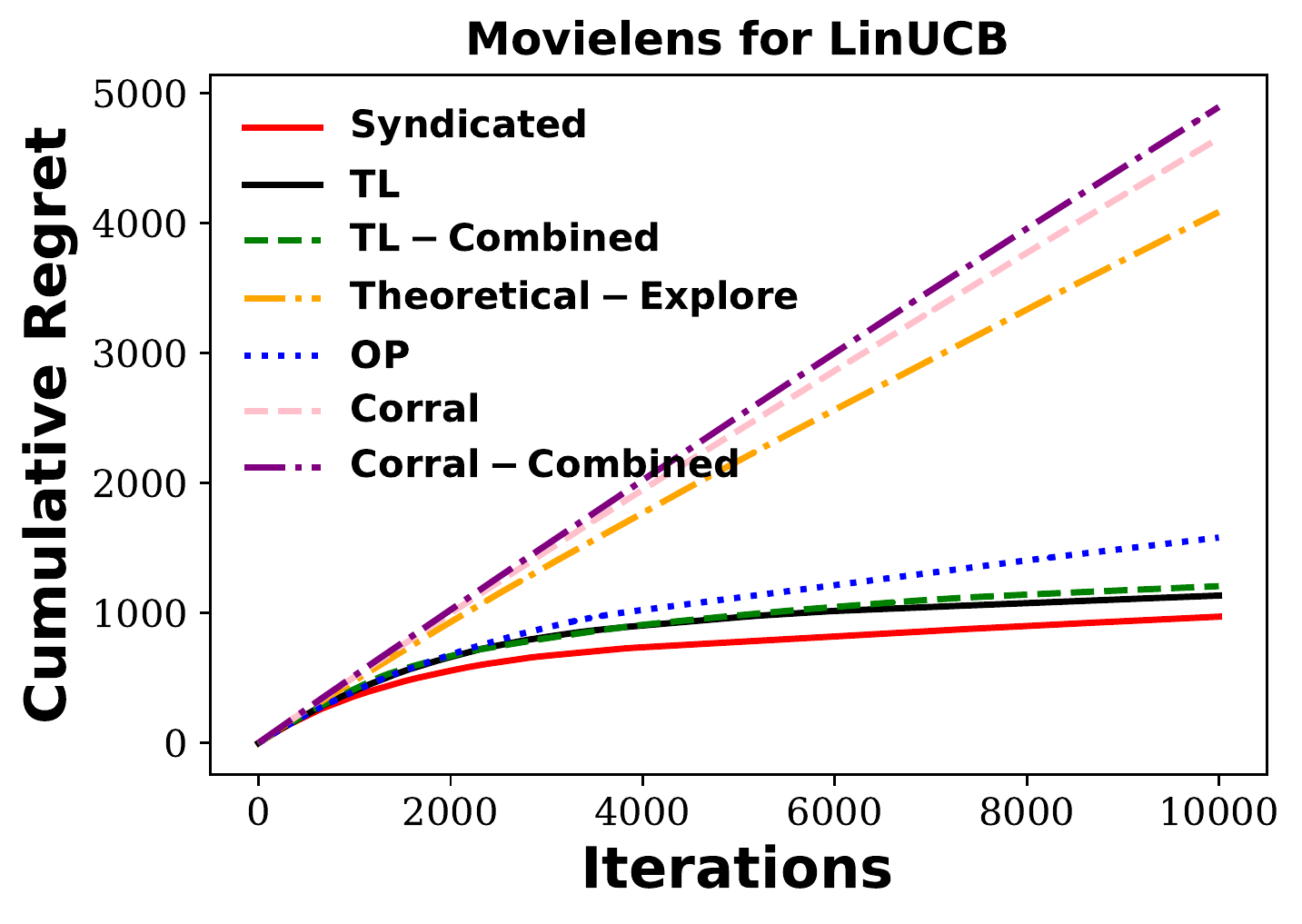}\includegraphics[width=0.33\textwidth]{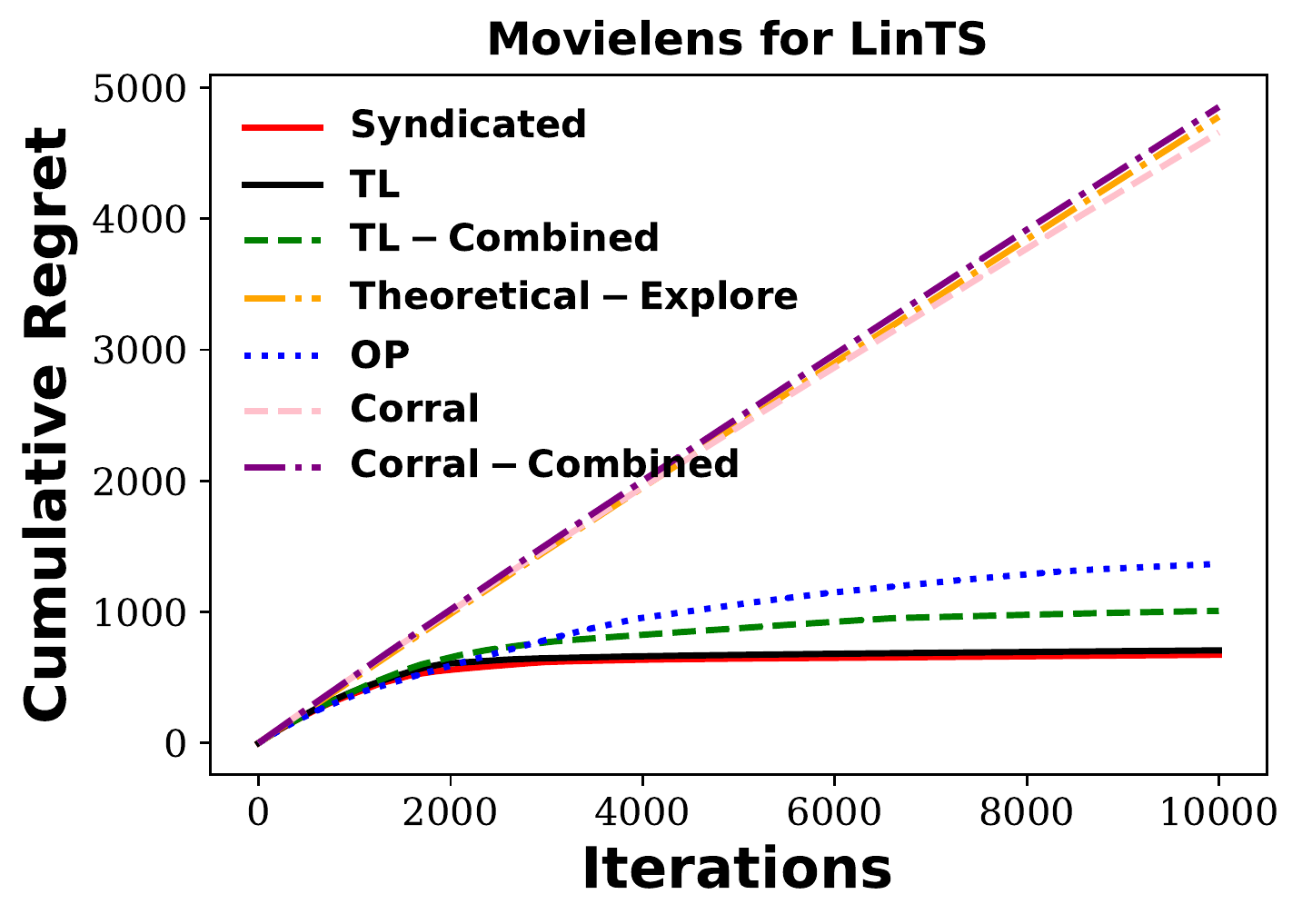}\includegraphics[width=0.33\textwidth]{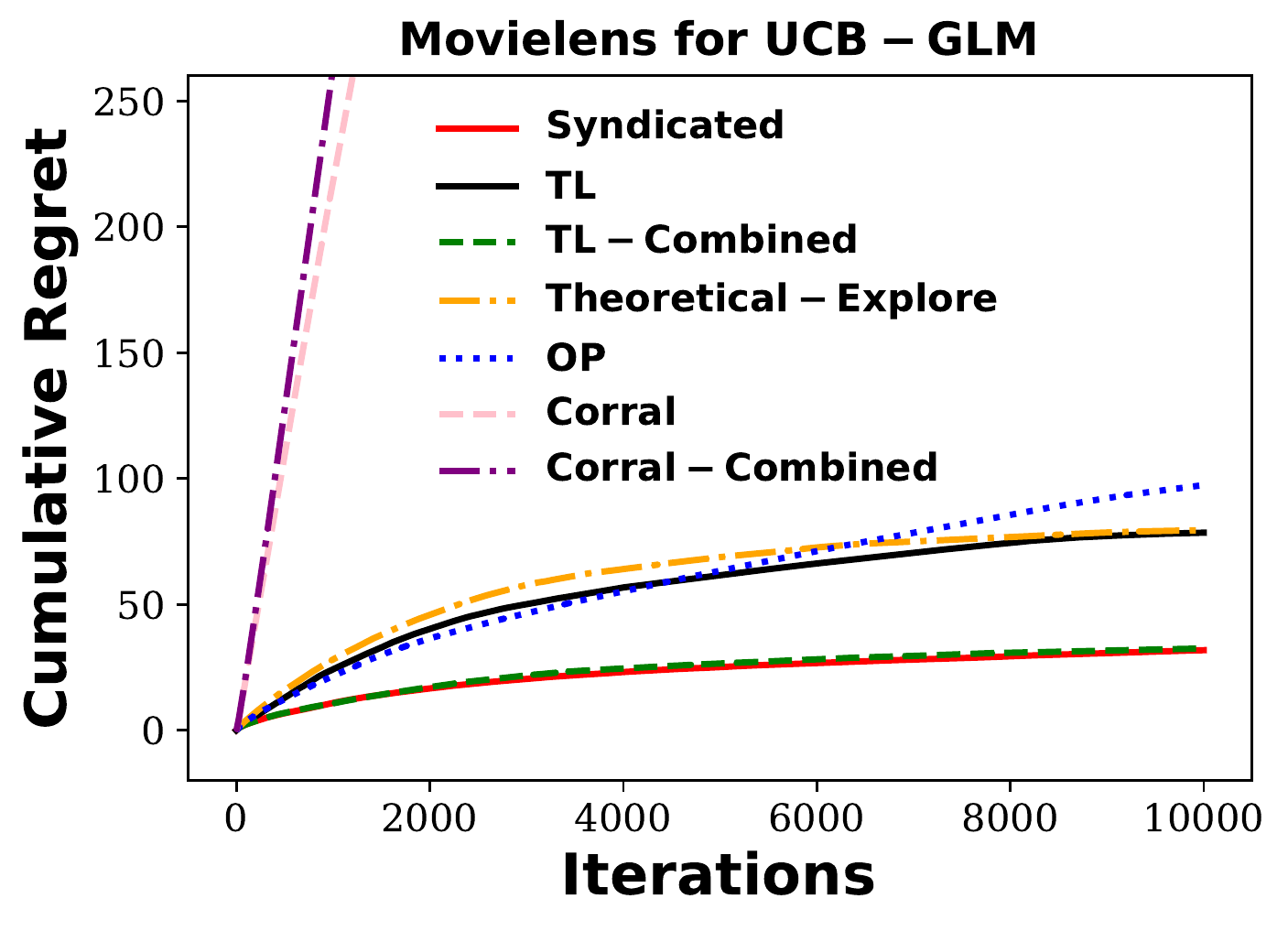}
    \caption{Comparison of hyper-parameters selection methods in LinUCB, LinTS and UCB-GLM.}
    \label{main_plots_auto}
\end{figure}

We set the tuning set for exploration parameter $\alpha$ as $\{0, 0.01, 0.1, 1, 10\}$ and set the tuning set for regularization parameter $\lambda$ as $\{0.01, 0.1, 1\}$ in TL-Combined, Corral-Combined and Syndicated. 
For Theoretical-Explore, OP and TL, since they only tune the exploration parameter, we set the regularization parameter as $\lambda=1$. 
In all the experiments below, the total number of rounds is $T=10,000$. We run the comparisons on both simulations and the benchmark Movielens 100K real datasets. Due to limited space, the descriptions of the dataset settings are deferred to Appendix \ref{exp_dataset}. Averaged results over $10$ independently repeated experiments are reported below.

From Figure \ref{main_plots_auto}, we  observe: 1)  When tuning only one hyper-parameter (exploration parameter in our experiments), the proposed method outperforms previous tuning methods. Further, the theoretical exploration parameter does not perform well and it tends to be too conservative in practice, which is consistent with the results we show in Table \ref{reg_explore_table}. 
2) When tuning multiple hyper-parameters, previous methods do not apply. We found using the Syndicated Bandits framework usually outperforms TL-Combined and is significantly better than Corral-Combined method which has exponential regret with respect to number of tuning parameters. 3) Using Syndicated Bandits to tune multiple hyper-parameters usually outperforms tuning one parameter only. This demonstrates a practical need of auto-tuning multiple hyper-parameters in bandit algorithms. See Appendix for additional experiments on the tuning $3$ hyper-parameters in SGD-TS \cite{ding2021efficient}. 

\section{Conclusion}
In this paper, we propose a two-layer bandit structure for auto tuning the exploration parameter in contextual bandit algorithms, where the offline tuning is impossible. To further accommodate multiple hyper-parameters tuning tasks in contextual bandit algorithms with complicated models, we generalize our method to the Syndicated Bandits framework. This is the first framework that can auto tune multiple hyper-parameters dynamically from observations in contextual bandit environment with theoretical regrets that avoids exponential dependency on the total number of hyper-parameters to be tuned. We show that our proposed algorithm can obtain $\tilde O(T^{2/3})$ regret in general and has optimal $\tilde O(\sqrt T)$ regret for UCB-based algorithms when the all candidates in the tuning set is greater than the theoretical exploration parameter. 
Our work is general enough to handle the tuning tasks in many contextual bandit algorithms. Experimental results also validate the effectiveness of our proposed work. 

\bibliographystyle{plain}
\bibliography{main}

\begin{thebibliography}{10}

\bibitem{abbasi2011improved}
Yasin Abbasi-Yadkori, D{\'a}vid P{\'a}l, and Csaba Szepesv{\'a}ri.
\newblock Improved algorithms for linear stochastic bandits.
\newblock In {\em Advances in Neural Information Processing Systems}, pages
  2312--2320, 2011.

\bibitem{abeille2017linear}
Marc Abeille and Alessandro Lazaric.
\newblock Linear thompson sampling revisited.
\newblock In {\em Artificial Intelligence and Statistics}, pages 176--184.
  PMLR, 2017.

\bibitem{agarwal2017corralling}
Alekh Agarwal, Haipeng Luo, Behnam Neyshabur, and Robert~E Schapire.
\newblock Corralling a band of bandit algorithms.
\newblock In {\em Conference on Learning Theory}, pages 12--38. PMLR, 2017.

\bibitem{agrawal2012analysis}
Shipra Agrawal and Navin Goyal.
\newblock Analysis of thompson sampling for the multi-armed bandit problem.
\newblock In {\em Conference on learning theory}, pages 39--1, 2012.

\bibitem{agrawal2013thompson}
Shipra Agrawal and Navin Goyal.
\newblock Thompson sampling for contextual bandits with linear payoffs.
\newblock In {\em International Conference on Machine Learning}, pages
  127--135, 2013.

\bibitem{auer2002finite}
Peter Auer, Nicolo Cesa-Bianchi, and Paul Fischer.
\newblock Finite-time analysis of the multiarmed bandit problem.
\newblock {\em Machine learning}, 47(2-3):235--256, 2002.

\bibitem{auer2002nonstochastic}
Peter Auer, Nicolo Cesa-Bianchi, Yoav Freund, and Robert~E Schapire.
\newblock The nonstochastic multiarmed bandit problem.
\newblock {\em SIAM journal on computing}, 32(1):48--77, 2002.

\bibitem{bogunovic2020stochastic}
Ilija Bogunovic, Arpan Losalka, Andreas Krause, and Jonathan Scarlett.
\newblock Stochastic linear bandits robust to adversarial attacks.
\newblock In {\em International Conference on Artificial Intelligence and
  Statistics}, pages 991--999. PMLR, 2021.

\bibitem{bouneffouf2020hyper}
Djallel Bouneffouf and Emmanuelle Claeys.
\newblock Hyper-parameter tuning for the contextual bandit.
\newblock {\em arXiv preprint arXiv:2005.02209}, 2020.

\bibitem{chapelle2011empirical}
Olivier Chapelle and Lihong Li.
\newblock An empirical evaluation of thompson sampling.
\newblock In {\em Advances in neural information processing systems}, pages
  2249--2257, 2011.

\bibitem{cheung2019learning}
Wang~Chi Cheung, David Simchi-Levi, and Ruihao Zhu.
\newblock Learning to optimize under non-stationarity.
\newblock In {\em The 22nd International Conference on Artificial Intelligence
  and Statistics}, pages 1079--1087, 2019.

\bibitem{chowdhury2017kernelized}
Sayak~Ray Chowdhury and Aditya Gopalan.
\newblock On kernelized multi-armed bandits.
\newblock In {\em International Conference on Machine Learning}, pages
  844--853. PMLR, 2017.

\bibitem{dani2008stochastic}
Varsha {Dani}, Thomas~P {Hayes}, and Sham~M {Kakade}.
\newblock Stochastic linear optimization under bandit feedback.
\newblock In {\em COLT}, pages 355--366, 2008.

\bibitem{ding2021efficient}
Qin Ding, Cho-Jui Hsieh, and James Sharpnack.
\newblock An efficient algorithm for generalized linear bandit: Online
  stochastic gradient descent and thompson sampling.
\newblock In {\em International Conference on Artificial Intelligence and
  Statistics}, pages 1585--1593. PMLR, 2021.

\bibitem{filippi2010parametric}
Sarah Filippi, Olivier Cappe, Aur{\'e}lien Garivier, and Csaba Szepesv{\'a}ri.
\newblock Parametric bandits: The generalized linear case.
\newblock In {\em NIPS}, volume~23, pages 586--594, 2010.

\bibitem{foster2019model}
Dylan~J Foster, Akshay Krishnamurthy, and Haipeng Luo.
\newblock Model selection for contextual bandits.
\newblock {\em Advances in Neural Information Processing Systems}, 32, 2019.

\bibitem{li2010contextual}
Lihong Li, Wei Chu, John Langford, and Robert~E Schapire.
\newblock A contextual-bandit approach to personalized news article
  recommendation.
\newblock In {\em Proceedings of the 19th international conference on World
  wide web}, pages 661--670, 2010.

\bibitem{li2017provably}
Lihong Li, Yu~Lu, and Dengyong Zhou.
\newblock Provably optimal algorithms for generalized linear contextual
  bandits.
\newblock In {\em International Conference on Machine Learning}, pages
  2071--2080. PMLR, 2017.

\bibitem{li2017hyperband}
Lisha Li, Kevin Jamieson, Giulia DeSalvo, Afshin Rostamizadeh, and Ameet
  Talwalkar.
\newblock Hyperband: A novel bandit-based approach to hyperparameter
  optimization.
\newblock {\em The Journal of Machine Learning Research}, 18(1):6765--6816,
  2017.

\bibitem{schwartz2017customer}
Eric~M Schwartz, Eric~T Bradlow, and Peter~S Fader.
\newblock Customer acquisition via display advertising using multi-armed bandit
  experiments.
\newblock {\em Marketing Science}, 36(4):500--522, 2017.

\bibitem{shang:hal-02145200}
Xuedong Shang, Emilie Kaufmann, and Michal Valko.
\newblock {A simple dynamic bandit algorithm for hyper-parameter tuning}.
\newblock In {\em {Workshop on Automated Machine Learning at International
  Conference on Machine Learning}}, Long Beach, United States, June 2019.
  {AutoML@ICML 2019 - 6th ICML Workshop on Automated Machine Learning}.

\bibitem{sharaf2019meta}
Amr Sharaf and Hal Daum{\'e}~III.
\newblock Meta-learning for contextual bandit exploration.
\newblock {\em arXiv preprint arXiv:1901.08159}, 2019.

\bibitem{thompson1933likelihood}
William~R Thompson.
\newblock On the likelihood that one unknown probability exceeds another in
  view of the evidence of two samples.
\newblock {\em Biometrika}, 25(3/4):285--294, 1933.

\bibitem{wang2021cost}
Jingkang Wang, Mengye Ren, Ilija Bogunovic, Yuwen Xiong, and Raquel Urtasun.
\newblock Cost-efficient online hyperparameter optimization.
\newblock {\em arXiv preprint arXiv:2101.06590}, 2021.

\bibitem{woodroofe1979one}
Michael Woodroofe.
\newblock A one-armed bandit problem with a concomitant variable.
\newblock {\em Journal of the American Statistical Association},
  74(368):799--806, 1979.

\bibitem{hfy12a}
Hsiang-Fu Yu, Cho-Jui Hsieh, Si~Si, and Inderjit~S. Dhillon.
\newblock Scalable coordinate descent approaches to parallel matrix
  factorization for recommender systems.
\newblock In {\em IEEE International Conference of Data Mining}, 2012.

\bibitem{yu2014parallel}
Hsiang-Fu Yu, Cho-Jui Hsieh, Si~Si, and Inderjit~S Dhillon.
\newblock Parallel matrix factorization for recommender systems.
\newblock {\em Knowledge and Information Systems}, 41(3):793--819, 2014.

\bibitem{zhou2020neural}
Dongruo Zhou, Lihong Li, and Quanquan Gu.
\newblock Neural contextual bandits with ucb-based exploration.
\newblock In {\em International Conference on Machine Learning}, pages
  11492--11502. PMLR, 2020.

\end{thebibliography}

\clearpage

\appendix

\section{Appendix}

\subsection{Proof of Lemma \ref{bound_q2}}\label{proof_lemma_bound_q2}
\begin{proof}
The proof techniques basically follows \cite{auer2002nonstochastic}. However, since the EXP3 layer and contextual bandit layer are coupled, the result in \cite{auer2002nonstochastic} cannot be directly applied to show our result. We make modifications of the proofs in \cite{auer2002nonstochastic} below.

We first reload some notations in this proof: at time $t$ we are given all the previous information $\mathcal{F}_{t-1}$ generated from using our auto-tune framework shown in Algorithm \ref{2layer}, and then pull an arm according to some exploration hyper-parameter $\alpha$. Therefore, for convenience we could safely omit $\mathcal{F}_{t-1}$ here, and denote $a_t(\alpha) \coloneqq a_t(\alpha| \mathcal{F}_{t-1})$ and $X_t(\alpha) \coloneqq X_t(\alpha | \mathcal{F}_{t-1})$ as the arm pulled and its corresponding feature vector at round $t$. Furthermore, if arm $a_t(\alpha_j)$ is pulled at round $t$, we define 
the corresponding mean reward as $\mu_t (\alpha_j) = \mu\left(X_t (\alpha_j)^T \theta\right)$. 
The corresponding observed sample reward is $y_t (\alpha_j) = \mu_t (\alpha_j) + \epsilon_{t,j}$, where $\epsilon_{t,j}$ denotes the hypothetical random noise at round $t$ if arm $a_t(\alpha_j)$ is pulled. Note that $\epsilon_t = \epsilon_{t,i_t}$ since $a_t(\alpha_{i_t})$ is the arm pulled by our algorithm and $\epsilon_t$ is the associated random noise. By definition, $Y_t = y_t(\alpha_{i_t})$. 
From the definition of $\hat y_t (j)$ in Algorithm \ref{2layer}, we have $\hat y_t(j)= y_t (\alpha_j) / p_j(t)$ if $j = i_t$. Otherwise $\hat y_t(j) = 0$. Then $w_j(t+1) = w_j (t) \text{exp} (\frac{\beta}{n} \hat y_t(i) )$ according to Algorithm \ref{2layer}.

Given all the information in the past $\mathcal{F}_{t-1}$, $\left(\hat\theta_t, V_t, p_j(t), w_j(t)\right)$ are fixed. Since $0\leq y_t (\alpha_j) \leq 1$, we have
\begin{align}
    & \mathbbm{E} \left[ \sum_{i=1}^{n} p_i(t) \hat y_t(i) | \mathcal{F}_{t-1} \right]  = \mathbbm{E} \left[ p_{i_t}(t) \frac{y_t (\alpha_{i_t})}{p_{i_t}(t)}   | \mathcal{F}_{t-1} \right] = \mathbbm{E} \left[ \mu_t (\alpha_{i_t}) | \mathcal{F}_{t-1}  \right]  \label{auto_ineq2}\\
    & \mathbbm{E} \left[ \sum_{i=1}^{n} p_i(t) \hat y_t(i)^2 | \mathcal{F}_{t-1} \right] = \mathbbm{E} \left[ p_{i_t}(t) \frac{y_t (\alpha_{i_t})}{p_{i_t}(t)} \hat y_t(i_t)  | \mathcal{F}_{t-1} \right] 
    = \mathbbm{E} \left[ y_t (\alpha_{i_t}) \hat y_t(i_t)  | \mathcal{F}_{t-1} \right] \nonumber \\
    & \leq \mathbbm{E} \left[ \hat y_t(i_t) | \mathcal{F}_{t-1} \right] = \mathbbm{E} \left[ \sum_{i=1}^{n}\hat y_t(i)  | \mathcal{F}_{t-1} \right] \label{auto_reason1} \\
     & = \sum_{i=1}^{n} \mathbbm{E} \left[ \mathbbm{E} \left[ \hat y_t(i) | \sigma( \mathcal{F}_{t-1}, \epsilon_{t,i}, a_t(\alpha_i)) \right]  | \mathcal{F}_{t-1} \right] \label{auto_reason2}\\
     & = \sum_{i=1}^{n} \mathbbm{E} \left[  y_t(\alpha_i)  | \mathcal{F}_{t-1}\right] \label{auto_reason3}\\
    & = \sum_{i=1}^{n} \mathbbm{E} \left[ \mu_t(\alpha_i) | \mathcal{F}_{t-1} \right].  \label{auto_ineq3}
\end{align}

Equation \ref{auto_reason1} holds since $\hat y_t(i) \neq 0$ only when $i=i_t$. In Equation \ref{auto_reason2}, 
$\sigma (\mathcal{F}_{t-1}, \epsilon_{t,i}, a_t(\alpha_i))$ is the smallest $\sigma$-algebra induced by $\mathcal{F}_{t-1}, \epsilon_{t,i}$, and $ a_t(\alpha_i)$. 
Equation \ref{auto_reason3} holds since $\hat y_t(i) = y_t(\alpha_i) / p_i(t) \mathbbm{1}(i=i_t)$. Meanwhile, since given the hyper-parameter to be used at round $t$ as $\alpha_i$, the arm to be pulled $a_t(\alpha_i)$ follows a fixed distribution and does not affect the distribution of $i_t$, so $i=i_t$ is still with probability $p_i(t)$. Now we are ready to use the above results to prove the lemma. Define $W_t = \sum_{i=1}^{n} w_i(t)$.
We find the lower bound and upper bound of 
$\mathbbm{E}[\log\frac{W_{T+1}}{W_1}]$ below.

\textbf{Lower bound.} Since $w_i(1) = 1$ for all $i$,
$\mathbbm{E} [\log\frac{W_{T+1}}{W_1}] \geq \mathbbm{E} [\log w_i(T+1)] - \log n$ for all $i \in [n]$. We take a look at $\mathbbm{E} \left[ \log\frac{w_i(t+1)}{w_i(t)} \right]$ below.
\begin{align*}
    & \mathbbm{E} \left[ \log\frac{w_i(t+1)}{w_i(t)} |\mathcal{F}_{t-1} \right]
    =  \mathbbm{E} \left[ \log \left[\frac{w_i(t)}{w_i(t)} \text{exp} \left( \frac{\beta}{n} \hat y_t(i)\right)\right]  |\mathcal{F}_{t-1} \right] \\
    & = \mathbbm{E} \left[  \frac{\beta}{n} \hat y_t(i)  |\mathcal{F}_{t-1} \right] 
    = \mathbbm{E} \left[  \frac{\beta}{n}  y_t(i)  |\mathcal{F}_{t-1} \right] = \mathbbm{E} \left[  \frac{\beta}{n} \mu_t (\alpha_i)  |\mathcal{F}_{t-1} \right] 
\end{align*}
The third ``$=$'' in the above is due to the same reason as in Equation \ref{auto_reason3}. 
Take expectation on both sides and sum over $t$, we get
\begin{equation*}
    \mathbbm{E} \left[ \log w_i(T+1) \right] = \frac{\beta}{n} \sum_{t=1}^T \mathbbm{E} \left[ \mu_t (\alpha_i) \right]
\end{equation*}
Therefore, for all $i=1,\dots,n$, 
\begin{equation} \label{lowerbound}
    \mathbbm{E} \left[ \log\frac{W_{T+1}}{W_1} \right] \geq \frac{\beta}{n} \sum_{t=1}^T \mathbbm{E} \left[\mu_t (\alpha_i) \right] - \log n.
\end{equation}

\textbf{Upper bound.} On the other hand, let's look at $\mathbbm{E} [\log\frac{W_{t+1}}{W_t}]$:
\begin{align*}
    & \mathbbm{E} \left[\log\frac{W_{t+1}}{W_t} | \mathcal{F}_{t-1}\right] = \mathbbm{E} \left[\log\sum_{i=1}^{n}\frac{w_i(t+1)}{W_t} | \mathcal{F}_{t-1}\right] 
    = \mathbbm{E} \left[\log\sum_{i=1}^{n}\frac{w_i(t)}{W_t} \text{exp} \left( \frac{\beta}{n} \hat y_t(i)\right) | \mathcal{F}_{t-1}\right] \\
    & = \mathbbm{E} \left[\log\sum_{i=1}^{n}\frac{p_i(t) - \frac{\beta}{n}}{1-\beta} \text{exp} \left( \frac{\beta}{n} \hat y_t(i) \right) | \mathcal{F}_{t-1}\right]  \quad \text{ definition of $p_i(t)$}\\
    & \leq \mathbbm{E} \left[\log\sum_{i=1}^{n}\frac{p_i(t) - \frac{\beta}{n}}{1-\beta} \left( 1+\frac{\beta}{n} \hat y_t(i) + \frac{(e-2)\beta^2}{n^2} \hat y_t(i)^2 \right) | \mathcal{F}_{t-1}\right] \\
    & \leq \mathbbm{E} \left[\log\left( 1 + \sum_{i=1}^{n} \left[ \frac{\beta }{n (1-\beta)} p_i(t)\hat y_t(i) + \frac{(e-2)\beta^2}{n^2 (1-\beta)} p_i(t)\hat y_t(i)^2 \right] \right) | \mathcal{F}_{t-1}\right] \\
    & \leq \mathbbm{E} \left[ \sum_{i=1}^{n} \left( \frac{\beta }{n (1-\beta)} p_i(t)\hat y_t(i) + \frac{(e-2)\beta^2}{n^2 (1-\beta)} p_i(t)\hat y_t(i)^2  | \mathcal{F}_{t-1} \right) \right]  \\
    & \leq \frac{\beta}{n(1-\beta)}  \mathbbm{E} \left[ \mu_t (\alpha_{i_t})  | \mathcal{F}_{t-1}  \right] + 
    \frac{(e-2)\beta^2}{n^2 (1-\beta)} \sum_{i=1}^{n} \mathbbm{E} \left[ \mu_t(\alpha_i) | \mathcal{F}_{t-1} \right].
\end{align*}
The first inequality in the above holds since $e^x \leq 1 + x + (e-2) x^2$ for $x\in [0,1]$. Here, we have $0\leq \frac{\beta}{n}\hat y_t(i)\leq 1$ because $p_i(t)\geq \frac{\beta}{n}$ and $0\leq y_t(\alpha_i)\leq 1$.
The third inequality ``$\leq$'' in the above holds since $\log (1+x)\leq x$ when $x\geq 0$. 
The last inequality is from Equation \ref{auto_ineq2}, \ref{auto_ineq3}. Take another expectation on both sides, we get 
\begin{equation*}
    \mathbbm{E} \left[\log\frac{W_{t+1}}{W_t} \right] 
    \leq 
    \frac{\beta}{n(1-\beta)}  \mathbbm{E} \left[ \mu_t (\alpha_{i_t}) \right] + 
    \frac{(e-2)\beta^2}{n^2 (1-\beta)} \sum_{i=1}^{n} \mathbbm{E} [\mu_t(\alpha_i)] 
\end{equation*}
By summing the above over $t$, we have
\begin{equation} \label{upperbound}
    \mathbbm{E} \left[\log\frac{W_{T+1}}{W_1} \right] 
    \leq 
    \frac{\beta}{n(1-\beta)}  \sum_{t=1}^T \mathbbm{E} \left[ \mu_t (\alpha_{i_t})  \right]  + 
    \frac{(e-2)\beta^2}{n^2 (1-\beta)} \sum_{t=1}^T \sum_{i=1}^{n} \mathbbm{E} [\mu_t(\alpha_i)] 
\end{equation}

Combining the lower bound (Equation \ref{lowerbound}) and upper bound (Equation \ref{upperbound}) of $\mathbbm{E} \left[\log\frac{W_{T+1}}{W_1} \right]$ , we get for every $i=1,\dots, n$,
\begin{align}\label{almost_auto}
    & \frac{\beta}{n} \sum_{t=1}^T \mathbbm{E} \left[\mu_t (\alpha_i) \right]  - \log n 
    \leq \frac{\beta}{n(1-\beta)}  \sum_{t=1}^T \mathbbm{E} \left[ \mu_t (\alpha_{i_t})  \right] + 
    \frac{(e-2)\beta^2}{n^2 (1-\beta)} \sum_{t=1}^T \sum_{i=1}^{n} \mathbbm{E} [\mu_t(\alpha_i)] 
\end{align}
Let 
\begin{equation*}
    G_{\max} = \max_{i\in [n]} \sum_{t=1}^T \mathbbm{E}\left[   \mu_t (\alpha_i) \right]
\end{equation*}
Since Equation \ref{almost_auto} holds for any $i$, we have 
\begin{equation} \label{temp}
    \frac{\beta}{n} G_{\max}  - \log n 
    \leq \frac{\beta}{n(1-\beta)} \sum_{t=1}^T \mathbbm{E} \left[ \mu_t (\alpha_{i_t})  \right] + 
        \frac{(e-2)\beta^2}{n (1-\beta)} G_{\max}
\end{equation}
Equation \ref{temp} can be further simplified as
\begin{equation*}
    G_{\max}  - \sum_{t=1}^T \mathbbm{E} \left[ \mu_t (\alpha_{i_t})  \right] 
    \leq
    (e-1) \beta G_{\max} + \frac{(1-\beta) n \log n}{\beta}
\end{equation*}

Since we choose $\beta = \min\left\{ 1, \sqrt{\frac{n \log n}{(e-1)T}} \right\}$ and note that $G_{\max} \leq T$, we get
\begin{equation*}
    G_{\max}  - \sum_{t=1}^T \mathbbm{E} \left[ \mu_t (\alpha_{i_t})  \right] 
    \leq 2\sqrt{(e-1) T n\log n} 
   =  \tilde O(\sqrt{nT}).
\end{equation*}
\end{proof}

\subsection{Proof of Theorem \ref{prop:tune}}\label{app:thm1}

To bound the cumulative regret, we only need to bound Quantity (A) and then combine the results in Lemma \ref{bound_q2}. In the following, we first list some useful lemmas for bounding Quantity (A) for completeness.

\subsubsection{Useful Lemmas}
\begin{lem}
[Proposition 1 in \cite{li2017provably}]
\label{lem:explore}
Define $V_{n+1} = \sum_{t=1}^n X_tX_t^T$, where $X_t$ is drawn IID from some distribution in unit ball $\mathbbm{B}^d$. Furthermore, let $\Sigma := E[X_t X_t^T]$ be the second moment matrix, let $B, \delta_2>0$ be two positive constants. Then there exists positive, universal constants $C_1$ and $C_2$ such that $\lambda_{\min} (V_{n+1}) \geq B$ with probability at least $1-\delta_2$, as long as
\begin{equation*}\label{condition_for_tau}
    n \geq \left( \frac{C_1 \sqrt{d} + C_2 \sqrt{\log(1/\delta_2)}}{\lambda_{\min} (\Sigma)} \right)^2 + \frac{2B}{\lambda_{\min} (\Sigma)}.
\end{equation*}
\end{lem}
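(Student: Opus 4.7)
The plan is three steps: (i) reduce to an operator-norm concentration statement via Weyl's inequality, (ii) bound $\|V_{n+1}-n\Sigma\|_{op}$ by an $\varepsilon$-net argument combined with scalar Hoeffding, and (iii) solve the resulting sufficient condition for $n$. For step (i), write $V_{n+1}-n\Sigma = \sum_{t=1}^{n}(X_tX_t^T - \Sigma)$, a sum of iid symmetric mean-zero matrices. Weyl's inequality gives $\lambda_{\min}(V_{n+1}) \geq n\lambda_0 - \|V_{n+1}-n\Sigma\|_{op}$, where $\lambda_0 := \lambda_{\min}(\Sigma)$. Hence the event $\{\lambda_{\min}(V_{n+1})\geq B\}$ is implied by $\{\|V_{n+1}-n\Sigma\|_{op} \leq n\lambda_0 - B\}$, so it suffices to produce a high-probability bound of this form.

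For step (ii), I would fix a $1/4$-net $\mathcal{N}$ of the unit sphere $S^{d-1}$ with $|\mathcal{N}|\leq 9^d$. For each $u\in\mathcal{N}$, the scalars $Z_t(u) := (u^TX_t)^2 - u^T\Sigma u$ are iid, mean zero, and bounded in $[-1,1]$ since $\|X_t\|\leq 1$. Hoeffding's inequality combined with a union bound over $\mathcal{N}$ yields
$$P\!\left(\max_{u\in\mathcal{N}}\Big|\sum_{t=1}^n Z_t(u)\Big| \geq s\right) \leq 2\cdot 9^d \exp\!\left(-s^2/(2n)\right),$$
and the standard conversion $\|M\|_{op} \leq 2\max_{u\in\mathcal{N}}|u^T M u|$ for symmetric $M$ upgrades this to a tail bound on $\|V_{n+1}-n\Sigma\|_{op}/2$. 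Setting the right-hand side equal to $\delta_2$, solving for $s$, and using $\sqrt{a+b}\leq \sqrt{a}+\sqrt{b}$, I obtain that with probability at least $1-\delta_2$,
$$\|V_{n+1}-n\Sigma\|_{op} \leq C_1'\sqrt{nd}+C_2'\sqrt{n\log(1/\delta_2)}$$
for universal constants $C_1',C_2'>0$.

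Combining (i) and (ii) in step (iii), it suffices to enforce $a\sqrt{n} + B \leq n\lambda_0$ with $a := C_1'\sqrt{d}+C_2'\sqrt{\log(1/\delta_2)}$. I would split this by demanding $a\sqrt{n} \leq n\lambda_0/2$ and $B \leq n\lambda_0/2$ separately, which reduce to $n \geq (2a/\lambda_0)^2$ and $n \geq 2B/\lambda_0$ respectively. Both hold whenever
$$n \geq \Big(\tfrac{2a}{\lambda_0}\Big)^{\!2} + \tfrac{2B}{\lambda_0};$$
renaming $C_1 := 2C_1'$ and $C_2 := 2C_2'$ recovers the stated threshold.

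The delicate part is step (ii): calibrating the net resolution so that the constant in the discretization inequality and the $9^d$ union-bound cost are simultaneously compatible with the target $\sqrt{d}$ dependence. An alternative route through the matrix Bernstein inequality (Tropp) would give even a $\sqrt{\log d}$ dependence in place of $\sqrt{d}$, which trivially implies the stated form; but the net argument above is self-contained, exploits the hard boundedness $\|X_t\|\leq 1$ directly, and produces the correct functional form without extra logarithmic factors.
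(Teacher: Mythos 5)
Your proof is correct. Note first that the paper itself does not prove this statement at all: it is imported verbatim as Proposition~1 of Li et al.\ (2017), whose proof in turn invokes an off-the-shelf concentration bound for the empirical second-moment matrix of independent bounded (sub-Gaussian) rows, in the style of Vershynin's covariance-estimation theorem. Your argument is a correct, self-contained re-derivation of that same concentration phenomenon. Each step checks out: Weyl's inequality gives the reduction in (i); in (ii) the scalars $(u^TX_t)^2 - u^T\Sigma u$ do lie in $[-1,1]$ because $\|X_t\|\leq 1$ and $\|u\|=1$, so Hoeffding applies with the stated exponent, the $(1+2/\varepsilon)^d = 9^d$ net-cardinality bound and the factor-$2$ discretization inequality $\|M\|_{op}\leq \frac{1}{1-2\varepsilon}\max_{u\in\mathcal{N}}|u^TMu|$ at $\varepsilon=1/4$ are the standard ones, and absorbing $\log(2\cdot 9^d)$ into $C_1'\sqrt{d}$ (using $d\geq 1$) plus $\sqrt{a+b}\leq\sqrt{a}+\sqrt{b}$ yields exactly the claimed $C_1'\sqrt{nd}+C_2'\sqrt{n\log(1/\delta_2)}$ form; and the splitting in (iii) is valid since each of the two sufficient conditions is implied by $n$ exceeding their sum. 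The trade-off between your route and the cited one is as you say: quoting a matrix Bernstein or sub-Gaussian-row theorem is shorter and can improve the dimension dependence, while your net argument is elementary, exploits the hard boundedness directly, and reproduces the exact functional form of the threshold in the lemma.
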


\begin{lem}
[Theorem 2 in \cite{abbasi2011improved}]
\label{lem:consis}
For any $\delta < 1$, under our problem setting in Section \ref{sec:prelim}, it holds that for all $t > 0$,
\begin{gather*}
    \norm{\hat \theta_t - \theta^*}_{V_t} \leq \beta_t(\delta), \\
    \forall x \in \mathbb{R}^d, |x^\top (\hat \theta_t - \theta^*)| \leq \norm{x}_{V_t^{-1}} \beta_t(\delta),
\end{gather*}
with probability at least $1-\delta$, where
$$\beta_t(\delta) = \sigma \sqrt{\log\left(\frac{(\lambda + t)^d}{\delta^2 \lambda^d}\right)} + \sqrt{\lambda} S.$$
\end{lem}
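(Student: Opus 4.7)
The plan is to prove this self-normalized confidence bound for the ridge estimator by the now-standard Abbasi-Yadkori, P\'al, Szepesv\'ari decomposition: write $\hat\theta_t - \theta^*$ as a sum of a deterministic regularization bias and a stochastic noise term, control each in the $V_t$-norm, and then use Cauchy--Schwarz to get the second (scalar) inequality from the first.

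\textbf{Step 1: Decomposition of the error.} Starting from $\hat\theta_t = V_t^{-1} \sum_{s=1}^{t-1} X_s Y_s$ with $V_t = \lambda I_d + \sum_{s=1}^{t-1} X_s X_s^\top$ and the observation model $Y_s = X_s^\top \theta^* + \epsilon_s$ (taking $\mu(x)=x$ in the linear bandit setting of Section~\ref{sec:prelim}), I would plug in and simplify to obtain
\begin{equation*}
  \hat\theta_t - \theta^* \;=\; V_t^{-1}\!\sum_{s=1}^{t-1} X_s \epsilon_s \;-\; \lambda V_t^{-1}\theta^*.
\end{equation*}
Taking the $V_t$-norm and applying the triangle inequality gives
\begin{equation*}
  \|\hat\theta_t - \theta^*\|_{V_t} \;\le\; \Bigl\|\sum_{s=1}^{t-1} X_s \epsilon_s\Bigr\|_{V_t^{-1}} + \lambda \|\theta^*\|_{V_t^{-1}}.
\end{equation*}
The second piece is bounded deterministically by $\sqrt{\lambda}\,S$, since $V_t^{-1} \preceq \lambda^{-1} I_d$ implies $\lambda \|\theta^*\|_{V_t^{-1}} \le \sqrt{\lambda}\,\|\theta^*\| \le \sqrt{\lambda}\,S$.

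\textbf{Step 2: Self-normalized bound on the noise term.} Set $S_t = \sum_{s=1}^{t-1} X_s \epsilon_s$. Since $X_s$ is $\mathcal{F}_{s-1}$-measurable and $\epsilon_s$ is conditionally $\sigma^2$-sub-Gaussian given $\mathcal{F}_{s-1}$, $S_t$ is a vector-valued martingale with respect to $\{\mathcal{F}_t\}$. I would invoke the self-normalized concentration inequality for vector-valued martingales (Theorem~1 of Abbasi-Yadkori, P\'al, Szepesv\'ari 2011), which yields that with probability at least $1-\delta$, for all $t \ge 0$ simultaneously,
\begin{equation*}
  \|S_t\|_{V_t^{-1}}^{2} \;\le\; 2\sigma^{2}\,\log\!\left(\frac{\det(V_t)^{1/2}\,\det(\lambda I_d)^{-1/2}}{\delta}\right).
\end{equation*}
This is the key probabilistic ingredient; I would take it as a black box rather than re-deriving it.

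\textbf{Step 3: Closing the log-determinant to an explicit bound.} To express the right-hand side only in terms of $t$, $d$, $\lambda$, and $\delta$, I would bound $\det(V_t)$ deterministically. Because each $\|X_s\|\le 1$, every eigenvalue of $V_t$ is at most $\lambda + t$, so $\det(V_t) \le (\lambda+t)^{d}$, while $\det(\lambda I_d)=\lambda^{d}$. Substituting gives
\begin{equation*}
  \|S_t\|_{V_t^{-1}} \;\le\; \sigma\sqrt{\log\!\left(\frac{(\lambda+t)^{d}}{\delta^{2}\lambda^{d}}\right)}.
\end{equation*}
Combining with Step~1 yields the first claim $\|\hat\theta_t - \theta^*\|_{V_t} \le \beta_t(\delta)$.

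\textbf{Step 4: The pointwise inequality.} For any $x \in \mathbb{R}^d$, Cauchy--Schwarz in the inner product $\langle u,v\rangle_{V_t} = u^\top V_t v$ gives $|x^\top(\hat\theta_t-\theta^*)| = |\langle x, \hat\theta_t-\theta^*\rangle| \le \|x\|_{V_t^{-1}}\,\|\hat\theta_t-\theta^*\|_{V_t}$, and the first part of the lemma then supplies the factor $\beta_t(\delta)$. Both bounds hold on the same event, so a single application of the self-normalized inequality suffices.

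The main obstacle is really just the self-normalized martingale inequality in Step~2: this is the deep part of the argument (relying on the method of mixtures / Laplace transform over a Gaussian prior), and I would simply cite it. Everything else is algebra (Step~1), a one-line trace/eigenvalue bound (Step~3), and Cauchy--Schwarz (Step~4).
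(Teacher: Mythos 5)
Your proof is correct and is exactly the standard argument behind this result: the paper itself offers no proof and simply imports it as Theorem~2 of Abbasi-Yadkori et al.\ (2011), whose derivation is the decomposition into noise plus regularization bias, the self-normalized martingale inequality, the determinant--trace bound $\det(V_t)\le(\lambda+t)^d$, and Cauchy--Schwarz, just as you lay out. The constants also check out: the factor $2\sigma^2$ and the powers $1/2$ in the self-normalized bound combine to give precisely the $\delta^2$ and $\lambda^d$ appearing in $\beta_t(\delta)$.
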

In this subsection we denote $\alpha^*(\delta) \coloneqq \beta_T(\delta)$.

\begin{lem}
[\cite{filippi2010parametric}]
\label{lem:sum}
Let $\lambda > 0$, and $\{x_i\}_{i=1}^t$ be a sequence in $\mathbb{R}^d$ with $\norm{x_i} \leq 1$, then we have
\begin{gather*}
    \sum_{s=1}^t \norm{x_s}_{V_s^{-1}}^2 \leq 2 \log \left( \frac{\textup{det}(V_{t+1})}{\textup{det}(\lambda I)}\right) \leq 2d \log \left( 1  + \frac{t}{\lambda}\right), \\
    \sum_{s=1}^t \norm{x_s}_{V_s^{-1}} \leq \sqrt{T\left(\sum_{s=1}^t \norm{x_s}_{V_s^{-1}}^2\right)} \leq \sqrt{2dt \log \left( 1  + \frac{t}{\lambda}\right)}.
\end{gather*}
\end{lem}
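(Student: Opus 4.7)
The plan is to prove this in the standard two-step form: first handle the squared-norm sum via a determinantal telescoping argument, then derive the linear-norm sum by Cauchy--Schwarz. The quantity $\sum_s \norm{x_s}_{V_s^{-1}}^2$ is naturally controlled by $\log\det V_{t+1}$ because adding a rank-one update produces a multiplicative change in the determinant that is exactly $1+\norm{x_s}_{V_s^{-1}}^2$.

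First I would apply the matrix determinant lemma to the recursion $V_{s+1}=V_s+x_sx_s^\top$, which gives the identity $\det(V_{s+1})=\det(V_s)\bigl(1+\norm{x_s}_{V_s^{-1}}^2\bigr)$. Assuming (as is customary, and as holds in all applications in the paper) that $\lambda\geq 1$, one has $\norm{x_s}_{V_s^{-1}}^2\leq \norm{x_s}^2/\lambda_{\min}(V_s)\leq 1/\lambda\leq 1$. On the interval $[0,1]$ the elementary inequality $u\leq 2\log(1+u)$ holds, so $\norm{x_s}_{V_s^{-1}}^2 \leq 2\log\bigl(1+\norm{x_s}_{V_s^{-1}}^2\bigr) = 2\log\bigl(\det(V_{s+1})/\det(V_s)\bigr)$. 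Summing telescopically over $s=1,\dots,t$ collapses the right-hand side to $2\log\bigl(\det(V_{t+1})/\det(\lambda I)\bigr)$, yielding the first inequality.

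Next I would pass from $\log\det$ to an explicit $d\log(1+t/\lambda)$ bound using the AM--GM / determinant--trace inequality: $\det(V_{t+1})\leq \bigl(\mathrm{tr}(V_{t+1})/d\bigr)^d$. Since $\mathrm{tr}(V_{t+1})=d\lambda+\sum_{s=1}^t\norm{x_s}^2\leq d\lambda+t$, this gives $\det(V_{t+1})\leq (\lambda+t/d)^d$, hence $\log\bigl(\det(V_{t+1})/\det(\lambda I)\bigr)\leq d\log(1+t/(d\lambda))\leq d\log(1+t/\lambda)$, finishing the chain. For the second claim I would simply apply Cauchy--Schwarz to $\sum_{s=1}^t \norm{x_s}_{V_s^{-1}}\cdot 1$, obtaining the factor $\sqrt{t}$ (the $T$ in the statement is a harmless overestimate when $t\leq T$), and then substitute the first bound.

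The only mildly delicate point is the hypothesis $\norm{x_s}_{V_s^{-1}}^2\leq 1$ needed for $u\leq 2\log(1+u)$. If $\lambda<1$, the cleanest fix is to replace $\norm{x_s}_{V_s^{-1}}^2$ by $\min\{1,\norm{x_s}_{V_s^{-1}}^2\}$ inside the sum and absorb the resulting constant factor; in the regime used in the paper ($\lambda$ taken from a tuning set containing values $\geq 1$, or the standard choice $\lambda=1$) this issue does not even arise, so I would simply note the assumption and proceed as above.
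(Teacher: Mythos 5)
The paper does not actually prove this lemma; it is imported verbatim as a known result (the ``elliptical potential'' lemma) with a citation to Filippi et al., so there is no in-paper argument to compare against. Your proof is the standard one for this result --- matrix determinant lemma to get $\det(V_{s+1})=\det(V_s)\bigl(1+\norm{x_s}_{V_s^{-1}}^2\bigr)$, the elementary bound $u\leq 2\log(1+u)$ on $[0,1]$, telescoping, the determinant--trace (AM--GM) step, and Cauchy--Schwarz --- and every step is correct. You are also right to flag the $\lambda\geq 1$ issue, and it is worth emphasizing that this is not merely ``mildly delicate'': as literally stated, with only $\lambda>0$ assumed, the first inequality is false (take $t=1$, $\norm{x_1}=1$, $\lambda=0.01$; then $\norm{x_1}_{V_1^{-1}}^2=100$ while $2\log(101)\approx 9.2$). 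The correct general statement replaces $\norm{x_s}_{V_s^{-1}}^2$ by $\min\{1,\norm{x_s}_{V_s^{-1}}^2\}$, exactly as you propose. This matters for the paper because its experimental tuning set for $\lambda$ includes $0.01$ and $0.1$, and the appendix invokes the lemma with $\lambda=\lambda_{\min}$ from that set; the downstream regret bounds survive because the fix costs only a constant factor, but your observation identifies a genuine imprecision in the statement as imported. Your handling of the $\sqrt{T}$ versus $\sqrt{t}$ discrepancy in the second display is likewise correct: Cauchy--Schwarz gives the factor $\sqrt{t}$, and the $T$ in the lemma is simply a loose upper bound.
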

 
\begin{lem}
[\cite{agrawal2013thompson}]
\label{lem:normal}
    For a Gaussian random variable $Z$ with mean $m$ and variance $\sigma^2$, for any $z \geq 1$,
    $$P(|Z-m| \geq z \sigma) \leq \frac{1}{\sqrt{\pi}z} e^{-z^2/2}.$$
\end{lem}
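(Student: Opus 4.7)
The lemma is a standard Gaussian tail bound, and the plan is to prove it by the classical Mills' ratio argument. First I would reduce to the standard normal: set $W = (Z-m)/\sigma$, which is $N(0,1)$, so that $P(|Z-m| \geq z\sigma) = P(|W| \geq z)$. By symmetry of the standard normal density, $P(|W| \geq z) = 2 P(W \geq z) = \frac{2}{\sqrt{2\pi}}\int_z^\infty e^{-t^2/2}\,dt$.

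The core inequality I would invoke is the Mills ratio bound
\[
\int_z^\infty e^{-t^2/2}\,dt \;\leq\; \frac{1}{z}\, e^{-z^2/2},\qquad z > 0,
\]
which follows immediately from the pointwise estimate $1 \leq t/z$ on $[z,\infty)$ together with the exact evaluation $\int_z^\infty t\,e^{-t^2/2}\,dt = e^{-z^2/2}$; an equivalent derivation is integration by parts against the antiderivative $-e^{-t^2/2}/t$, which actually yields the slightly sharper identity $\int_z^\infty e^{-t^2/2}\,dt = \frac{1}{z}e^{-z^2/2} - \int_z^\infty t^{-2} e^{-t^2/2}\,dt$. Combining this estimate with the standardization step gives a tail bound of the claimed form $\frac{c}{z}\,e^{-z^2/2}$ for an explicit constant $c$. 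To match exactly the constant $1/\sqrt{\pi}$ stated in the lemma one uses the sharper form of Mills' ratio recorded in \cite{agrawal2013thompson}, which exploits the hypothesis $z \geq 1$; I would simply cite that reference for the numerical constant rather than rederive it.

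There is essentially no obstacle here: the whole argument is a short calculus exercise, and the only mild care is in pinning down the prefactor, since the stated $1/\sqrt{\pi}$ is tighter than what the one-line Mills' ratio computation delivers directly (that one-line version naturally gives $\sqrt{2/\pi}\cdot z^{-1} e^{-z^2/2}$). The role of this lemma in the paper is auxiliary, as it is used only as a black box when bounding the deviation of a posterior Gaussian sample from its mean in the LinTS branch of the analysis behind Theorem \ref{prop:tune}; for that downstream purpose, any bound of the form $O(z^{-1} e^{-z^2/2})$ would in fact be sufficient, so I would not spend additional effort sharpening the constant beyond what \cite{agrawal2013thompson} already records.
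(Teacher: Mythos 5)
The paper offers no proof of this lemma: it is imported verbatim from \cite{agrawal2013thompson} as a black box, so there is no in-paper argument to compare yours against. Your standardization-plus-Mills'-ratio plan is the right elementary route, and your computation is correct as far as it goes: it yields $P(|Z-m|\ge z\sigma)\le \sqrt{2/\pi}\,z^{-1}e^{-z^2/2}$. The genuine problem is the final step you defer to the reference. There is no ``sharper form of Mills' ratio'' that recovers the constant $1/\sqrt{\pi}$ for the \emph{two-sided} tail: the true tail satisfies $P(|Z-m|\ge z\sigma)\sim \sqrt{2/\pi}\,z^{-1}e^{-z^2/2}$ as $z\to\infty$, and $\sqrt{2/\pi}>1/\sqrt{\pi}$, so the displayed inequality actually fails for moderate $z$ (at $z=2$ the left-hand side is $\approx 0.0455$ while the right-hand side is $\approx 0.0382$; it already fails for $z\gtrsim 1.15$). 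The bound with constant $1/\sqrt{\pi}$ is valid only for the one-sided tail, via $P(Z-m\ge z\sigma)\le \frac{1}{\sqrt{2\pi}\,z}e^{-z^2/2}\le\frac{1}{\sqrt{\pi}\,z}e^{-z^2/2}$; the two-sided version as quoted here and in the source is off by a factor of $\sqrt{2}$, apparently from mishandling the substitution $t=\sqrt{2}u$ when translating Abramowitz--Stegun 7.1.13 into probabilist's normalization. So you cannot ``cite the reference for the numerical constant''---the constant is not salvageable. The honest fix is to restate the lemma with $\sqrt{2}/(\sqrt{\pi}z)$ (or any larger constant) in place of $1/(\sqrt{\pi}z)$, at which point your one-line Mills computation \emph{is} the complete proof; and, as you correctly observe, this is harmless downstream, since the LinTS branch of Theorem \ref{prop:tune} uses the bound only up to constants.
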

\subsubsection{Formal Proof}
\begin{proof} 
(1). Here we would use LinUCB and LinTS for the detailed proof, and note that regret bound of all other UCB and TS algorithms could be similarly deduced. Since $\alpha^*$ in our regret decomposition could be arbitrary element in $J$, here we simply take $\alpha^* = \min_ {\alpha \in J} \alpha$.
For LinUCB, since the Lemma \ref{lem:consis} holds for any sequence $(x_1,\dots,x_t)$, and hence we have that with probability at least $1-\delta$,
$$\norm{\hat \theta - \theta}_{V_t} \leq \beta_t(\delta) \leq \alpha(t,\delta),$$
where $\alpha(T,\delta)$ is the theoretical optimal exploration rate at round $t$ we denoted in Eqn. \eqref{theory_explore} with probability parameter $\delta$. And we would omit $\delta$ for simplicity. Recall that for $t > T_1$, we denote the feature vector pulled at round $t$ as $X_t$, i.e.
$$X_t = \argmax_{x \in \mathcal{A}_t} x^\top \hat \theta_t + \alpha_{i_t} \norm{x}_{V_t^{-1}}, \; X_t = X_t(\alpha_{i_t} | \mathcal{F}_{t-1}).$$
And we also define $\tilde X_t = X_t(\alpha^*|\mathcal{F}_{t-1})$, i.e.
$$\tilde X_t = \argmax_{x \in \mathcal{A}_t} x^\top \hat \theta_t + \alpha^* \norm{x}_{V_t^{-1}}.$$
And it turns out that the Quantity (A) can be represented by 
$${\mathbbm{E}\left [ \sum_{t=T_1+1}^T \left(\mu\left({x_{t,*}}^T \theta\right) - \mu\left(X_t (\alpha^*| \mathcal{F}_{t-1})^T \theta \right) \right) \right ] } =  \mathbbm{E}\left [ \sum_{t=T_1+1}^T \left(\mu\left({x_{t,*}}^T \theta\right) - \mu\left(\tilde X_t^T \theta \right) \right) \right].
$$
According to the proof of LinUCB we could similarly argue that
\begin{align*}
    x_{t,*}^\top \theta - \tilde X_t^\top \theta &\leq \alpha^* \left(\norm{\tilde X_t}_{V_t^{-1}} - \norm{\tilde x_{t,*}}_{V_t^{-1}}\right) + \norm{x_{t,*} - \tilde X_t}_{V_t^{-1}} \norm{\hat \theta_t - \theta}_{V_t} \\ 
    &\leq (\alpha^*+\alpha(T)) \norm{\tilde X_t}_{V_t^{-1}} + \alpha(T) \norm{x_{t,*}}_{V_t^{-1}}.
\end{align*}
In conclusion, we have that
$$\sum_{t=T_1+1}^T \left(\mu\left({x_{t,*}}^T \theta\right) - \mu\left(\tilde X_t^T \theta \right) \right) = \widetilde O \left( \sum_{t=T_1+1}^T \norm{\tilde X_t}_{V_t^{-1}} + \sum_{t=T_1+1}^T \norm{x_{t,*}}_{V_t^{-1}}\right). $$
By Lemma \ref{lem:sum} and choosing $T_1 = T^{2/3}$, it holds that,
$$\sum_{t=T_1+1}^T \norm{x_{t,*}}_{V_t^{-1}}, \sum_{t=T_1+1}^T \norm{\tilde X_t}_{V_t^{-1}} = O(T \times T^{-1/3}) = O(T^{2/3}).$$
Secondly, According to \cite{agrawal2013thompson}, we know that for LinTS we have that
$$\mathbb{E}\left[\sum_{t=T_1+1}^T \left(\mu\left({x_{t,*}}^T \theta\right) - \mu\left(\tilde X_t^T \theta \right) \right)\right] = \widetilde O \left( \sum_{t=T_1+1}^T \norm{\tilde X_t}_{V_t^{-1}} + \norm{x_{t,*}}_{V_t^{-1}}\right). $$
But for completeness we still offer an alternative proof for this equality:
\begin{align}
\tilde X_{t}^\top \hat \theta_t + \alpha^* \norm{\tilde X_{t}}_{V_t^{-1}} Z_t &\geq x_{t,*}^\top \theta + \alpha^* \norm{x_{t,*}}_{V_t^{-1}}Z_{t,*} + x_{t,*}^\top (\hat \theta_t - \theta) \nonumber \\
& \geq x_{t,*}^\top \theta + \alpha^* \norm{x_{t,*}}_{V_t^{-1}}Z_{t,*} +  \norm{x_{t,*}}_{V_t^{-1}} \norm{\hat \theta_t - \theta}_{V_t} \nonumber \\
& \geq x_{t,*}^\top \theta + (\alpha^* Z_{t,*} - \alpha(T)) \norm{x_{t,*}}_{V_t^{-1}}, \nonumber
\end{align}
where $Z_t$ and $Z_{t,*}$ are IID normal random variables, $\forall t$. Therefore, it holds that,
\begin{gather*}
\tilde X_t^\top \theta \geq x_{t,*}^\top \theta +    (\alpha^*Z_{t,*} - \alpha(T)) \norm{x_{t,*}}_{V_t^{-1}} - \alpha^* \norm{X_t}_{V_t^{-1}} Z_t + X_t^\top (\theta - \hat \theta_t), \\
(x_{t,*}-X_t)^\top \theta \leq (\alpha(T) +\alpha^*Z_t) \norm{X_t}_{V_t^{-1}} + (\alpha(T) - \alpha^*Z_{t,*}) \norm{x_{t,*}}_{V_t^{-1}} = K_t,
\end{gather*}
where $K_t$ is normal random variable with 
$$\mathbb{E}(K_t) \leq 2 \alpha(T) T^{-1/3}, \; \text{SD}(K_t) \leq \sqrt{2} \alpha^* T^{-1/3}.$$
Consequently, we have
\begin{gather*}
\sum_{t=T_1+1}^T \left({x_{t,*}}^T \theta - \tilde X_t^T \theta \right)  \leq \sum_{t=T_1+1}^T K_t \coloneqq K \\ \mathbb{E}(K) = 2 \alpha(T) T^{2/3} = \widetilde O(T^{4/7}), \; \text{SD}(K) \leq \sqrt{2} \alpha^*  T^{1/6} = O(T^{1/6}).
\end{gather*}
We have
$$P(K > (2 \alpha^* + \sqrt{2}) T^{2/3}) \leq \frac{1}{c\sqrt{\pi}\sqrt{T}}e^{-c^2T/2}.$$
This probability upper bound is ultra small and hence negligible. Therefore, we not only prove the expected cumulative regret could be controlled, but also provide a probability bound.

Note we could use this procedure to bound the regret for other UCB and TS bandit algorithms, since most of the proofs for generalized linear bandits are closely related to the rate of $\sum_{t=T_1+1}^T \norm{\tilde X_t}_{V_t^{-1}}$. Finally, the cost of pure exploration is also of scale $\tilde O(T^{2/3})$, which concludes the proof.

(2). Here we simply take $\alpha^* = \min_ {\alpha \in J} \alpha$. We also use LinUCB as an example here since other UCB-based algorithms with exploration hyper-parameters could be identically bounded. Based on the definition of $X_t$ and $\tilde X_t$, we have that,
\begin{align*}
    X_t^\top \hat \theta_t + \alpha_{i_t} \norm{X_t}_{V_t^{-1}} &=X_t^\top \hat \theta_t + (\alpha_{i_t} - \alpha^*) \norm{X_t}_{V_t^{-1}}  + \alpha^* \norm{X_t}_{V_t^{-1}} \\ 
    &\geq \tilde X_t^\top \hat \theta_t + (\alpha_{i_t} - \alpha^*) \norm{\tilde X_t}_{V_t^{-1}}  + \alpha^* \norm{\tilde X_t}_{V_t^{-1}} \\ 
    &\geq X_t^\top \hat \theta_t + (\alpha_{i_t} - \alpha^*) \norm{\tilde X_t}_{V_t^{-1}}  + \alpha^* \norm{ X_t}_{V_t^{-1}},
\end{align*}
which implies that
$$(\alpha_{i_t} - \alpha^*) \norm{X_t}_{V_t^{-1}} \geq (\alpha_{i_t} - \alpha^*) \norm{\tilde X_t}_{V_t^{-1}}.$$
Since we have that $\alpha_{i_t} \geq \alpha^*$, and when $\alpha_{i_t} > \alpha^*$ it holds that 
\begin{align} \norm{X_t}_{V_t^{-1}} \geq \norm{\tilde X_t}_{V_t^{-1}}, \quad \; \forall \, t > 0. \label{eq:sd}
\end{align}
On the other hand, when $\alpha_{i_t} = \alpha^*$ it holds that $X_t = \tilde X_t$, which consequently implies that
\begin{align*} \norm{X_t}_{V_t^{-1}} = \norm{\tilde X_t}_{V_t^{-1}}, \quad \; \forall \, t > 0. 
\end{align*}
According to the proof of LinUCB we could similarly argue that
\begin{align*}
    x_{t,*}^\top \theta - \tilde X_t^\top \theta &\leq \alpha^* \left(\norm{\tilde X_t}_{V_t^{-1}} - \norm{\tilde x_{t,*}}_{V_t^{-1}}\right) + \norm{x_{t,*} - \tilde X_t}_{V_t^{-1}} \norm{\hat \theta_t - \theta}_{V_t} \\ 
    &\leq 2\alpha^* \norm{\tilde X_t}_{V_t^{-1}},
\end{align*}
since $\alpha(T) \leq \alpha^*$. Therefore, we have
$$\sum_{t=T_1+1}^T \left(\mu\left({x_{t,*}}^T \theta\right) - \mu\left(\tilde X_t^T \theta \right) \right) \leq  2 \alpha^* \sum_{t=T_1+1}^T \norm{\tilde X_t}_{V_t^{-1}} \leq  2 \alpha^* \sum_{t=T_1+1}^T \norm{ X_t}_{V_t^{-1}} = \tilde O(\sqrt{T}). $$
\begin{remark} \label{rem:prop2}
(1) Intuitively, we can deduce Eqn. $\eqref{eq:sd}$ by choosing $\alpha^* = \min_{\alpha \in J} \alpha$, i.e. $\alpha^*$ is no larger than any exploration hyper-parameter candidate since the best feature vector solved in UCB algorithms tends to have larger value of $\norm{\cdot}_{V_t^{-1}}$ at time $t$ if we enlarge $\alpha$. In other words, under larger $\alpha$ we would more likely to choose arm with greater uncertainty quantified by the value of $\norm{\cdot}_{V_t^{-1}}$. (2) On the other hand, for TS bandit algorithms we would expect the similar result: the feature vectors of superior arms tend to have smaller value of $\norm{\cdot}_{V_t^{-1}}$ since the value of $\norm{\cdot}_{V_t^{-1}}$ depicts the standard deviation of the feature vector. And the direction of the optimal arm should be frequently explored in the long run and hence its standard deviation is expected to be smaller than other inferior arms. By enlarging $\alpha$, we would have more chance to choose those sub-optimal arm with larger standard deviation and smaller estimated reward, which means results in Eqn. \eqref{eq:sd} could happen with high probability.
\end{remark}
And this concludes the proof.

(3). Here we would use LinUCB and LinTS for the detailed proof, and note that regret bound of all other UCB and TS algorithms could be similarly deduced. W.l.o.g. we take $\alpha^* = \min_{\alpha \in J} \alpha$

For LinUCB, since the Lemma \ref{lem:consis} holds for any sequence $(x_1,\dots,x_t)$, and hence we have that with probability at least $1-\delta$,
$$\norm{\hat \theta - \theta}_{V_t} \leq \beta_t(\delta) \leq \alpha(t,\delta).$$
And we would omit $\delta$ for simplicity. Recall that for $t > T_1$, we denote the feature vector pulled at round $t$ as $X_t$, i.e.
$$X_t = \argmax_{x \in \mathcal{A}_t} x^\top \hat \theta_t + \alpha_{i_t} \norm{x}_{V_t^{-1}}, \; X_t = X_t(\alpha_{i_t} | \mathcal{F}_{t-1}).$$
And we also define $\tilde X_t = X_t(\alpha^*|\mathcal{F}_{t-1})$, i.e.
$$\tilde X_t = \argmax_{x \in \mathcal{A}_t} x^\top \hat \theta_t + \alpha^* \norm{x}_{V_t^{-1}}.$$
And it turns out that the Quantity (A) can be represented by 
$${\mathbbm{E}\left [ \sum_{t=T_1+1}^T \left(\mu\left({x_{t,*}}^T \theta\right) - \mu\left(X_t (\alpha^*| \mathcal{F}_{t-1})^T \theta \right) \right) \right ] } =  \mathbbm{E}\left [ \sum_{t=T_1+1}^T \left(\mu\left({x_{t,*}}^T \theta\right) - \mu\left(\tilde X_t^T \theta \right) \right) \right].
$$
Note the selection of $a_t$ in LinUCB implies that
$$x_{t,*}^\top \hat \theta_t + \alpha_{i_t} \norm{x_{t,*}}_{V_t^{-1}} \leq X_{t}^\top \hat \theta_t + \alpha_{i_t} \norm{X_{t}}_{V_t^{-1}}.$$
Therefore, we have
\begin{align}
X_{t}^\top \hat \theta_t + \alpha_{i_t} \norm{X_{t}}_{V_t^{-1}} &\geq x_{t,*}^\top \theta + \alpha_{i_t} \norm{x_{t,*}}_{V_t^{-1}} + x_{t,*}^\top (\hat \theta_t - \theta) \nonumber \\
& \geq x_{t,*}^\top \theta + \alpha_{i_t} \norm{x_{t,*}}_{V_t^{-1}} - \norm{x_{t,*}}_{V_t^{-1}} \norm{\hat \theta_t - \theta}_{V_t} \nonumber \\
& \geq x_{t,*}^\top \theta + (\alpha_{i_t} - \alpha(T)) \norm{x_{t,*}}_{V_t^{-1}}. \label{eq:prop1}
\end{align}
Therefore, it holds that,
\begin{gather*}
X_t^\top \theta \geq x_{t,*}^\top \theta +    (\alpha_{i_t} - \alpha(T)) \norm{x_{t,*}}_{V_t^{-1}} - \alpha_{i_t} \norm{X_t}_{V_t^{-1}} + X_t^\top (\theta - \hat \theta_t), \\
(x_{t,*}-X_t)^\top \theta \leq (\alpha(T) +\alpha_{i_t}) \norm{X_t}_{V_t^{-1}} + (\alpha(T) - \alpha_{i_t}) \norm{x_{t,*}}_{V_t^{-1}},
\end{gather*}
By Lemma \ref{lem:explore}, we have as long as $T_1 = O(T^{4/7})$, it holds that
$$(x_{t,*}-X_t)^\top \theta \leq 2 \alpha(T) T^{-2/7}, \; t > T_1.$$
Similarly, we could also deduce that
$$(x_{t,*}-\tilde X_t)^\top \theta \leq 2 \alpha(T) T^{-2/7}, \; t > T_1.$$
Firstly, we take $\mathcal{A}_t = \{x : \, \norm{x} \leq a^2\}, a > 0$ for example, then it holds that $x_{t,*} = \theta / \norm{\theta}$, and consequently
$$\norm{x_{t,*} - X_t}, \norm{x_{t,*} - \tilde X_t} \leq \sqrt{4 a \alpha(T) T^{-2/7}/\norm{\theta}} = O(\sqrt{\alpha(T)}T^{-1/7}).$$
Please refer to Figure \ref{fig:circle} (a) for a 2D visual explanation, and similar argument could be made for higher dimension cases.
\begin{figure}[t]
\begin{minipage}[b]{0.495\linewidth}
    \centering
    \includegraphics[width = 0.95\textwidth]{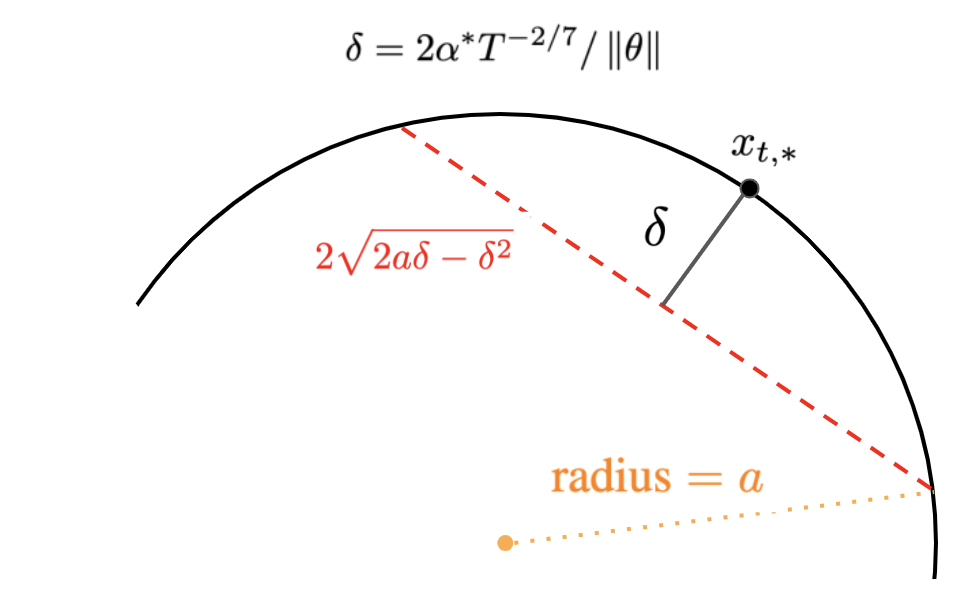}
    \vspace{-4mm}
    \captionof*{figure}{(a)}
\end{minipage}
\begin{minipage}[b]{0.495\linewidth}
    \centering
    \includegraphics[width = 0.95\textwidth]{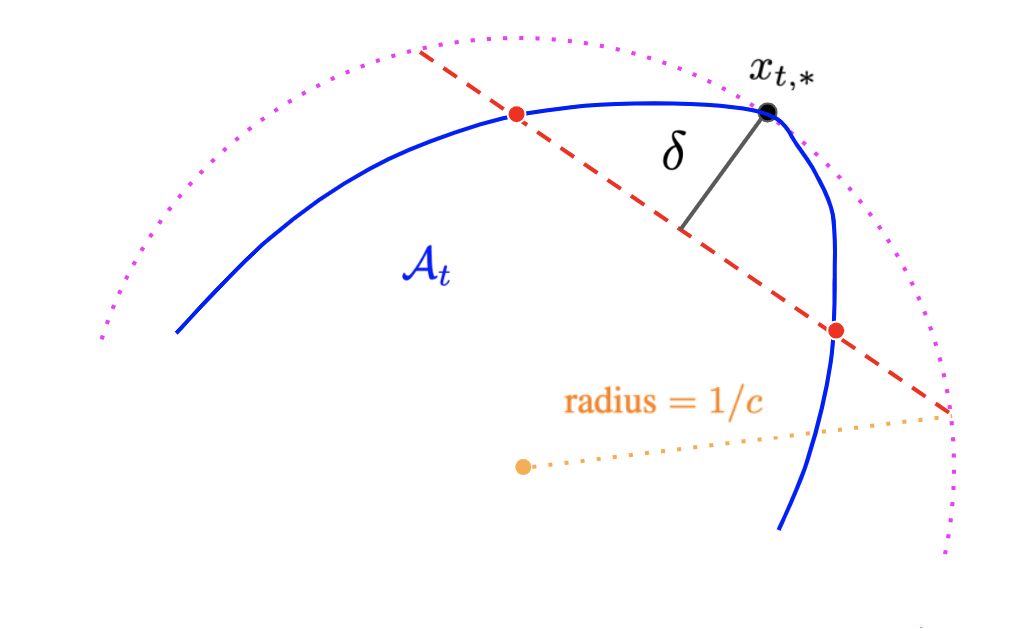}
    \vspace{-4mm}
    \captionof*{figure}{(b)}
\end{minipage}
\caption{Illustration of our argument in 2D case: (a). explanation of the Eqn. \ref{eq:circle1}, where the red line denotes the maximum distance between $X_t$ and $X_t^*$; (b). visualization on how to cover the neighborhood of $x_{t,*}$ on $\mathcal{A}_t$, where the blue line denotes the boundary of $\mathcal{A}_t$ and the pink dashed circle is the outer cover with radius $1/c$. In this case, the length of red line gives an upper bound of the maximum distance between $X_t$ and $X_t^*$.}\label{fig:circle}
\end{figure}
And this implies that
\begin{align}
    \norm{X_t - \tilde X_t} = O(\sqrt{\alpha(T)}T^{-1/7}). \label{eq:circle1}
\end{align}
Generally, if $\mathcal{A}_t$ is some convex set, and we know there exists a small neighborhood of the optimal feature vector $x_{t,*}\in \mathcal{A}_t$ such that the (sectional) principal curvature in this neighborhood can be lowered bounded by some positive constant $c>0$. Then we can cover this neighborhood by a $d$-dimensional sphere with radius $1/c$ (Figure \ref{fig:circle} (b) for 2D visualization), and hence we could similarly deduce the above result. Note that for the example $\mathcal{A}_t = \{x : \, \norm{x} \leq a\}, a > 0$, all the principal curvatures are equal to $1/a$ anywhere on this sphere, and hence it is a special case.
The rest of argument is based on the proof outline of UCB bandits. According to the proof of LinUCB we could similarly argue that
\begin{align*}
    x_{t,*}^\top \theta - \tilde X_t^\top \theta &\leq \alpha^* \left(\norm{\tilde X_t}_{V_t^{-1}} - \norm{\tilde x_{t,*}}_{V_t^{-1}}\right) + \norm{x_{t,*} - \tilde X_t}_{V_t^{-1}} \norm{\hat \theta_t - \theta}_{V_t} \\ 
    &\leq (\alpha^*+\alpha(T)) \norm{\tilde X_t}_{V_t^{-1}} + \alpha(T)\norm{x_{t,*}}_{V_t^{-1}}.
\end{align*}
In conclusion, we have that
\begin{align}
\sum_{t=T_1+1}^T \left(\mu\left({x_{t,*}}^T \theta\right) - \mu\left(\tilde X_t^T \theta \right) \right) = \widetilde O \left( \sum_{t=T_1+1}^T \norm{\tilde X_t}_{V_t^{-1}} +\norm{x_{t,*}}_{V_t^{-1}} \right). \label{eq:tildex} 
\end{align}
Note that we have that,
\begin{align}
\sum_{t=T_1+1}^T \norm{\tilde X_t}_{V_t^{-1}} \leq \sum_{t=T_1+1}^T \norm{X_t}_{V_t^{-1}} + \sum_{t=T_1+1}^T \norm{X_t - \tilde X_t}_{V_t^{-1}} \nonumber \\
\sum_{t=T_1+1}^T \norm{x_{t,*}}_{V_t^{-1}} \leq \sum_{t=T_1+1}^T \norm{X_t}_{V_t^{-1}} + \sum_{t=T_1+1}^T \norm{X_t - x_{t,*}}_{V_t^{-1}}, \nonumber
\end{align}
where the first quantity could be easily bounded by Lemma \ref{lem:sum}, i.e.
$$\sum_{t=T_1+1}^T \norm{X_t}_{V_t^{-1}} \leq \sqrt{2d T \log \left(1 + \frac{T}{\lambda} \right)} = \widetilde O(\sqrt{T}).$$
And the second quantity could be bounded as
$$\sum_{t=T_1+1}^T \norm{X_t - \tilde X_t}_{V_t^{-1}} \leq \sum_{t=T_1+1}^T \norm{X_t - \tilde X_t} \sqrt{\lambda_{\min}(V_t^{-1})} \lesssim \sum_{t=T_1+1}^T \sqrt{\alpha^*} T^{-3/7} = \widetilde O (T^{4/7}),$$
with high probability. Then by taking $\delta = \delta/T^{3/7}$ we can easily prove that
$$\mathbb{E} \left[ \sum_{t=T_1+1}^T \norm{X_t - \tilde X_t}_{V_t^{-1}} \right] = \widetilde O (T^{4/7}).$$
Similarly, it holds that
$$\mathbb{E} \left[ \sum_{t=T_1+1}^T \norm{X_t - x_{t,*}}_{V_t^{-1}} \right] = \widetilde O (T^{4/7}).$$
Therefore, we have that
$$\sum_{t=T_1+1}^T \left(\mu\left({x_{t,*}}^T \theta\right) - \mu\left(\tilde X_t^T \theta \right) \right) = \widetilde O (T^{4/7}). $$
For LinTS, the proof could also be similarly deduced. And we modify the definition of $\tilde X_t$ as
$$\tilde X_t = \argmax_{x \in \mathcal{A}_t} x^\top \hat \theta_t + \alpha^* \norm{x}_{V_t^{-1}}\tilde Z_t,$$
where $Z_t$ is a standard normal random variable. And we could similarly show that:
\begin{align}
X_{t}^\top \hat \theta_t + \alpha_{i_t} \norm{X_{t}}_{V_t^{-1}} Z_t &\geq x_{t,*}^\top \theta + \alpha_{i_t} \norm{x_{t,*}}_{V_t^{-1}}Z_{t,*} + x_{t,*}^\top (\hat \theta_t - \theta) \nonumber \\
& \geq x_{t,*}^\top \theta + \alpha_{i_t} \norm{x_{t,*}}_{V_t^{-1}}Z_{t,*} +  \norm{x_{t,*}}_{V_t^{-1}} \norm{\hat \theta_t - \theta}_{V_t} \nonumber \\
& \geq x_{t,*}^\top \theta + (\alpha_{i_t} Z_{t,*} - \alpha(T)) \norm{x_{t,*}}_{V_t^{-1}}. \nonumber
\end{align}
Therefore, it holds that,
\begin{gather}
X_t^\top \theta \geq x_{t,*}^\top \theta +    (\alpha_{i_t}Z_{t,*} - \alpha(T)) \norm{x_{t,*}}_{V_t^{-1}} - \alpha_{i_t} \norm{X_t}_{V_t^{-1}} Z_t + X_t^\top (\theta - \hat \theta_t), \nonumber \\
(x_{t,*}-X_t)^\top \theta \leq (\alpha(T) +\alpha_{i_t}Z_t) \norm{X_t}_{V_t^{-1}} + (\alpha(T) - \alpha_{i_t}Z_{t,*}) \norm{x_{t,*}}_{V_t^{-1}} = K_t, \label{eq:lints}
\end{gather}
where $K_t$ is normal random variable with 
$$\mathbb{E}(K_t) \leq 2 \alpha(T) T^{-2/7}, \; \text{SD}(K_t) \leq \sqrt{2} \alpha_{i_t} T^{-2/7} \leq \sqrt{2} \alpha^* T^{-2/7}.$$
According to Lemma \ref{lem:normal}, we have that for arbitrary $\xi > 0$
\begin{align*}
    &P\left(\max_{t \in T} K_t \geq 2 \alpha(T) T^{-2/7}+\left(\sqrt{2 \log(T)}+\xi\right)\sqrt{2}\alpha^*T^{-2/7}\right) \nonumber \\
    &\leq P\left(\max_{t \in T} \frac{K_t-\mathbb{E}[K_t]}{SD(K_t)} \geq \sqrt{2 \log(T)}+\xi \right) \nonumber \\
    &= T \times P\left(Z \geq \sqrt{2 \log(T)}+\xi\right) \quad Z \sim N(0,1) \\
    &\leq T \frac{1}{\sqrt{\pi}(\sqrt{2 \log(T)}+\xi)} \exp\left({-(\sqrt{2 \log(T)}+\xi)^2/2}\right) \nonumber \\
    &\leq \frac{1}{\sqrt{2 \log(T)}+\xi} \exp \left(-\frac{\xi^2}{2}\right).
\end{align*}
By taking $\xi = 2\sqrt{\log(T)}$, it holds that
$$P\left(\max_{t \in T} K_t \geq 2 \alpha(T) T^{-2/7}+\left(\sqrt{2 \log(T)}+\xi\right)\sqrt{2}\alpha^*T^{-2/7}\right) \leq \frac{1}{2T^2\sqrt{\log(T)}}.$$
Since this probability upper bound is ultra small and hence negligible, we have
$$(x_{t,*}-X_t)^\top \theta \leq 2 \alpha(T) T^{-2/7}, \; t > T_1.$$
Similarly, we could also deduce that
$$(x_{t,*}-\tilde X_t)^\top \theta \leq 2 \alpha(T) T^{-2/7}, \; t > T_1.$$
This result similarly implies that
$$\mathbb{E} \left[ \sum_{t=T_1+1}^T \norm{X_t - x_{t,*}}_{V_t^{-1}} \right] = \tilde O (T^{4/7}), \; \mathbb{E} \left[ \sum_{t=T_1+1}^T \norm{X_t - \tilde X_t}_{V_t^{-1}} \right] = \tilde O (T^{4/7}).$$
According to \cite{agrawal2013thompson} (or Eqn. \eqref{eq:lints}), we know that for LinTS we have the similar result as in Eqn. \eqref{eq:tildex}:
$$\sum_{t=T_1+1}^T \left(\mu\left({x_{t,*}}^T \theta\right) - \mu\left(\tilde X_t^T \theta \right) \right) = \tilde O \left( \sum_{t=T_1+1}^T \norm{\tilde X_t}_{V_t^{-1}}  \norm{x_{t,*}}_{V_t^{-1}} \right). $$
And this directly implies that
$$\sum_{t=T_1+1}^T \left(\mu\left({x_{t,*}}^T \theta\right) - \mu\left(\tilde X_t^T \theta \right) \right) = \tilde O (T^{4/7}). $$
Note we could use this procedure to bound the regret for UCB and TS bandit algorithms under condition in (3), since most of the proofs for generalized linear bandits are closely related to the rate of $\sum_{t=T_1+1}^T \norm{\tilde X_t}_{V_t^{-1}}$. Finally, the cost of pure exploration is also of scale $\tilde O(T^{4/7})$, which concludes the proof.

\end{proof}

\subsection{Analysis of Theorem \ref{deep_bandit_thm}}\label{app:syn}
\subsubsection{Useful Conclusions}
\begin{proposition}\label{prop:syn}
Assume given the past information $\mathcal{F}_{t-1}$ and the hyper-parameters to be used by the contextual bandit algorithm at round $t$, the arm to be pulled by the contextual bandit algorithm follows a fixed distribution. Denote $R(\alpha^{(1)},\dots,\alpha^{(L)}, T, \{\mathcal{F}_{t-1}\})$ as the cumulative regret of the contextual bandit algorithm if it is run with parameters $(\alpha^{(1)},\dots,\alpha^{(L)})$ given the past information $\mathcal{F}_{t-1}$ at round $t$. 
Then the auto tuning method in Algorithm \ref{deepbandit} has regret that satisfies the following:
\begin{gather*}
    \mathbbm{E}[R(T)] \leq \min_{(\alpha^{(1)},\dots,\alpha^{(L)}) \in J_1 \times \dots \times J_L} \mathbb{E}[R(\alpha^{(1)},\dots,\alpha^{(L)}, T, \{\mathcal{F}_{t-1}\})] \\  +2 \sum_{l=1}^L \sqrt{ (e-1)n_l (T-T_1) \log n_l}.
\end{gather*}
\end{proposition}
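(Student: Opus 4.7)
The plan is to exploit the telescoping regret decomposition already written out in Section 5 and then to bound each of the $L$ ``switching'' terms by a single-parameter EXP3 analysis in the spirit of Lemma \ref{bound_q2}. First I would pick $(\alpha^{(1)}_*,\ldots,\alpha^{(L)}_*) \in J_1\times\cdots\times J_L$ to be the minimizer of $\mathbb{E}[R(\alpha^{(1)},\ldots,\alpha^{(L)},T,\{\mathcal{F}_{t-1}\})]$; then the ``fixed-parameters'' term in the decomposition becomes exactly this minimum, and what is left is a sum of $L$ quantities, the $l$-th of which measures the excess regret incurred by swapping the $l$-th coordinate from $\alpha^{(l)}_*$ to the EXP3$(l)$-chosen $\alpha^{(l)}_{i_t(l)}$ while the first $l-1$ coordinates are already random (drawn by EXP3$(1),\ldots,$ EXP3$(l-1)$) and the last $L-l$ coordinates are still fixed at their starred values. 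The warm-up rounds contribute zero to this decomposition because the past information $\{\mathcal{F}_{t-1}\}$ in $R(\alpha^{(1)},\ldots,\alpha^{(L)},T,\{\mathcal{F}_{t-1}\})$ already includes those rounds.

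For each $l\in[L]$, the plan is to bound the $l$-th switching term
\begin{align*}
\Delta_l := \sum_{t=T_1+1}^T \mathbb{E}\bigl[\,&\mu\bigl(X_t(\alpha^{(1)}_{i_t(1)},\ldots,\alpha^{(l-1)}_{i_t(l-1)},\alpha^{(l)}_*,\alpha^{(l+1)}_*,\ldots,\alpha^{(L)}_*\mid\mathcal{F}_{t-1})^\top\theta\bigr) \\
&-\mu\bigl(X_t(\alpha^{(1)}_{i_t(1)},\ldots,\alpha^{(l-1)}_{i_t(l-1)},\alpha^{(l)}_{i_t(l)},\alpha^{(l+1)}_*,\ldots,\alpha^{(L)}_*\mid\mathcal{F}_{t-1})^\top\theta\bigr)\bigr]
\end{align*}
by $2\sqrt{(e-1)n_l(T-T_1)\log n_l}$. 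The idea is to treat $\Delta_l$ as the adversarial tuning regret of EXP3$(l)$ in a ``game'' whose counterfactual mean reward for arm $j\in[n_l]$ at round $t$ is $\mu(X_t(\alpha^{(1)}_{i_t(1)},\ldots,\alpha^{(l-1)}_{i_t(l-1)},\alpha^{(l)}_j,\alpha^{(l+1)}_*,\ldots,\alpha^{(L)}_*\mid\mathcal{F}_{t-1})^\top\theta)$. Conditional on $\mathcal{F}_{t-1}$, the $L$ indices $i_t(1),\ldots,i_t(L)$ are drawn mutually independently from $p^{(1)}(t),\ldots,p^{(L)}(t)$, and by the theorem's standing assumption the arm distribution given the full hyperparameter tuple is also fixed. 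Thus replacing $p_j(t)$ with $p_j^{(l)}(t)$ and $\hat y_t(j)$ with $\hat y_t^{(l)}(j)$ in the proof of Lemma \ref{bound_q2}, and marginalizing over the other $L-1$ EXP3 draws and the contextual-bandit randomization, I would reproduce the identities (\ref{auto_ineq2})--(\ref{auto_ineq3}) and then the sandwich bound on $\mathbb{E}[\log(W_{T+1}^{(l)}/W_1^{(l)})]$, finally yielding the claimed bound on $\Delta_l$. Summing over $l$ and adding the minimal base regret gives the proposition.

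The main obstacle I expect is verifying carefully that the key conditional-expectation identities survive the extra coupling introduced by having $L$ EXP3 layers feeding the same contextual bandit. Concretely, in Lemma \ref{bound_q2} the crucial step was that, conditional on $\sigma(\mathcal{F}_{t-1},\epsilon_{t,i},a_t(\alpha_i))$, the indicator $\mathbbm{1}\{i_t=i\}$ still has probability $p_i(t)$, so that $\mathbb{E}[\hat y_t(i)\mid\mathcal{F}_{t-1}]=\mathbb{E}[y_t(\alpha_i)\mid\mathcal{F}_{t-1}]$. Here I need the analogous statement: conditional on $\mathcal{F}_{t-1}$, on all the other EXP3 draws $\{i_t(l')\}_{l'\ne l}$, and on the contextual-bandit's internal randomization, the law of $i_t(l)$ is still $p^{(l)}(t)$, so that $\mathbb{E}[\hat y_t^{(l)}(j)\mid\mathcal{F}_{t-1}]$ equals the ``counterfactual'' mean reward of arm $j$ in the EXP3$(l)$ game. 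This holds because Algorithm \ref{deepbandit} draws the $L$ indices independently at round $t$ from distributions determined by $\mathcal{F}_{t-1}$ alone, and the contextual bandit's arm draw is a fixed function of $\mathcal{F}_{t-1}$ and the hyperparameter tuple. Once this measurability-and-independence verification is in hand, the rest of the argument is a copy of the proof of Lemma \ref{bound_q2} applied separately to each of the $L$ EXP3 layers.
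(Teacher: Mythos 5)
Your proposal is correct and follows essentially the same route as the paper: the same telescoping decomposition with the starred tuple taken to be the minimizer, and the same per-layer EXP3 analysis whose crux is that, conditional on $\mathcal{F}_{t-1}$, the other layers' draws, and the bandit's internal randomization, $i_t(l)$ still has law $p^{(l)}(t)$ (the paper formalizes this by conditioning on $\sigma(\mathcal{F}_{t-1},\mathcal{H}_t,a_t(\alpha_i),\epsilon')$ in its Equation \ref{3randomness}). The measurability/independence verification you flag as the main obstacle is precisely the step the paper's proof carries out, so no gap remains.
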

\begin{proof}
We also reload some notations here for simplicity in the same way as proof of Lemma \ref{bound_q2} in Appendix \ref{proof_lemma_bound_q2}. More specifically, since at iteration $t$ we are given the past information $\mathcal{F}_{t-1}$ to make decision according to different choices of hyper-parameter values, and hence we would omit this notation $\mathcal{F}_{t-1}$ when we refer to the arm or feature vector we pull under different hyper-parameter values: Denote 
$a_t \left( \alpha_{i_1}^{(1)}, \dots, \alpha_{i_L}^{(L)} \right)$ as the pulled arm at round $t$ if the hyper-parameters selected at round $t$ is $\left( \alpha_{i_1}^{(1)}, \dots, \alpha_{i_L}^{(L)} \right)$. Denote $X_t \left( \alpha_{i_1}^{(1)}, \dots, \alpha_{i_L}^{(L)} \right)$ as the corresponding feature vector and $\mu_t \left( \alpha_{i_1}^{(1)}, \dots, \alpha_{i_L}^{(L)} \right)$ as the corresponding expected reward. 
It suffices to show that for any $l=1,\dots,L$, the following holds.
\begin{align}\label{complex_notation}
& \sum_{t=1}^T \mathbbm{E} \left[ \mu_t \left( \alpha_{i_t(1)}^{(1)}, \dots, \alpha_{i_{t}(l-1)}^{(l-1)} , \alpha_{*}^{(l)}, \dots, \alpha_{*}^{(L)} \right) \right]  \nonumber \\
& - \sum_{t=1}^T \mathbbm{E} \left[ \mu_t \left( \alpha_{i_t(1)}^{(1)}, \dots, \alpha_{i_{t}(l-1)}^{(l-1)} , \alpha_{i_t(l)}^{(l)}, \alpha_{*}^{(l+1)}, \dots, \alpha_{*}^{(L)} \right) \right]  \nonumber \\
& \leq 2\sqrt{(e-1) n_l T \log n_l}.
\end{align}

For convenience, we will denote $\left( \alpha_{i_t(1)}^{(1)}, \dots, \alpha_{i_t{(l-1)}}^{(l-1)} , \alpha_{j}^{(l)}, \alpha_*^{(l+1)},\dots \alpha_{*}^{(L)} \right)$ as $\left( \alpha_{j} \right)$ when there is no ambiguity, which means that the first $l-1$ hyper-parameters are chosen as $\alpha^{(s)}_{i_t(s)}$ for $s=1,\dots,l-1$, the $l$-th hyper-parameter is chosen with index $j$ and the rest of the hyper-parameters are chosen as $\alpha_*^{(s)}$ for $s=l+1,\dots,L$.
Then the result we want to show in Equation \ref{complex_notation} can be written as
\begin{align}
\label{want_to_show_auto}
& \sum_{t=1}^T \mathbbm{E} \left[ \mu_t \left( \alpha_{i_t(1)}^{(1)}, \dots, \alpha_{i_{t}(l-1)}^{(l-1)} , \alpha_{*}^{(l)}, \dots, \alpha_{*}^{(L)}  \right) \right]
- \sum_{t=1}^T \mathbbm{E} \left[ \mu_t \left( \alpha_{i_t(l)} \right) \right] 
 \leq 2\sqrt{(e-1) n_l T \log n_l}.
\end{align}

We will also omit the superscript / subscript $(l)$ for convenience when there is no ambiguity, so $p_j^{(l)} (t), w^{(l)}_j(t), \hat y_t^{(l)} (j)$ are abbreviated as $p_j (t), w_j (t), \hat y_t(j)$ respectively. Denote $\mathcal{H}_t = \sigma \left( \alpha_{i_t(1)}^{(1)},\dots,\alpha_{i_t(l-1)}^{(l-1)}, \alpha_*^{(l+1)}, \dots, \alpha_*^{(L)}\right)$ as the $\sigma$-algebra induced by the event that at round $t$, the first $l-1$ 
hyper-parameters are chosen as $\alpha^{(s)}_{i_t(s)}$ and for $s=l+1, \dots,L$, the hyper-parameters are chosen as $\alpha^{(s)}_*$. Given $\sigma(\mathcal{F}_{t-1}, \mathcal{H}_t )$, denote $y_t (\alpha_j) = \mu_t (\alpha_j)+\epsilon^\prime$ as the observed reward at round $t$ if $\alpha^{(l)}$ is chosen as $\alpha^{(l)}_j$ and the rest hyper-parameters given by $\mathcal{H}_t$. Here, $\epsilon^\prime$ is a hypothetical random noise if arm $a_t(\alpha_j)$ is pulled at round $t$.

Given $\sigma(\mathcal{F}_{t-1}, \mathcal{H}_t )$, 
by the above definitions and Algorithm \ref{deepbandit}, 
$\hat y_t(j) =y_t (\alpha_j) / p_j (t)$ if $j=i_t(l)$. Otherwise, $\hat y_t (j)=0$. Since $p_j (t) \geq \frac{\beta_l}{n_l}$, we have $\hat y_t(j) \leq \frac{n_l}{\beta_l}$ for all $j \in [n_l]$ and $t$. 
We also have the following two inequalities.
\begin{align}\label{deep_ineq1}
   &   \mathbbm{E} \left(\sum_{i=1}^{n_l} p_i (t) \hat y_t (i)   | \sigma(\mathcal{F}_{t-1}, \mathcal{H}_t ) \right)  
   =  \mathbbm{E} \left( p_{i_t(l)} (t) \hat y_t (i_t(l))   | \sigma(\mathcal{F}_{t-1}, \mathcal{H}_t  ) \right)  \nonumber \\
    &=  \mathbbm{E} \left(  y_t (\alpha_{i_t(l)})   | \sigma(\mathcal{F}_{t-1}, \mathcal{H}_t  ) \right) 
    = \mathbbm{E} \left[   \mu_t \left( \alpha_{i_t(l)} \right )   | \sigma(\mathcal{F}_{t-1}, \mathcal{H}_t  )\right] 
\end{align}

\begin{align}\label{deep_ineq2_continue}
   &  \mathbbm{E} \left(\sum_{i=1}^{n_l} p_i (t) \hat y_t (i)^2   | \sigma(\mathcal{F}_{t-1}, \mathcal{H}_t ) \right)  
   =  \mathbbm{E} \left( p_{i_t(l)} (t) \hat y_t (i_t(l))^2   | \sigma(\mathcal{F}_{t-1}, \mathcal{H}_t ) \right) \nonumber \\
    & =  \mathbbm{E} \left(  y_t (i_t(l))  \hat y_t (i_t(l)) | \sigma(\mathcal{F}_{t-1}, \mathcal{H}_t ) \right) 
    \leq \mathbbm{E} \left(   \hat y_t (i_t(l)) | \sigma(\mathcal{F}_{t-1}, \mathcal{H}_t ) \right)  \nonumber \\
    & =  \mathbbm{E} \left(  \sum_{i=1}^{n_l} \hat y_t (i) | \sigma(\mathcal{F}_{t-1}, \mathcal{H}_t ) \right)
\end{align}
For a single $i\in [n_l]$, since given $\mathcal{F}_{t-1}$, $p_i^{(l)} (t)$ is already fixed, which means that the choices of other hyper-parameters do not affect the distribution of $i_t(l)$. Moreover, $a_t(\alpha_i)$ follows a fixed distribution due to the conditions in Theorem \ref{deep_bandit_thm}, i.e., the arm to be pulled follows a fixed distribution given the past information and the hyper-parameters to be used at round $t$. 
Therefore, given $\sigma(\mathcal{F}_{t-1}, \mathcal{H}_t, a_t(\alpha_i), \epsilon^\prime)$, $i=i_t(l)$ is still with probability $p_i^{(l)} (t)$ for all $i\in [n_l]$. So  

\begin{align}\label{3randomness}
&  \mathbbm{E} \left(  \hat y_t (i) | \sigma(\mathcal{F}_{t-1}, \mathcal{H}_t ) \right) 
 = \mathbbm{E}\left[ \mathbbm{E} \left(  \hat y_t (i) | \sigma(\mathcal{F}_{t-1}, \mathcal{H}_t,  a_t(\alpha_i), \epsilon^\prime)  \right) | \sigma(\mathcal{F}_{t-1}, \mathcal{H}_t) \right]  \nonumber \\
 & = \mathbbm{E}\left[  y_t (\alpha_i) | \sigma(\mathcal{F}_{t-1}, \mathcal{H}_t) \right]  \nonumber \\
 & = \mathbbm{E}\left[  \mu_t (\alpha_i) | \sigma(\mathcal{F}_{t-1}, \mathcal{H}_t) \right].
\end{align}

From Equation \ref{deep_ineq2_continue} and \ref{3randomness}, we have
\begin{align}\label{deep_ineq2}
   & \mathbbm{E} \left(\sum_{i=1}^{n_l} p_i (t) \hat y_t (i)^2   | \sigma(\mathcal{F}_{t-1}, \mathcal{H}_t ) \right)  
    \leq  \mathbbm{E} \left(  \sum_{i=1}^{n_l}  \mu_t (\alpha_i) | \sigma(\mathcal{F}_{t-1}, \mathcal{H}_t ) \right) 
\end{align}

We still look at the lower bound and upper bound of $\mathbbm{E}[\log \frac{W_{T+1}}{W_1}]$, but now $W_t = \sum_{i=1}^{n_l} w_i^{(l)} (t)$, and we will use the abbreviation $w_i(t) = w_i^{(l)} (t)$ below for ease of notation.

\textbf{Lower bound:} 

\begin{align*}
    & \mathbbm{E} \left[\log \frac{w_i (t+1)}{w_i(t)} |  \sigma(\mathcal{F}_{t-1}, \mathcal{H}_t ) \right ] 
    = \mathbbm{E} \left[ \frac{\beta_l}{n_l} \hat y_t (i) |  \sigma(\mathcal{F}_{t-1}, \mathcal{H}_t ) \right ] \\
    & = \mathbbm{E} \left[ \frac{\beta_l}{n_l} \mu_t (\alpha_i) |  \sigma(\mathcal{F}_{t-1}, \mathcal{H}_t ) \right ] \quad \text{ from Equation \ref{3randomness}}
\end{align*}
Take an expectation on both sides and sum over $t$, we have
\begin{align*}
    & \mathbbm{E} \left[\log w_i (T+1)  \right ] 
    = \frac{\beta_l}{n_l} \sum_{t=1}^T \mathbbm{E} \left[  \mu_t (\alpha_i)  \right ] 
\end{align*}
Therefore, for all $i \in [n_l]$, 
\begin{equation}
    \label{lb_deep}
    \mathbbm{E}[\log \frac{W_{T+1}}{W_1}] \geq \mathbbm{E}[\log w_i(T+1)] -\log n_l 
    = \frac{\beta_l}{n_l} \sum_{t=1}^T \mathbbm{E} \left[  \mu_t (\alpha_i)  \right ] -\log n_l .
\end{equation}

\textbf{Upper bound:} This part is almost the same as the arguments in Lemma \ref{bound_q2}, except now that the conditional expectation is taken over $\sigma(\mathcal{F}_{t-1}, \mathcal{H}_t )$. 
For completeness, we write out the proof of this part below. Again, we will use $p_i(t) = p_i^{(l)} (t)$ and $w_i(t) = w_i^{(l)}(t)$ for convenience. 
\begin{align*}
    & \mathbbm{E} \left[\log\frac{W_{t+1}}{W_t} | \sigma(\mathcal{F}_{t-1}, \mathcal{H}_t ) \right] = \mathbbm{E} \left[\log\sum_{i=1}^{n_l}\frac{w_i(t+1)}{W_t} | \sigma(\mathcal{F}_{t-1}, \mathcal{H}_t ) \right] \\
    & = \mathbbm{E} \left[\log\sum_{i=1}^{n_l}\frac{w_i(t)}{W_t} \text{exp} \left( \frac{\beta_l}{n_l} \hat y_t(i)\right) | \sigma(\mathcal{F}_{t-1}, \mathcal{H}_t ) \right] \\
    & = \mathbbm{E} \left[\log\sum_{i=1}^{n_l}\frac{p_i(t) - \frac{\beta_l}{n_l}}{1-\beta_l} \text{exp} \left( \frac{\beta_l}{n_l} \hat y_t(i) \right) | \sigma(\mathcal{F}_{t-1}, \mathcal{H}_t )  \right]  \quad \text{ definition of $p_i(t)$}\\
    & \leq \mathbbm{E} \left[\log\sum_{i=1}^{n_l}\frac{p_i(t) - \frac{\beta_l}{n_l}}{1-\beta_l} \left( 1+\frac{\beta_l}{n_l} \hat y_t(i) + \frac{(e-2)\beta_l^2}{n_l^2} \hat y_t(i)^2 \right) | \sigma(\mathcal{F}_{t-1}, \mathcal{H}_t )  \right] \\
    & \leq \mathbbm{E} \left[\log\left( 1 + \sum_{i=1}^{n_l} \left[ \frac{\beta_l }{n_l (1-\beta_l)} p_i(t)\hat y_t(i) + \frac{(e-2)\beta_l^2}{n_l^2 (1-\beta_l)} p_i(t)\hat y_t(i)^2 \right] \right) | \sigma(\mathcal{F}_{t-1}, \mathcal{H}_t )  \right] \\
    & \leq \mathbbm{E} \left[ \sum_{i=1}^{n_l} \left( \frac{\beta_l }{n_l (1-\beta_l)} p_i(t)\hat y_t(i) + \frac{(e-2)\beta_l^2}{n_l^2 (1-\beta_l)} p_i(t)\hat y_t(i)^2  | \sigma(\mathcal{F}_{t-1}, \mathcal{H}_t )  \right) \right] \\ 
    & \leq \frac{\beta_l}{n_l(1-\beta_l)}  \mathbbm{E} \left[ \mu_t (\alpha_{i_t})  | \sigma(\mathcal{F}_{t-1}, \mathcal{H}_t )   \right] + 
    \frac{(e-2)\beta_l^2}{n_l^2 (1-\beta_l)} \sum_{i=1}^{n_l} \mathbbm{E} \left[ \mu_t(\alpha_i) | \sigma(\mathcal{F}_{t-1}, \mathcal{H}_t )  \right].
\end{align*}
The first inequality ``$\leq$'' in the above holds since $e^x \leq 1 + x + (e-2) x^2$ for $x\in [0,1]$. Here, we have $0\leq \frac{\beta_l}{n_l}\hat y_t(i)\leq 1$ because $p_i(t)\geq \frac{\beta_l}{n_l}$, $0\leq y_t(\alpha_i)\leq 1$ and $\hat y_t(i) \leq \frac{y_t(\alpha_i)}{p_i(t)}$. 
The last inequality is from Equation \ref{deep_ineq1}, \ref{deep_ineq2}. Take another expectation on both sides, we get 
\begin{equation*}
    \mathbbm{E} \left[\log\frac{W_{t+1}}{W_t} \right] 
    \leq 
    \frac{\beta_l}{n_l(1-\beta_l)}  \mathbbm{E} \left[ \mu_t (\alpha_{i_t}) \right] + 
    \frac{(e-2)\beta_l^2}{n_l^2 (1-\beta_l)} \sum_{i=1}^{n_l} \mathbbm{E} [\mu_t(\alpha_i)] 
\end{equation*}
By summing the above over $t$, we have
\begin{equation}
    \label{ub_deep}
    \mathbbm{E}[\log \frac{W_{T+1}}{W_1}] \leq 
    \frac{\beta_l}{n_l (1-\beta_l)} \sum_{t=1}^T \mathbbm{E} [\mu_t(\alpha_{i_t(l)})]
    + \frac{(e-2)\beta_l^2}{n_l^2 (1-\beta_l) } \sum_{i=1}^{n_l} \sum_{t=1}^T \mathbbm{E} [\mu_t(\alpha_{i})]
\end{equation}
Note that the lower bound in Equation \ref{lb_deep} holds for any $i$, so it also holds for $\alpha_{*}^{(l)}$. Denote 

\begin{equation*}
G_{\max} = \sum_{t=1}^T \mathbbm{E} \left[\mu_t(
\alpha_{i_t(1)}^{(1)}, \dots, \alpha_{i_{t}(l-1)}^{(l-1)} , \alpha_{*}^{(l)}, \dots, \alpha_{*}^{(L)} 
) \right].
\end{equation*}
Then
\begin{equation*}
    \frac{\beta_l}{n_l} G_{\max}  -\log n_l 
    \leq  \frac{\beta_l}{n_l (1-\beta_l)} \sum_{t=1}^T \mathbbm{E} [\mu_t(\alpha_{i_t(l)})]
    + \frac{(e-2)\beta_l^2}{n_l^2 (1-\beta_l) } \sum_{i=1}^{n_l} \sum_{t=1}^T \mathbbm{E} [\mu_t(\alpha_{i})]
\end{equation*}
We note that $\sum_{t=1}^T \mathbbm{E} [\mu_t(\alpha_{i})] \leq T$ for all $i$, so 
\begin{equation*}
    \frac{\beta_l}{n_l} G_{\max}  -\log n_l 
    \leq  \frac{\beta_l}{n_l (1-\beta_l)} \sum_{t=1}^T \mathbbm{E} [\mu_t(\alpha_{i_t(l)})]
    + \frac{(e-2)\beta_l^2}{n_l (1-\beta_l) } T
\end{equation*}
Simplify the above inequality and due to the choice of $\beta_l$, we have
\begin{align*}
    &  G_{\max} - \sum_{t=1}^T \mathbbm{E} [\mu_t(\alpha_{i_t(l)})] \leq 
     \beta_l G_{\max} + (e-2)\beta_l  T + \frac{(1-\beta_l)n_l}{\beta_l} \log n_l \\
     & \leq 2\sqrt{(e-1) n_l T \log n_l}.
\end{align*}
This concludes the proof of Proposition \ref{prop:syn}.
\end{proof}

\begin{lem}
[Adapted from Lemma \ref{lem:consis}]
\label{lem:consis2}
For any $\delta < 1$, under our problem setting in Section \ref{sec:prelim} with the regularization hyper-parameter $\lambda \in [\lambda_{\min}, \lambda_{\max}] \, (\lambda_{\min} > 0)$, it holds that for all $t > 0$,
\begin{gather*}
    \norm{\hat \theta_t - \theta^*}_{V_t} \leq \beta_t(\delta), \\
    \forall x \in \mathbb{R}^d, |x^\top (\hat \theta_t - \theta^*)| \leq \norm{x}_{V_t^{-1}} \beta_t(\delta),
\end{gather*}
with probability at least $1-\delta$, where
$$\beta_t(\delta) = \sigma \sqrt{\log\left(\frac{(\lambda_{\min} + t)^d}{\delta^2 \lambda_{\min}^d}\right)} + \sqrt{\lambda_{\max}} S.$$
\end{lem}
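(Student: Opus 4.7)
The plan is to reduce Lemma \ref{lem:consis2} to the unmodified Lemma \ref{lem:consis} by a simple monotonicity argument. Concretely, fix the actual regularization parameter $\lambda$ used by the algorithm, with $\lambda_{\min}\le \lambda\le \lambda_{\max}$, and apply Lemma \ref{lem:consis} verbatim with that $\lambda$. This immediately yields
$$\|\hat\theta_t-\theta^*\|_{V_t}\ \le\ \sigma\sqrt{\log\!\left(\tfrac{(\lambda+t)^d}{\delta^{2}\lambda^{d}}\right)}+\sqrt{\lambda}\,S\ =:\ \tilde\beta_t(\delta)$$
with probability at least $1-\delta$. It then suffices to show $\tilde\beta_t(\delta)\le \beta_t(\delta)$ uniformly over $\lambda\in[\lambda_{\min},\lambda_{\max}]$, where $\beta_t(\delta)$ is the expression stated in Lemma \ref{lem:consis2}.

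The monotonicity is handled term by term. Rewrite the log expression as $\log\!\left(\tfrac{(\lambda+t)^d}{\delta^{2}\lambda^{d}}\right)=d\log(1+t/\lambda)-2\log\delta$. Since $\lambda\mapsto 1+t/\lambda$ is strictly decreasing on $(0,\infty)$, the first term is maximized over $[\lambda_{\min},\lambda_{\max}]$ at $\lambda=\lambda_{\min}$, which supplies the $\lambda_{\min}$ appearing inside the logarithm of $\beta_t(\delta)$. The second term $\sqrt{\lambda}\,S$ is increasing in $\lambda$ and is therefore upper bounded by $\sqrt{\lambda_{\max}}\,S$. Summing these two worst-case bounds yields $\tilde\beta_t(\delta)\le\beta_t(\delta)$, so the first inequality in the lemma holds.

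The second inequality is a standard self-normalized Cauchy--Schwarz step: for every $x\in\mathbb R^d$,
$$|x^{\top}(\hat\theta_t-\theta^*)|\ =\ \bigl|(V_t^{-1/2}x)^{\top}V_t^{1/2}(\hat\theta_t-\theta^*)\bigr|\ \le\ \|x\|_{V_t^{-1}}\,\|\hat\theta_t-\theta^*\|_{V_t},$$
and combining with the first inequality finishes the proof on the same $1-\delta$ event. There is no serious obstacle here; the only thing to be careful about is that $\lambda$ is treated as fixed (not random and not ranging jointly with the data) when invoking Lemma \ref{lem:consis}, which is consistent with how the syndicated framework selects a single $\lambda$ per round and uses the resulting ridge estimator. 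If one wanted a bound that is uniform over all $\lambda$ in the interval simultaneously (as opposed to holding for whichever $\lambda$ was actually used), a union bound over a grid and a continuity argument would be the natural route, but the statement as written only needs the pointwise version above.
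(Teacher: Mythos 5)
Your proposal is correct and follows essentially the same route as the paper: invoke Lemma \ref{lem:consis} for the particular $\lambda\in[\lambda_{\min},\lambda_{\max}]$ in use, then bound the resulting confidence radius by monotonicity, with the logarithmic term maximized at $\lambda_{\min}$ and the $\sqrt{\lambda}\,S$ term at $\lambda_{\max}$. The extra detail you supply (the Cauchy--Schwarz step and the caveat about $\lambda$ being fixed rather than uniform over the interval) is consistent with, and slightly more explicit than, the paper's one-line argument.
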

\begin{proof}
The proof of this Lemma is trivial given Lemma \ref{lem:consis}. For any $\lambda \in [\lambda_{\min}, \lambda_{\max}]$, according to Lemma \ref{lem:consis} it holds that,
for all $t > 0$,
\begin{gather*}
    \norm{\hat \theta_t - \theta^*}_{V_t} \leq \beta_t(\delta), \\
    \forall x \in \mathbb{R}^d, |x^\top (\hat \theta_t - \theta^*)| \leq \norm{x}_{V_t^{-1}} \beta_t(\delta),
\end{gather*}
with probability at least $1-\delta$, where
$$\beta_t(\delta) =  \sigma \sqrt{\log\left(\frac{(\lambda + t)^d}{\delta^2 \lambda^d}\right)} + \sqrt{\lambda} S \leq \sigma \sqrt{\log\left(\frac{(\lambda_{\min} + t)^d}{\delta^2 \lambda_{\min}^d}\right)} + \sqrt{\lambda_{\max}} S.$$
\end{proof}

\subsubsection{Proof of Theorem \ref{deep_bandit_thm}}
\begin{proof}

We could validate Theorem \ref{deep_bandit_thm} by extending the proof of Theorem \ref{prop:tune} with Proposition \ref{prop:syn}. Note that most contextual bandit algorithms contain three types of hyper-parameters: one is the exploration rate, which we have throughout discussed in the proof of Theorem \ref{prop:tune}. The second class is the stepsize of some gradient-based optimization loop (e.g. Laplace-TS \cite{agrawal2012analysis}), but the output from the loop when the convergent criteria is met is similar. In other words, this kind of hyper-parameter is not critical in the theoretical proof.
The last one is the regularization parameter $\lambda$, but it can be easily handled by using Lemma \ref{lem:consis2}. Therefore, we only need to consider the case when we tune the exploration rate and the regularization parameter simultaneously. We will take LinUCB with two hyperparameters (i.e. exploration rate and regularization parameter) as an example:

The proof is similar to the one in Appendix \ref{app:thm1}. Denote the candidate sets for hyper-parameter $\alpha$ and $\lambda$ as $J_1$ and $J_2$ ($0 < \lambda_{\min} \leq J_2 \leq \lambda_{\max}$).
And denote $V_t(\lambda) = \lambda I + \sum_{i=1}^{t-1} X_t X_t^\top$, $\alpha_{i_t}$ and $\lambda_{i_t}$ as the exploration and regularization rate we tune in our Syndicated framework at round $t$. Moreover, we define $\alpha^* = \min_{\alpha \in J_1} \alpha, \lambda^* = \min_{\lambda \in J_2} \lambda$. With probability at least $1-\delta$,
$$\norm{\hat \theta - \theta}_{V_t(\lambda)} \leq \beta_t(\delta) \coloneqq \alpha(T,\delta), \qquad \forall \lambda \in J_2$$
where the definition of $\beta_t(\delta)$ is reloaded in Lemma \ref{lem:consis2}. And we would omit $\delta$ for simplicity. 
For $t > T_1$, we denote the feature vector pulled at round $t$ as $X_t$, i.e.
$$X_t = \argmax_{x \in \mathcal{A}_t} x^\top \hat \theta_t + \alpha_{i_t} \norm{x}_{V_t^{-1}(\lambda_{i_t})}, \; X_t = X_t(\alpha_{i_t},\lambda_{i_t} | \mathcal{F}_{t-1}).$$
And we also define $\tilde X_t = X_t(\alpha^*, \lambda^*|\mathcal{F}_{t-1})$, i.e.
$$\tilde X_t = \argmax_{x \in \mathcal{A}_t} x^\top \hat \theta_t + \alpha^* \norm{x}_{V_t^{-1}(\lambda^*)}.$$
According to Proposition \ref{prop:syn}, it holds that
\begin{gather*}
    \mathbbm{E}[R(T)] \leq  \mathbb{E}[R(\alpha^*, \lambda^*, T, \{\mathcal{F}_{t-1}\})] + O(\sqrt{T-T_1}) \\
    \leq \mathbbm{E}\left [ \sum_{t=T_1+1}^T \left(\mu\left({x_{t,*}}^T \theta\right) - \mu\left(\tilde X_t^T \theta \right) \right) \right] + O(\sqrt{T-T_1})
\end{gather*}

According to the proof of LinUCB we could similarly argue that
\begin{align*}
    x_{t,*}^\top \theta - \tilde X_t^\top \theta &\leq \alpha^* \left(\norm{\tilde X_t}_{V_t^{-1}(\lambda^*)} - \norm{\tilde x_{t,*}}_{V_t^{-1}}(\lambda^*)\right) + \norm{x_{t,*} - \tilde X_t}_{V_t^{-1}(\lambda^*)} \norm{\hat \theta_t - \theta}_{V_t(\lambda^*)} \\ 
    &\leq (\alpha^*+\alpha(T)) \norm{\tilde X_t}_{V_t^{-1}(\lambda^*)} + \alpha(T) \norm{x_{t,*}}_{V_t^{-1}(\lambda^*)}.
\end{align*}
In conclusion, we have that
$$\sum_{t=T_1+1}^T \left(\mu\left({x_{t,*}}^T \theta\right) - \mu\left(\tilde X_t^T \theta \right) \right) = \tilde O \left( \sum_{t=T_1+1}^T \norm{\tilde X_t}_{V_t^{-1}(\lambda^*)} + \sum_{t=T_1+1}^T \norm{x_{t,*}}_{V_t^{-1}(\lambda^*)}\right). $$

By Lemma \ref{lem:sum} and choosing $T_1 = T^{2/3}$, it holds that,
$$\sum_{t=T_1+1}^T \norm{x_{t,*}}_{V_t^{-1}}, \sum_{t=T_1+1}^T \norm{\tilde X_t}_{V_t^{-1}} = O(T \times T^{-1/3}) = O(T^{2/3}).$$

Note we could literally use the identical argument for all UCB and TS bandit algorithms as in the proof of Theorem \ref{prop:tune}, and the only modification is the value of $\alpha(T)$ we newly defined in Lemma \ref{lem:consis2}.

To prove the Theorem \ref{prop:tune} (3) holds, we can also use an exactly identical argument as in the proof of Theorem \ref{prop:tune} (3) in Appendix \ref{app:thm1}, and the only difference is we replace the value of $\alpha(T)$ in our main paper by the newly defined one in Lemma \ref{lem:consis2}, and hence we would not copy it here again. And this fact concludes our proof.
\end{proof}

\subsection{Experimental Settings}\label{exp_dataset}

\textbf{Simulations}. We use $d=10$, $K=100$ and draw $\theta^* \sim \text{Uniform} (-\frac{1}{\sqrt{d}}, \frac{1}{\sqrt{d}})$. For linear bandits, we draw the feature vectors $x_{t,a} \sim \text{Uniform} (-\frac{1}{\sqrt{d}}, \frac{1}{\sqrt{d}})$ and transform the mean reward of arm $a$ at round $t$ by $\mu_{t,a} \gets \frac{\mu_{t,a}+1}{2}$ to make sure the mean rewards are within $[0,1]$. Each round an arm is pulled, a sample reward $Y_t \sim N(\mu_{t,a_t}, 0.1)$ is revealed to the player. For logistic models, the feature vectors $x_{t,a} \sim \text{Uniform} (-1, 1)$ and the corresponding mean reward is $\mu_{t,a} = 1/(1+\text{exp}({-x_{t,a}^T\theta^*}))$. A sample Bernoulli reward is drawn when an arm is pulled.

\textbf{Real datasets}. We use the benchmark Movielens 100K dataset similarly as in \cite{bogunovic2020stochastic}. The Movielens dataset contains 100K ratings on 1,682 movies contributed by 943 users. For data preprocessing, we apply LIBPMF \cite{hfy12a,yu2014parallel} to factorize the ratings matrix to get feature matrices for both users and movies with $d=20$. We randomly select $K=1000$ movies (arms) in each round, and the model parameter $\theta^*$ is defined as the averaged feature vectors of $100$ randomly selected users. For linear models, the mean reward is defined as $\mu_{t,a} = x_{t,a}^T \theta$ and transformed into $[0,1]$. The sample reward is drawn from $N(\mu_{t,a}, 1)$. For logistic models, the mean reward is defined as $\mu_{t,a} = 1/(1+\text{exp}({-x_{t,a}^T\theta^*}))$, and the sample reward is drawn from a Bernoulli distribution.

\subsection{Additional experiments on tuning SGD-TS}
In this section, we show the comparison of different tuning methods in SGD-TS \cite{ding2021efficient}, a recently proposed efficient algorithm for generalized linear bandit. 
We apply SGD-TS with a logistic model to the datasets considered in Section \ref{exp_auto}. SGD-TS has four tuning parameters, the length of epoch $\tau$, two exploration parameters $\alpha^{(1)}$ and $\alpha^{(2)}$, step size for stochastic gradient descent $\eta_0$. In \cite{ding2021efficient}, the experiments are conducted by using a grid search of all four parameters, which is not feasible in practice. 
Since the epoch length has to be pre-determined, it is not applicable to tune it online. We set $\tau = 10 \times \lfloor \max(\log T, d) \rfloor$ as suggested by the grid search set in \cite{ding2021efficient} and fix it for all tuning methods. 
The tuning set for $\alpha^{(1)}$ and $\alpha^{(2)}$ are the same $\{0, 0.01, 0.1, 1, 10\}$. The tuning set for step size $\eta_0$ is set as $\{0.01, 0.1, 1, 10\}$. 
The theoretical choices of step size $\eta_0$ in SGD-TS are intractable, so for the tuning methods in Section \ref{exp_auto}, we make the following modifications:

\begin{compactenum}
    \item \textbf{OP ~\cite{bouneffouf2020hyper}}: We modify OPLINUCB to tune step size $\eta_0$ only.
    \item \textbf{Corral~\cite{agarwal2017corralling}}: We modify the CORRAL model selection framework to tune step size $\eta_0$ only.
    \item \textbf{Corral-Combined~\cite{agarwal2017corralling}}:  We modify the CORRAL model selection framework to tune all three hyper-parameters $\alpha^{(1)}$, $\alpha^{(2)}$ and $\eta_0$. And the tuning set contain all possible combinations of these three hyper-parameters.
    \item \textbf{TL (Our work, Algorithm \ref{2layer})}: This is our proposed Algorithm \ref{2layer}, where we use the two-layer bandit structure to tune the step size $\eta_0$ only.
    \item \textbf{TL-Combined (Our work, Algorithm \ref{2layer})}: This method tunes all three hyper-parameters $\alpha^{(1)}$, $\alpha^{(2)}$ and $\eta_0$ using Algorithm \ref{2layer}, but with the tuning set containing all the possible combinations of the three hyper-parameters.
    \item \textbf{Syndicated (Our work, Algorithm \ref{deepbandit})}: This method keeps three separate tuning sets for $\alpha^{(1)}$, $\alpha^{(2)}$ and $\eta_0$ respectively. It uses the Syndicated Bandits framework in Algorithm \ref{deepbandit}.
    
\end{compactenum}
For OP Corral and TL, since they do not tune the two exploration parameters, $\alpha^{(1)}$ and $\alpha^{(2)}$ are set as the theoretical values as in \cite{ding2021efficient}. Results reported in Figure \ref{plots_sgdts} are averaged over $10$ repeated experiments. From the plots, we can see that 1) our proposed Syndicated Bandits framework outperforms TL-combined method since now there are in total three hyper-parameters and the regret of TL-combined depends on the number of hyper-parameters exponentially. 2) Tuning all $3$ hyper-parameters significantly outperforms tuning only the step size as in OP, Corral and TL. This further indicates that tuning multiple hyper-parameters is better than tuning fewer. On the other hand, it suggests that the theoretical choices of the exploration parameters do not always perform better than the fine-tuned results. 3) Our proposed TL algorithm outperforms OP and Corral when tuning only the step size.

\begin{figure}[htb]
    \centering
    \includegraphics[width=0.5\textwidth]{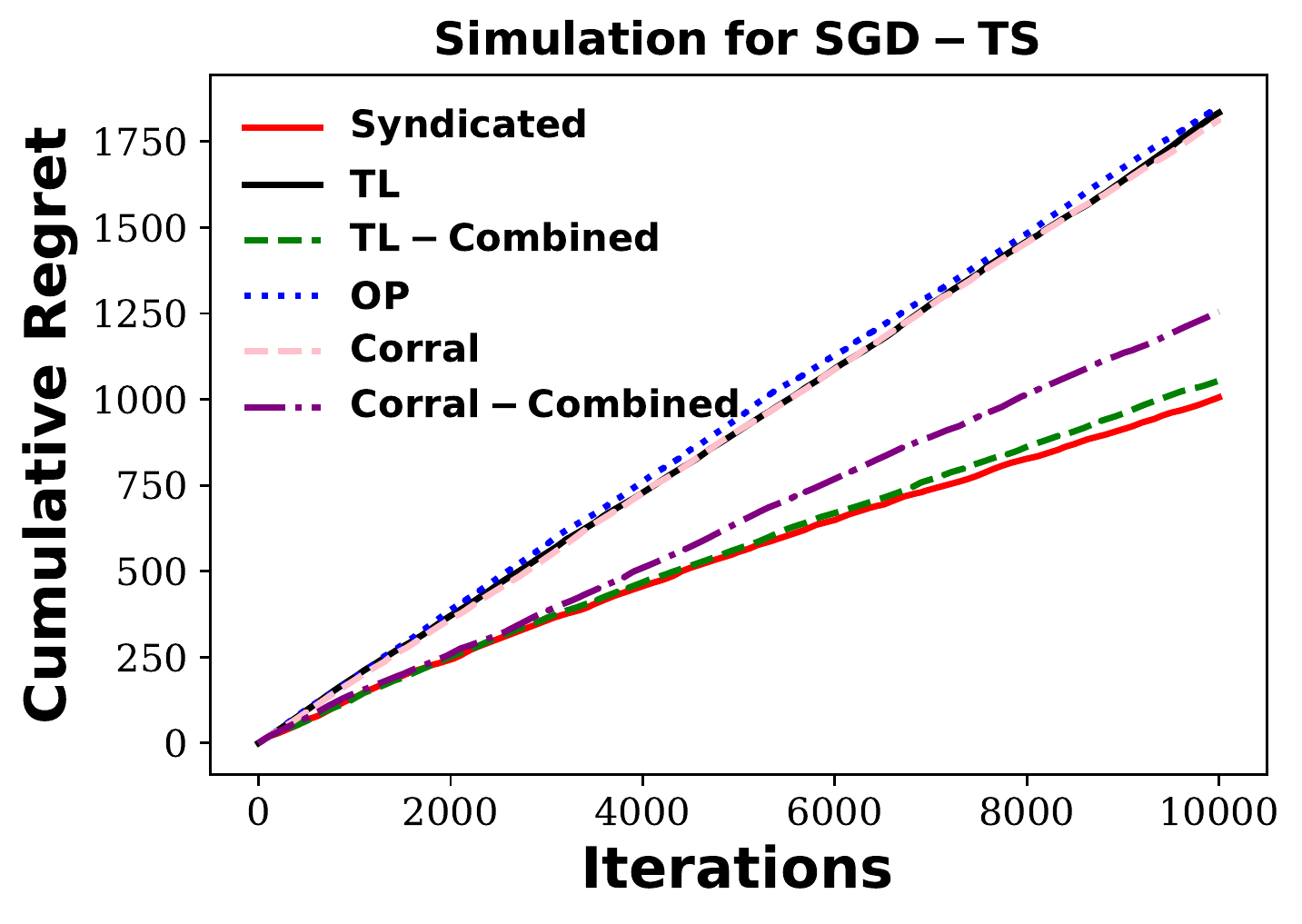}\includegraphics[width=0.5\textwidth]{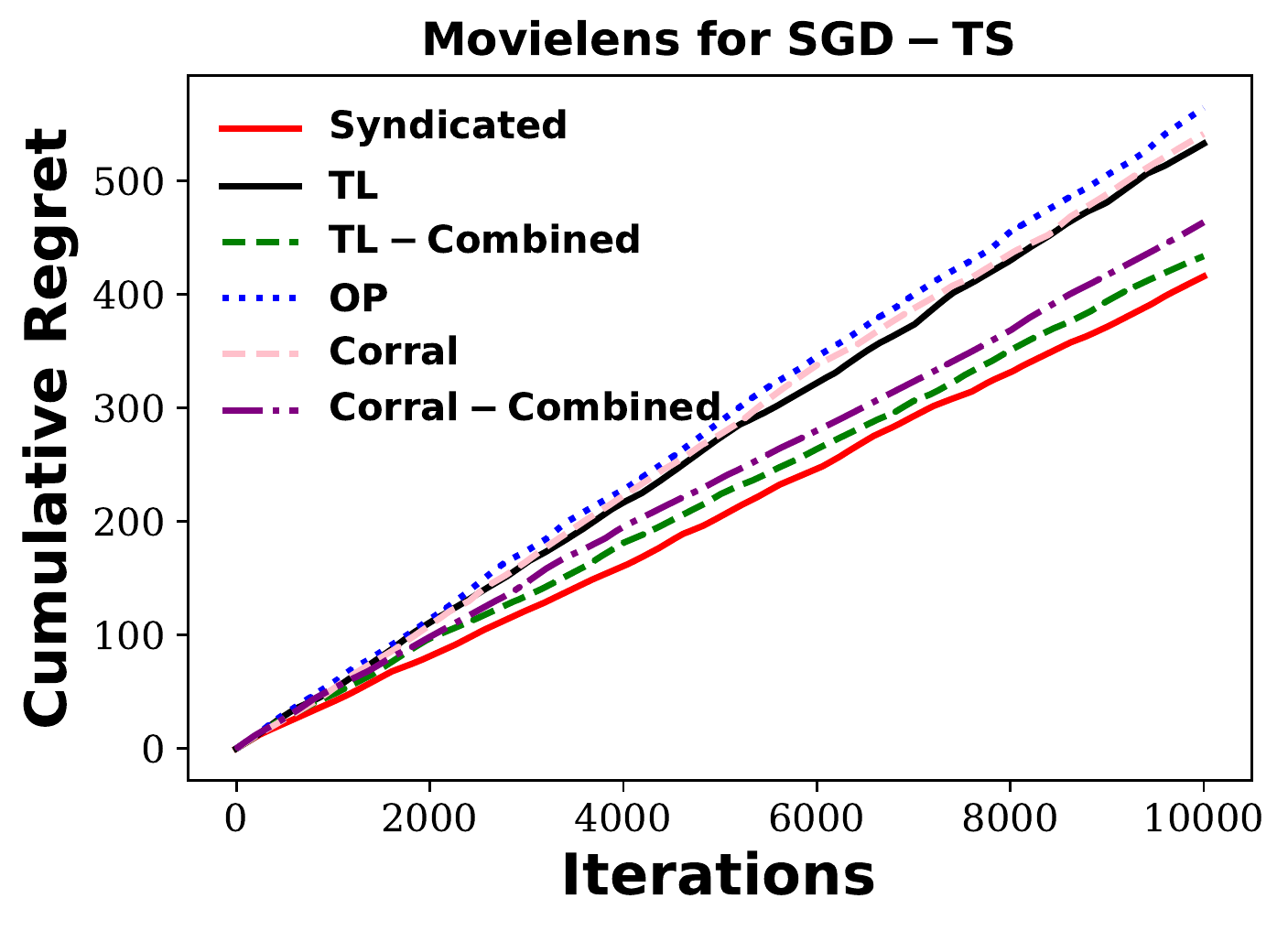}
    \caption{Comparison of hyper-parameters selection methods in SGD-TS.}
    \label{plots_sgdts}
\end{figure}


\end{document}